\documentclass[conference]{IEEEtran}
\renewcommand{\IEEEtitletopspaceextra}{0pt}
\IEEEoverridecommandlockouts
\usepackage{cite}
\usepackage[table]{xcolor}
\usepackage{colortbl}
\usepackage{amsmath,amssymb,amsfonts}
\usepackage{algorithm}
\usepackage{algorithmic}
\usepackage{graphicx}
\usepackage{amsthm}
\newtheorem{thm}{Theorem}

\newtheorem{lemma}[thm]{Lemma}
\newtheorem{prop}{Proposition}

\usepackage{textcomp}
\usepackage{multirow}
\usepackage[utf8]{inputenc} 
\usepackage[T1]{fontenc}    
\usepackage{hyperref}       
\usepackage{url}            
\usepackage{booktabs}       
\usepackage{amsfonts}       
\usepackage{nicefrac}       
\usepackage{microtype}      

\usepackage[utf8]{inputenc} 
\usepackage[T1]{fontenc}    
\usepackage{hyperref}       
\usepackage{url}            
\usepackage{booktabs}       
\usepackage{amsfonts}       
\usepackage{nicefrac}       
\usepackage{microtype}      
\usepackage{algorithm}
\usepackage{tikz}
\usetikzlibrary{arrows.meta, calc, decorations.pathreplacing, positioning, shapes.geometric}
\usepackage{subcaption}
\usepackage{float}
\usepackage{svg}
\usepackage{lineno}
\usepackage{cuted}
\usepackage{caption}
\def\BibTeX{{\rm B\kern-.05em{\sc i\kern-.025em b}\kern-.08em
    T\kern-.1667em\lower.7ex\hbox{E}\kern-.125emX}}

\begin{document}

\title{Model-Based Geometry-Aware Generative Optimization for Constrained Locomotion Planning}

\author{
\IEEEauthorblockN{
Zhilin He\textsuperscript{1},
Xinyuan Wang\textsuperscript{2},
Changliu Liu\textsuperscript{1}
}
\IEEEauthorblockA{
\textsuperscript{1}Carnegie Mellon University,
\textsuperscript{2}University of Michigan, Ann Arbor
}
\IEEEauthorblockA{
\texttt{\{hectorh, cliu6\}@andrew.cmu.edu},
\texttt{\{wxinyuan\}@umich.edu}
}
}
\begingroup
\IEEEaftertitletext{\begin{minipage}{\textwidth}
    \centering
    
    \includegraphics[width=\linewidth]{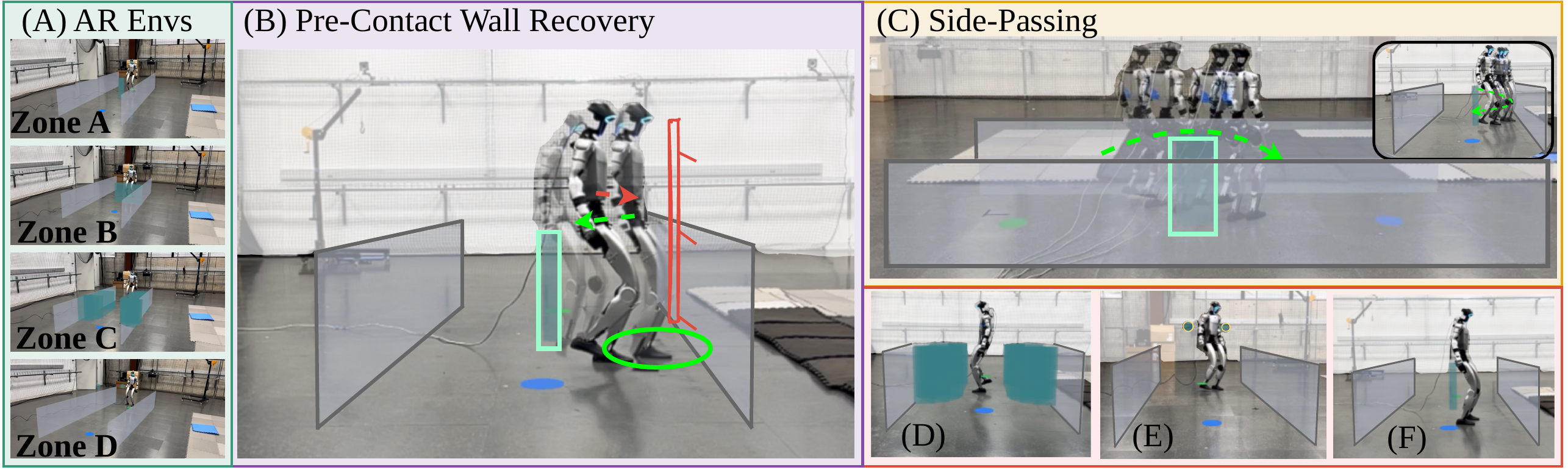}
    \captionof{figure}{\textbf{Real-world deployment of 2GO for constrained humanoid locomotion.}
    \textbf{(A)} Four Augmented Reality (AR)-rendered constrained environments (Zone A-D) for real-world humanoid locomotion evaluation. The robot is AR obstacle-aware, although the AR overlay does not show full depth rendering or realistic rigid-body contact.
    \textbf{(B)} Pre-contact recovery as the humanoid redirects away from an impending wall collision during online accelerated motion.
    \textbf{(C)} Multimodal side-passing around an obstacle.
    \textbf{(D)} Traversing a narrow corridor.
    \textbf{(E)} Passing between suspended ball obstacles.
    \textbf{(F)} Standing stably at the final target.}
    \label{fig:2go_deploy}
\end{minipage}
}
\maketitle

\bstctlcite{twogo_bib_format}
\setlength{\abovedisplayskip}{1.5pt plus 1pt minus 0.5pt}
\setlength{\belowdisplayskip}{1.5pt plus 1pt minus 0.5pt}
\setlength{\abovedisplayshortskip}{1pt plus 0.5pt}
\setlength{\belowdisplayshortskip}{1pt plus 0.5pt minus 0.5pt}
\setlength{\jot}{1.5pt}
\setlength{\IEEEiedtopsep}{0pt}
\setlength{\textfloatsep}{8pt plus 2pt minus 2pt}
\setlength{\dbltextfloatsep}{12pt plus 2pt minus 1pt}
\setlength{\floatsep}{6pt plus 2pt minus 2pt}
\setlength{\dblfloatsep}{6pt plus 2pt minus 2pt}
\setlength{\intextsep}{6pt plus 2pt minus 2pt}

\begin{abstract}
Constrained Locomotion Planning (CLP) for quadrupeds and humanoids, where robots must satisfy collision avoidance, contact consistency, kinematic feasibility, and support constraints, is challenging under high-dimensional dynamics and highly non-convex environments. Recent Model-Based Diffusion (MBD) approaches recast trajectory optimization as posterior sampling over trajectories, using known dynamics and Monte Carlo rollouts to analytically estimate the denoising score function without demonstration learning. While constrained variants further incorporate feasibility into model-based score rollouts and show promising performance, they are still limited by (1) lacking a task-modulated active constraint geometry that shapes the score direction and reverse stochasticity, and (2) using deterministic DDPM-style reverse transport without adaptive scheduling across different generative transports. Therefore, we introduce Model-Based Geometry-Aware Generative Optimization (2GO) for constrained locomotion, which turns active constraint geometry into executable denoising operators through normal- induced metric shaping, tangent-space stochastic filtering, and CFS-based retraction. 2GO further decouples generative transport from reverse stochasticity through an adaptive diffusion and flow-like schedule. Experiments on constrained quadruped and humanoid locomotion demonstrate strong performance in discrete foothold selection and continuous posture planning, with higher success rates, fewer violations, and improved execution compatibility.
Code: \href{https://github.com/hhhhzl/genedynamic}{\texttt{https://github.com/hhhhzl/genedynamics}}
\end{abstract}

\vspace{-0.2cm}
\section{Introduction}
\vspace{-0.2cm}

Consider a quadruped crossing sparse stepping stones or a humanoid traversing a narrow obstacle field: the planner must decide a sequence of body motions, foothold or contact choices, and posture references while satisfying collision, kinematic, support, and actuation constraints~\cite{posa2014direct, wensing2023optimization, gu2026humanoid}. The challenge is not merely to find a collision-free path, but to produce a dynamically executable reference that remains consistent with contact transitions, hybrid dynamics, and the tracking limitations of the low-level controller~\cite{kim2019highly,sleiman2021unified}.

Recent approaches recast locomotion planning as probabilistic inference, using score-based generative models to sample and denoise trajectory posteriors into clean, multimodal behavior trajectories~\cite{huang2025flexible, ye2025pegasusflow, huang2024diffuseloco}. Compared with conventional trajectory optimization and Model Predictive Control (MPC) methods, some of which rely on local linearization, prescribed contact schedules, or good initialization and may struggle with multimodal contact choices and non-smooth feasibility boundaries in high-dimensional locomotion~\cite{mayne2000constrained, kim2025contact, dhedin2025simultaneous}, generative approaches can flexibly synthesize trajectories, condition on goals or rewards, and represent high-dimensional multimodal action distributions. However, learned generative policies often rely heavily on demonstrations, and their learned priors alone do not ensure consistency with robot dynamics, environment geometry, or imposed safety constraints.

Model-based approaches address this limitation by leveraging Monte Carlo rollouts under known dynamics to estimate the denoising score function, thereby eliminating the need for demonstration data~\cite{pan2024model,xue2025full}. Recent constrained variants and related diffusion planners further incorporate feasibility into the denoising process through Emerging Barrier (EB) or inexact Augmented Lagrangian Method (iALM) priors, Control Barrier Function (CBF)-constrained projections, and adaptive scheduling of feasibility enforcement~\cite{mishra2025eb,fishman2023diffusion,he2026model,he2026motion}. Although these methods move beyond post-hoc projection, existing methods remain limited in (\textbf{1}) explicitly using task-modulated active constraint geometry to shape both denoising refinement and reverse stochasticity, and (\textbf{2}) coordinating deterministic reverse transport with separately controlled stochastic exploration under a unified adaptive schedule. Here, reverse stochasticity refers to perturbations added to the reverse update, distinct from randomness in candidate sampling. Consequently, task-specific feasible geometry remains underused in directing multimodal exploration and balancing it with deterministic refinement as the local constraint geometry becomes better resolved.

We propose a Model-Based \textbf{\underline{G}}eometry-aware \textbf{\underline{G}}enerative \textbf{\underline{O}}ptimization (\textbf{2GO}) framework (Fig.~\ref{fig:2go_overview}) for Constrained Locomotion Planning (CLP), which builds an active constraint geometry from violation proxies and task context to shape model-based denoising refinement through a metric and stochastic exploration through a regularized tangent filter, complemented by optional Convex Feasible Set (CFS)-based correction~\cite{liu2018convex}. 2GO further extends the deterministic Denoising Diffusion Probabilistic Models (DDPM)~\cite{ho2020denoising}-style reverse transport used in MBD with separately controlled reverse stochasticity, where task-modulated scheduling balances stochastic exploration and deterministic refinement towards a clean trajectory. Our \textbf{C}ontributions are two-fold:
\begin{itemize}
    \item \textbf{C1:} We develop a geometry-aware reverse process with active-set selection, normal-induced metric shaping, and regularized tangent filtering, explicitly shaping both denoising refinement and stochastic exploration, with CFS-based correction for residual constraint violations.
    \item \textbf{C2:} We formulate a model-based generative reverse update that decouples the transport family from reverse stochasticity, accommodating DDPM, Denoising Diffusion Implicit Models (DDIM)~\cite{song2020denoising}, and flow-like transports under adaptive task-geometry scheduling.
\end{itemize}
\begin{figure*}[!t]
    \centering
    \includegraphics[width=\linewidth,trim=35bp 5bp 0bp 0bp,clip]{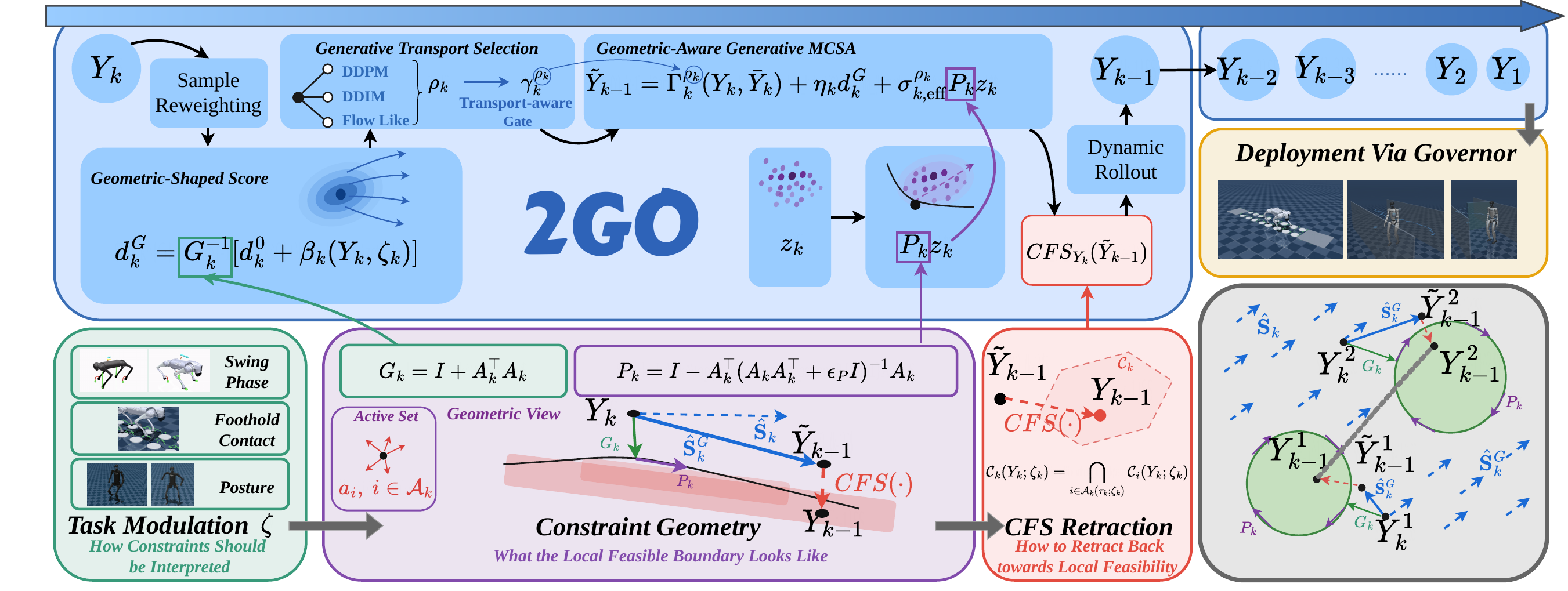}
    \caption{\textbf{Overview of the 2GO framework.}
    At each reverse step, 2GO reweights model-based trajectory samples and
    selects a DDPM-, DDIM-, or flow-like transport. Task-modulated active
    constraints shape the score through $G^{-1}_k$ and filter stochasticity through
    $P_k$, with optional CFS correction; a reference governor converts the clean trajectory into executable commands. In a two-time-step foothold contact example (bottom right gray panel), the score $\hat{\mathbf S}_{k}$ guides two unconstrained foothold points, $Y_{k}^{1}$ and $Y_{k}^{2}$, which are reshaped into $\tilde{Y}_{k-1}^{1}$ and $\tilde{Y}_{k-1}^{2}$ by the $G^{-1}_k$-shaped $\hat{\mathbf S}_{k}^{G}$ and $P_k$-filtered exploration, then retracted into the support regions via CFS. The final swing trajectory is shown as a gray line.}
    \label{fig:2go_overview}
    \vspace{-0.3cm}
\end{figure*}

\vspace{-0.2cm}
\section{Related Work}
\vspace{-0.2cm}
\subsection{Score-Based Generative Models for Robot Motion Planning}
Existing approaches on score-based generative models for robot motion planning aim to obtain a feasible trajectory by learning the score function from demonstration data and then sampling from a posterior over trajectory distributions~\cite{janner2022planning, chi2025diffusion, carvalho2023motion}. Current works further adapt to constrained environments through weighted penalty, or augmented Lagrangian methods~\cite{zhang2025constrained,liang2024multi}, and projection to safe sets within the generative reverse process~\cite{christopher2024constrained,xiao2023safediffuser,liang2025simultaneous}, and real-time control for deployment~\cite{romer2024diffusion, ye2025pegasusflow, huang2025flexible}. While such methods improve safety, they still rely on large-scale demonstrations and do not rigorously enforce dynamics.

Model-Based Diffusion~\cite{pan2024model} recovers trajectories through model rollouts to enforce dynamics, driving the reversing trajectories toward low-cost solutions without demonstrations. More recent works ensure safety through Control Barrier Function (CBF)-based or CFS-based projections within the model-based rollouts~\cite{mishra2025eb,fishman2023diffusion,cheng2025safe, he2026model}. However, those methods use projection to satisfy constraints, but do not expose active constraint geometry to the score direction and reverse stochasticity.

\vspace{-0.2cm}
\subsection{Optimization-Based and Geometry-Aware Planning}
\vspace{-0.1cm}
Optimization-based locomotion methods formulate locomotion as optimal control problems and solve them with contact-implicit MPC, whole-body models, or sampling-based MPC for high-dimensional quadrupeds and humanoids~\cite{kim2025contact, dhedin2025simultaneous, xue2025full,li2025continual}. In parallel, geometry-aware motion generation methods such as covariant trajectory optimization~\cite{ratliff2009chomp}, Riemannian Motion Policies~\cite{cheng2021rmpflow}, and geometric fabrics use task-space metrics, natural-gradient-like updates, or manifold structures to shape robot trajectories and motion policies based on obstacles, tasks, and environments~\cite{van2022geometric, kyaw2026geometry, bakker2025tamedpuma}.

 While these methods exploit models and geometric structure for trajectory optimization and motion-policy generation, we apply geometry-aware shaping to denoising refinement and reverse stochasticity within a model-based generative planner.
\vspace{-0.3cm}
\section{Preliminaries}
\label{sec:preliminary}
\vspace{-0.2cm}
\subsection{Problem Statement}
\label{subsec:problem_setup}

Consider a robot  with state $s^t\in\mathcal S$ and control
$u^t\in\mathcal U$ at time $t$.
A finite-horizon trajectory is denoted by
\begin{math}
    \tau=(s^{1:T},u^{1:T-1}).
\end{math}
The constrained trajectory optimization problem is
{%
\setlength{\abovedisplayskip}{2.5pt}
\setlength{\belowdisplayskip}{2.5pt}
\setlength{\abovedisplayshortskip}{1.5pt}
\setlength{\belowdisplayshortskip}{1.5pt}
\begin{equation}
\begin{aligned}
    &\min_{\tau_r}
    J(\tau,c) = \ell_T(s^T,c) + \sum_{t=1}^{T-1} \ell^t(s^t,u^t,c), \\
    &\mathrm{s.t.} \ s^{t+1}=f(s^t,u^t), \ g^t(s^t,u^t,c)\le 0,
\end{aligned}
\label{eq:trajopt}
\end{equation}
}where $\ell_T$ and $\ell^t$ encode terminal and intermediate task objectives; $f(\cdot)$ denotes dynamics; $g^t$ denotes the inequality constraints; $c$ includes task information such as the initial state, goal condition, environment geometry, dynamics, local map, contact hints, or phase information. A model-based generative view converts Eq.~\eqref{eq:trajopt} into a target distribution~\cite{peters2010relative,levine2018reinforcement}, 
{%
\setlength{\abovedisplayskip}{2.5pt}
\setlength{\belowdisplayskip}{2.5pt}
\setlength{\abovedisplayshortskip}{1.5pt}
\setlength{\belowdisplayshortskip}{1.5pt}
\begin{equation}
    p(\tau\mid c)
    \propto
    \exp\left(-\frac{1}{\lambda}J(\tau,c)\right)
    \mathbf 1_{\mathcal F(c)}(\tau),
\label{eq:inference}
\end{equation}
}where $\mathbf 1_{\mathcal F(c)}(\tau)$ is the indicator of the feasible trajectory set
\begin{equation}
\mathcal F(c) = \left\{
\tau:
s^{t+1}=f(s^t,u^t),\;
g^t(s^t,u^t,c)\le0
\right\}.
\end{equation}
and lower-cost feasible trajectories receive higher probability. The distribution Eq.~\eqref{eq:inference} turns constrained trajectory optimization into sampling from low-cost feasible trajectories, approximating by denoising score from dynamics rollouts within model-based generative sampling.

\vspace{-0.2cm}
\subsection{Model-Based Generative Trajectory Sampling}
\label{subsec:mb_generative_sampling}
\vspace{-0.2cm}
Throughout, $t$ indexes physical time, $k$ indexes reverse refinement, and
$m$ indexes sampled candidates. The generative process starts from approximately
Gaussian noise $U_K$ and iteratively refines it toward clean controls $U_0$.
We use clean coordinates $Y_k=U_k/\sqrt{\bar\alpha_k}$.
A Gaussian forward process maps clean controls $U_0$ to noisy controls:
\begin{math}
U_k
=
\sqrt{\bar\alpha_k}U_0
+
\sqrt{1-\bar\alpha_k}\epsilon,
\label{eq:forward_reparam_2go}
\end{math}
where $\epsilon\sim\mathcal N(0,I)$, $\bar\alpha_k:=\prod_{j=1}^{k}\alpha_j$,
$\bar\alpha_0=1$, and $\alpha_k\in(0,1)$ for $k\geq1$.
The forward process induces the noisy marginal
{%
\setlength{\abovedisplayskip}{1.5pt}
\setlength{\belowdisplayskip}{1.5pt}
\setlength{\abovedisplayshortskip}{1pt}
\setlength{\belowdisplayshortskip}{1pt}
\begin{equation}
p_k(U_k)
:=
\int q(U_k\mid U_0)\,p_0(U_0)\,dU_0 .
\label{eq:pk_def_2go}
\end{equation}
}The reverse generative process is driven by the score function of the noisy marginal \begin{math}
\mathbf S_k(U_k, c) = \nabla_{U_k}\log p_k(U_k). \label{eq:score_def_2go} \end{math}

Unlike DDPM, which learns a score estimator from data~\cite{ho2020denoising},
MBD obtains $\hat{\mathbf S}_k$ from model evaluations: it samples controls,
evaluates their rollouts, and forms a weighted clean estimate. The Gaussian
noising identity then converts this estimate into a score estimate.
Given $U_k$, the forward kernel induces the clean-coordinate proposal
{%
\setlength{\abovedisplayskip}{1.5pt}
\setlength{\belowdisplayskip}{1.5pt}
\setlength{\abovedisplayshortskip}{1pt}
\setlength{\belowdisplayshortskip}{1pt}
\begin{equation}
\tilde Y \sim \Omega_k
:= \mathcal{N}\!\left(
\frac{U_k}{\sqrt{\bar{\alpha}_k}},
\left(\frac{1}{\bar{\alpha}_k} - 1\right)\mathbf{I}
\right),
\label{eq:clean_proposal_2go}
\end{equation}
}

We draw a batch of candidates
\begin{math}
\{\tilde Y_{k,m}\}_{m=1}^{M},
\end{math}
\begin{math}
\tilde Y_{k,m}\sim \Omega_k ,
\label{eq:batch_2go}
\end{math}
evaluate rollouts $\tilde\tau_{k,m}=\mathcal R_f(\tilde Y_{k,m};s_0)$ under $p_0$, and form
the importance-weighted clean estimate
{%
\setlength{\abovedisplayskip}{1.5pt}
\setlength{\belowdisplayskip}{1.5pt}
\setlength{\abovedisplayshortskip}{1pt}
\setlength{\belowdisplayshortskip}{1pt}
\begin{equation}
w_m=\frac{p_0(\tilde Y_{k,m})}{\sum_jp_0(\tilde Y_{k,j})},
\quad\bar Y_k=\sum\nolimits_mw_m\tilde Y_{k,m}.
\label{eq:mc_avg_2go}
\end{equation}
}This weighted mean supplies the denoising reference in the Monte Carlo
score ascent (MCSA) estimator
{%
\setlength{\abovedisplayskip}{1.5pt}
\setlength{\belowdisplayskip}{1.5pt}
\setlength{\abovedisplayshortskip}{1pt}
\setlength{\belowdisplayshortskip}{1pt}
\begin{equation}
\hat{\mathbf{S}}_k
\approx
-\frac{U_k}{1-\bar{\alpha}_k}
+\frac{\sqrt{\bar{\alpha}_k}}{1-\bar{\alpha}_k}\,
\bar Y_k.
\label{eq:mc_score_2go}
\end{equation}
}

\textbf{DDPM.}
Given the model-based score estimate $\hat{\mathbf S}_k$, a DDPM-style score-form reverse update is
\begin{equation}
U_{k-1}
=
\frac{1}{\sqrt{\alpha_k}}
\left(
U_k
+
(1-\bar\alpha_k)\hat{\mathbf S}_k
\right)
+
\sigma_k z_k,
\
z_k\sim\mathcal N(0,\mathbf I).
\label{eq:ddpm_reverse_2go}
\end{equation}
Here $\sigma_k$ controls the reverse stochasticity.

\textbf{DDIM.}
DDIM~\cite{song2020denoising} combines the clean estimate with the inferred
noise direction. With $c_k=\sqrt{1-\bar\alpha_{k-1}-\sigma_k^2}$,
\begin{equation}
U_{k-1}=\sqrt{\bar\alpha_{k-1}}\bar Y_k
-c_k\sqrt{1-\bar\alpha_k}\hat{\mathbf S}_k+\sigma_kz_k.
\label{eq:ddim_update_2go}
\end{equation}
Here $0\leq\sigma_k^2\leq1-\bar\alpha_{k-1}$; $\sigma_k=0$ yields
a deterministic update that retains the inferred residual direction.

\textbf{Probability flow.}
For the variance-preserving schedule, the probability-flow velocity is
$\hat v_k=-\frac12\beta(h_k)[U_k+\hat{\mathbf S}_k]$, where
$\beta(t)=-\frac{d}{dt}\log\bar\alpha(t)$ and $t$ here denotes continuous
diffusion time~\cite{song2021score}.
Flow-like (FL) option integrates this velocity backward:
{%
\setlength{\abovedisplayskip}{1.5pt}
\setlength{\belowdisplayskip}{1.5pt}
\setlength{\abovedisplayshortskip}{1pt}
\setlength{\belowdisplayshortskip}{1pt}
\begin{equation}
U_{k-1}\approx U_k+\frac{\Delta B_k}{2}
\left[U_k+\hat{\mathbf S}_k\right],
\label{eq:fm_update_expanded_2go}
\end{equation}
}where $\Delta B_k=\int_{t_{k-1}}^{t_k}\beta(t)dt
=\log(\bar\alpha_{k-1}/\bar\alpha_k)>0$.
This map integrates a local velocity, whereas DDIM reconstructs the next
endpoint from clean and noise estimates.

MBD~\cite{pan2024model} and its variants~\cite{mishra2025eb, he2026model, he2026motion} fix $\sigma_k=0$ and adopt a deterministic DDPM-style reverse update. Therefore, we aim to construct an adaptive scheduling mechanism that selects among DDPM-like Eq.~\eqref{eq:ddpm_reverse_2go}, DDIM-like Eq.~\eqref{eq:ddim_update_2go}, and flow-like Eq.~\eqref{eq:fm_update_expanded_2go} reverse transports, while allowing adaptive $\sigma_k>0$ as exploration perturbation when needed. 
DDPM-style steps provide robust stochastic exploration but higher variance, DDIM-style steps offer schedule-consistent low-variance refinement but may commit early to one mode, and flow-like Euler steps are smooth and inexpensive but more sensitive to step size and score error.

\vspace{-0.3cm}
\section{Methodology}
\label{sec:method}
\vspace{-0.3cm}

We now present our Model-Based \textbf{\underline{G}}eometry-aware \textbf{\underline{G}}enerative \textbf{\underline{O}}ptimization (\textbf{2GO}) (Fig.~\ref{fig:2go_overview}), which
(\textbf{1}) turns task-modulated active constraint geometry into score-direction shaping,
tangent stochasticity, and CFS-based retraction, (\textbf{2}) adapts the reverse
transport and stochasticity schedule in Section~\ref{subsec:mcsa_geometry_tau},
and (\textbf{3}) connects the clean plan to the tracking policy through the
governor in Section~\ref{subsec:real_deployment_governor}.
\vspace{-0.2cm}
\subsection{Task-modulated Active Constraint Geometry}
\label{subsec:constraint_task_geometry}
\vspace{-0.2cm}
\textbf{Task Modulation.}
At diffusion step $k$, task modulation emphasizes constraints according to
phase or contact structure: support-foot constraints during stance, for
example, and target-patch constraints near foot placement.
We encode this through task-modulated weights
\begin{math}
    \omega_i(k;\zeta)\ge 0,
\label{eq:task_weight_tau}
\end{math}
where $\zeta$ contains phase, contact mode, target patch,
or local reference information.

\textbf{Constraint Geometry.}
For each inequality constraint $g_i(\tau_k,c)\le 0$, we define the
task-weighted violation proxy
{%
\setlength{\abovedisplayskip}{1.5pt}
\setlength{\belowdisplayskip}{1.5pt}
\setlength{\abovedisplayshortskip}{1pt}
\setlength{\belowdisplayshortskip}{1pt}
\begin{equation}
    v_i(\tau_k,c,k;\zeta)
    =
    \varphi(g_i(\tau_k,c))\,\omega_i(k;\zeta),
\label{eq:violation_proxy_tau}
\end{equation}
}where $\varphi(g)=\max\{g,0\}$ assigns zero to a satisfied constraint
and retains the positive residual of a violated constraint.
The active set at diffusion step $k$ is selected as
{%
\setlength{\abovedisplayskip}{1.5pt}
\setlength{\belowdisplayskip}{1.5pt}
\setlength{\abovedisplayshortskip}{1pt}
\setlength{\belowdisplayshortskip}{1pt}
\begin{equation}
    \mathcal{A}_k(\tau_k;\zeta)
    =
    \mathrm{TopK}_{K_A}\left(
        \{v_i(\tau_k,c,k;\zeta)\}_{i=1}^{N_c}
    \right).
\label{eq:active_set_tau}
\end{equation}
}Here, $N_c$ is the number of constraints, and $\mathrm{TopK}$
returns constraint indices. $K_A$ is the maximum number
of constraints retained, and $K_{a,k}=|\mathcal A_k|\leq K_A$
is the number selected.

For each selected constraint $i\in\mathcal A_k$, define the
exact normal in the $d$-dimensional free-control coordinates as
{%
\setlength{\abovedisplayskip}{1.5pt}
\setlength{\belowdisplayskip}{1.5pt}
\setlength{\abovedisplayshortskip}{1pt}
\setlength{\belowdisplayshortskip}{1pt}
\begin{equation}
    a_i(Y_k)
    =
    D_Y\mathcal R_f(Y_k;s_0)^\top
    \nabla_\tau g_i(\tau_k,c),
\label{eq:active_normal_tau}
\end{equation}
}where $\tau_k=\mathcal R_f(Y_k;s_0)$ rolls out from fixed $s_0$ and
$\widetilde g_i(Y,c)=g_i(\mathcal R_f(Y;s_0),c)$.
Experiments use aggregated one-step action-space approximations (automatic
differentiation and kinematic mappings; Appendix~\ref{app:geometry_proxy});
the follower is not differentiated.
We normalize the normal as $\hat a_i=a_i/(\|a_i\|+\varrho)$,
where $\varrho>0$ avoids division by small gradient norms.
Stacking these normalized normals as rows gives
\begin{equation}
    A_k=\bigl[\hat a_i(Y_k)^\top\bigr]_{i\in\mathcal A_k}
    \in\mathbb R^{K_{a,k}\times d}.
\label{eq:A_tau}
\end{equation}
For example, if three constraints have scores $(0.1,0.6,0.3)$ and $K_A=2$,
then $\mathcal A_k=\{2,3\}$. The matrix $A_k$ therefore contains the two
rows $\hat a_2^\top$ and $\hat a_3^\top$.

These constraint normals define the local metric
\begin{equation}
    G_k
    =
    I
    +
    A_k^\top A_k
    =
    I
    +
    \sum\nolimits_{i\in\mathcal{A}_k}
    \hat a_i\hat a_i^\top,
\label{eq:metric_tau}
\end{equation}
whose normal-gradient shaping is related to trajectory optimization and
geometric motion policies~\cite{ratliff2009chomp,cheng2021rmpflow,van2022geometric}.
Task weights affect $G_k$ through active-set selection; each selected
normalized normal enters the metric with unit weight.

The inverse metric $G_k^{-1}$ attenuates constraint-normal components of the
denoising direction. For random exploration, we use a regularized tangent filter
\begin{equation}
    P_k
    =
    I
    -
    A_k^\top
    \left(
        A_kA_k^\top+\epsilon_P I
    \right)^{-1}
    A_k,
\label{eq:tangent_projector_tau}
\end{equation}
where $\epsilon_P>0$ regularizes a linear system whose dimension
$K_{a,k}$ equals the number of selected constraints. Woodbury's identity
evaluates the metric action through a linear system of the same dimension:
\begin{equation}
    G_k^{-1}q
    =
    q-A_k^\top(I+A_kA_k^\top)^{-1}A_kq.
\label{eq:woodbury_metric}
\end{equation}
For a singular value $s$ of $A_k$, the metric and tangent gains are
$1/(1+s^2)$ and $\epsilon_P/(\epsilon_P+s^2)$, respectively.
Both preserve $\ker(A_k)$: with exact normals, these directions leave
selected constraint values unchanged to first order.

\vspace{-0.1cm}
\subsection{Geometry-Aware Generative Reverse Update}
\label{subsec:mcsa_geometry_tau}

\textbf{Geometry-shaped Direction.}
The MCSA clean estimate $\bar Y_k\equiv\hat Y_{0\mid k}$ in
Eq.~\eqref{eq:mc_score_2go} defines a displacement from the current iterate,
expressed in clean coordinates $Y_k=U_k/\sqrt{\bar\alpha_k}$:
\begin{equation}
    d_k^0
    =
    \kappa_k(\bar Y_k-Y_k)
    \approx
    \kappa_k\frac{1-\bar\alpha_k}{\sqrt{\bar\alpha_k}}\hat{\mathbf S}_k.
\label{eq:base_reverse_direction_tau}
\end{equation}
Here $\kappa_k$ scales this displacement, and geometry and goal derivatives
are evaluated at $Y_k$ through the rollout. For goal guidance, let
$e_k=r_{\mathrm{goal}}-\psi_g(Y_k)$ for a goal map $\psi_g$.
The transposed Jacobian maps the normalized goal error back to the planning
coordinates:
\begin{equation}
    \beta_k(Y_k,\zeta_k)
    =
    \lambda_g(1-h_k)
    D\psi_g(Y_k)^\top\frac{e_k}{\|e_k\|+\epsilon_d},
\label{eq:task_dir}
\end{equation}
where $\lambda_g$ is the gain and $\epsilon_d>0$ regularizes normalization.
For scheduled feasibility penalty $\mu_k$ and reference penalty $\mu_{\mathrm{ref}}>0$, define
$h_k=\operatorname{clip}(\log(1+\mu_k)/\log(1+\mu_{\mathrm{ref}}),0,1)$.
This normalized level reduces goal guidance as feasibility emphasis rises;
$\kappa_k=\kappa_0+\kappa_\mu\mu_k$ sets the denoising gain.
Applying the inverse metric to the combined displacement and goal drift gives
\begin{equation}
    d_k^G
    =
    G_k^{-1}
    \left[
        d_k^0+\beta_k(Y_k,\zeta_k)
    \right].
\label{eq:geometry_score_tau}
\end{equation}
Here, $d_k^G$ denotes a shaped displacement. Its equivalent score and
clean estimates formulate
\begin{equation}
\hat{\mathbf S}_k^G=\frac{\sqrt{\bar\alpha_k}}{1-\bar\alpha_k}d_k^G,
\quad \hat Y^G_{0\mid k}=Y_k+d_k^G.
\label{eq:geometry_estimates_tau}
\end{equation}
The corresponding noise estimate is
$\hat\epsilon_k^G=-\sqrt{1-\bar\alpha_k}\hat{\mathbf S}_k^G$.
These representations connect geometry shaping to MCSA; the reverse update
below applies $d_k^G$ as an additive refinement of the base transport.

\textbf{Reverse Update and Stochasticity Scheduling.}
We select the transport and its added exploration separately. Let
\begin{math}
    \rho_k\in\{\mathrm{DDPM},\mathrm{DDIM},\mathrm{FL}\}
\end{math}
denote the transport family at reverse step $k$.
Candidate diversity is measured by a dispersion proxy
\begin{math}
    \chi_k(\tau_k,c,\zeta_k)\in[0,1],
\label{eq:dispersion_proxy_k}
\end{math}
computed here as the exponentially smoothed largest normalized adjacent gap after projecting
candidate control segments onto their leading principal direction.
To also capture residual feasibility risk, define
{%
\setlength{\abovedisplayskip}{1.5pt}
\setlength{\belowdisplayskip}{1.5pt}
\setlength{\abovedisplayshortskip}{1pt}
\setlength{\belowdisplayshortskip}{1pt}
\begin{equation}
    \nu_k(\tau_k,c,\zeta_k)
    =
    \mathrm{CVaR}_{\alpha}
    \left(
        \{V(\tilde\tau_{k,m},c)\}_{m=1}^{M}
    \right).
\label{eq:risk_proxy_k}
\end{equation}
}where $V$ aggregates each candidate's constraint residuals. At tail level
$\alpha=0.9$, $\nu_k$ averages the largest 10\% of candidate residuals.
Risk and dispersion determine the transport:
\begin{equation}
\rho_k=\begin{cases}
\mathrm{DDPM},&\nu_k>\delta_k,\\
\mathrm{DDIM},&\nu_k\leq\delta_k,\ \chi_k>\theta_k,\\
\mathrm{FL},&\text{otherwise},
\end{cases}
\label{eq:transport_selection_main}
\end{equation}
where $\delta_k$ and $\theta_k$ are scheduled risk and dispersion thresholds.
High risk favors the clean-moment branch; resolved risk and route ambiguity
favor interpolation and then flow-like refinement. The separate noise gate is
\begin{equation}
    \gamma_k^{\rho_k}
    =
    \mathbb{I}
    \left[
        \chi_k
        >
        \bar\chi_k^{\rho_k}
        \;\lor\;
        \nu_k
        >
        \bar\nu_k^{\rho_k}
    \right],
\label{eq:rho_aware_gate}
\end{equation}
where $\bar\chi_k^{\rho_k}$ and $\bar\nu_k^{\rho_k}$ are
transport-dependent thresholds. The gate retains exploration when diversity
or residual risk exceeds its threshold. The effective stochasticity is
\begin{math}
    \sigma_{k,\mathrm{eff}}^{\rho_k}
    =
    \gamma_k^{\rho_k}\sigma_k^{\rho_k}.
\label{eq:rho_aware_sigma}
\end{math}
where \(\sigma_k^{\rho_k}\) is the nominal exploration scale of the selected
transport, expressed in clean coordinates.

We filter Gaussian noise $z_k\sim\mathcal N(0,I)$ through $P_k$ to
attenuate perturbations along the selected constraint normals.
Let $\Gamma_k^{\rho_k}$ denote the selected deterministic base transport
in clean coordinates:
\begin{equation}
\Gamma_k^{\rho_k}(Y_k,\bar Y_k)
=(1-r_k^{\rho_k})\bar Y_k+r_k^{\rho_k}Y_k.
\label{eq:clean_base_main}
\end{equation}
Here $r_k^{\rho_k}$ is the fraction of the current iterate retained.
The DDPM-style clean-mean update and deterministic DDIM update in
Eqs.~\eqref{eq:ddpm_reverse_2go} and \eqref{eq:ddim_update_2go} give $r_k^{\mathrm{DDPM}}=0$ and
\begin{equation}
r_k^{\mathrm{DDIM}}=r_k^{\mathrm{FL}}
=\sqrt{\frac{\bar\alpha_k(1-\bar\alpha_{k-1})}
{\bar\alpha_{k-1}(1-\bar\alpha_k)}}.
\label{eq:clean_interpolation_main}
\end{equation}
The reported FL branch shares the DDIM interpolation; the canonical
probability-flow map in Eq.~\eqref{eq:fm_update_expanded_2go} is not
separately evaluated. Exploration is added without changing these coefficients.
With scheduled step size $\eta_k$, the unified update is
\begin{equation}
    \tilde Y_{k-1}
    =
    \Gamma_k^{\rho_k}
    \bigl(
        Y_k,\bar Y_k
    \bigr)
    +\eta_k d_k^G
    +
    \sigma_{k,\mathrm{eff}}^{\rho_k}
    P_k z_k .
\label{eq:unified_reverse_update_k}
\end{equation}
The first term transports the iterate, the second applies geometry-shaped
refinement, and the third adds exploration filtered by $P_k$. Setting
$\sigma_{k,\mathrm{eff}}^{\rho_k}=0$ leaves the selected deterministic update.

\vspace{-0.2cm}

\subsection{CFS as Metric-Proximal Constraint Retraction}
\label{subsec:cfs_retraction_tau}

The geometry-shaped reverse update reduces motion along active normals, but the
transported proposal $\tilde Y_{k-1}$ can still leave the local feasible
geometry. We therefore use CFS~\cite{liu2018convex} as an approximate
metric-proximal retraction. CFS optionally corrects candidates before averaging~\cite{he2026motion}
or retracts the transported proposal using the
constraint geometry underlying \(G_k\) and \(P_k\), and corrects the controls parameterized by $Y$, after which the states
are recomputed by dynamics rollout from the fixed initial state.

For each active constraint
$i\in\mathcal{A}_k(\tau_k;\zeta_k)$, CFS builds a local half-space around
$Y_k$:
{%
\setlength{\abovedisplayskip}{2.5pt}
\setlength{\belowdisplayskip}{2.5pt}
\setlength{\abovedisplayshortskip}{1.5pt}
\setlength{\belowdisplayshortskip}{1.5pt}
\begin{equation}
\begin{aligned}
    \mathcal C_i(Y_k;\zeta_k)
    =
    \{Y:\;&
    \widetilde g_i(Y_k,c)
    +
    a_i(Y_k)^\top(Y-Y_k)
    \\
    &
    \le -b_i(k;\zeta_k)\},
\end{aligned}
\label{eq:cfs_halfspace_tau}
\end{equation}
}where
\begin{math}
a_i(Y_k)=\nabla_Y\widetilde g_i(Y_k,c)
\end{math}
and $b_i(k;\zeta_k)\ge0$ is the CFS clearance offset.
Post-transport CFS reuses derivatives at $Y_k$; candidate CFS recomputes them locally.
Intersecting these half-spaces with control bounds gives the local convex set:
{%
\setlength{\abovedisplayskip}{2.5pt}
\setlength{\belowdisplayskip}{2.5pt}
\setlength{\abovedisplayshortskip}{1.5pt}
\setlength{\belowdisplayshortskip}{1.5pt}
\begin{equation}
\begin{aligned}
    \mathcal C_k(Y_k;\zeta_k)
    =
    \mathcal U^{T-1}\cap\bigcap_{i\in\mathcal A_k(\tau_k;\zeta_k)}
    \mathcal C_i(Y_k;\zeta_k).
\end{aligned}
\label{eq:local_cfs_set_tau}
\end{equation}
}
Given $\tilde Y_{k-1}$ from \eqref{eq:unified_reverse_update_k}, the ideal CFS
retraction is
\begin{equation}
\begin{aligned}
\mathrm{CFS}_{Y_k}(\tilde Y_{k-1})
&=
\arg\min_{Y\in\mathcal C_k(Y_k;\zeta_k)}
\frac{1}{2}\|Y-\tilde Y_{k-1}\|_{R}^{2},
\end{aligned}
\label{eq:cfs_retraction_qp_tau}
\end{equation}
where \(R\succ0\) weights control coordinates.
The selected constraints align this local convex retraction with the
geometry used by the metric and tangent filter.
For locally conservative linearizations, \(\mathcal C_k\) provides a
local inner approximation for the selected constraints.
When retraction is enabled, the next reverse iterate is
\begin{math}
    Y_{k-1}
    =
    \mathrm{CFS}_{Y_k}(\tilde Y_{k-1}).
\label{eq:final_retracted_tau}
\end{math}

Thus, CFS acts within the generative process as an optional
constraint-geometric retraction, using the constraint geometry
that also shapes denoising refinement and tangent stochasticity.

\begin{algorithm}[tb!]
\caption{Model-Based Geometry-Aware Generative Optimization (2GO)}
\label{alg:twogo}
\small
\begin{algorithmic}[1]
\STATE \begingroup\parfillskip=0pt \textbf{Input:} $U_K\!\sim\!\mathcal N(0,I)$, dynamics $f(\cdot)$, constraints $g(\cdot)\!\le\!0$, task context $\{\zeta_k\}_{k=1}^{K}$, diffusion schedule $\{\alpha_k,\bar\alpha_k\}_{k=1}^{K}$, candidates $M$.\par\endgroup
\STATE \textbf{Output:} Clean trajectory $\tau_0$ for governed execution.
\STATE Initialize clean coordinates $Y_K=U_K/\sqrt{\bar\alpha_K}$.
\FOR{$k = K$ \textbf{to} $1$}
    \STATE Sample candidates $\{\tilde Y_{k,m}\}_{m=1}^{M}$ from $\Omega_k$ in \eqref{eq:clean_proposal_2go}.
    \STATE Evaluate candidates with $f(\cdot)$, and optionally apply CFS.
    \STATE Compute $\bar Y_k,\hat{\mathbf S}_k$ with original weights via \eqref{eq:mc_avg_2go}--\eqref{eq:mc_score_2go}.
    \STATE Compute $v_i$ and select $\mathcal A_k$ at the rollout of $Y_k$ via \eqref{eq:violation_proxy_tau}--\eqref{eq:active_set_tau}.
    \STATE Form $A_k$, $G_k$, and $P_k$ using \eqref{eq:A_tau}--\eqref{eq:tangent_projector_tau}.
    \STATE Compute the geometry-shaped direction $d_k^G$ via \eqref{eq:geometry_score_tau}.
    \STATE \begingroup\parfillskip=0pt Compute $\chi_k,\nu_k$, adapt $\rho_k$ online, and gate $\sigma_{k,\mathrm{eff}}^{\rho_k}$ via \eqref{eq:transport_selection_main}--\eqref{eq:rho_aware_gate}.\par\endgroup
    \STATE Apply transport-aware reverse update \eqref{eq:unified_reverse_update_k} with $z_k\sim\mathcal N(0,I)$: \\
    $\tilde Y_{k-1}\leftarrow
    \Gamma_k^{\rho_k}(Y_k,\bar Y_k)
    +\eta_k d_k^G+\sigma_{k,\mathrm{eff}}^{\rho_k}P_k z_k$.
    \STATE Optionally retract $\tilde Y_{k-1}$ via
    \eqref{eq:cfs_retraction_qp_tau}, bounded as $Y_{k-1}$.
\ENDFOR
\STATE Roll out $\tau_0=\mathcal R_f(Y_0;s_0)$, extract $r=\Pi_r(\tau_0)$, and pass it through the reference governor for execution.
\end{algorithmic}
\end{algorithm}

\vspace{-0.2cm}
\subsection{Putting Everything Together: 2GO}
\label{subsec:twogo_algorithm_tau}

2GO combines (\textbf{1}) the model-based MCSA sampler
(Section~\ref{subsec:mb_generative_sampling}), (\textbf{2}) the task-modulated
active constraint geometry (Section~\ref{subsec:constraint_task_geometry}),
(\textbf{3}) the geometry-shaped reverse update
(Section~\ref{subsec:mcsa_geometry_tau}), and (\textbf{4}) the CFS retraction
(Section~\ref{subsec:cfs_retraction_tau}).

At each reverse step $k$, 2GO first draws $M$ clean control candidates from the
Gaussian proposal \eqref{eq:clean_proposal_2go} and, as in MBD, rolls them out
through dynamics, while the reverse update operates in clean coordinates $Y_k$.
After optional CFS correction of selected candidates, the batch keeps its
original weights to form $\bar Y_k$ and the score-form surrogate
\eqref{eq:mc_avg_2go}--\eqref{eq:mc_score_2go}. 2GO then selects the active set
\eqref{eq:active_set_tau} from the task-weighted violation proxies
\eqref{eq:violation_proxy_tau} and forms the shared geometric objects $A_k$,
$G_k$, and $P_k$ at $Y_k$, which shape the displacement $d_k^G$ via
\eqref{eq:geometry_score_tau}. Finally, risk and dispersion select the
transport $\rho_k$ and gate the stochasticity via
\eqref{eq:transport_selection_main}--\eqref{eq:rho_aware_gate}; noise enters
only through the tangent filter. The reverse step
\eqref{eq:unified_reverse_update_k} produces $\tilde Y_{k-1}$; after optional
CFS retraction \eqref{eq:cfs_retraction_qp_tau}, control bounding and rollout
give the next reverse iterate. Algorithm~\ref{alg:twogo} summarizes the full
procedure.

\vspace{-0.2cm}
\subsection{Governed Deployment}
\label{subsec:real_deployment_governor}

Given the clean trajectory $\tau_0$, we extract a compact tracking reference
$r^t=\Pi_r(\tau_0^t)$ containing the commanded base motion and posture.
Following the reference-governor principle~\cite{gilbert1995discrete,garone2017reference},
a customized adaptation maps $r^t$ to a governed reference ${}^g r^t$
for the low-level locomotion policy. The quadruped interface constructs
a gait and reachable footholds, while the humanoid interface adjusts
commands using tracking-margin-tightened body clearance, command limits,
and a posture envelope. These execution constraints differ from the
planner's feasible set. Planning bounds and command regularization
reduce tracking demands, while the governor and tracker remain outside
the planning dynamics. Clearance preservation requires admissible
adapted references and tracking errors covered by the reserved margins;
the adaptation alone does not guarantee completion of the original task.

\vspace{-0.1cm}
\begin{figure*}[tb!]
    \caption{\textbf{Qualitative stepping-stone results.} \textbf{Top left:} A representative 2GO deployment with the safety governor. \textbf{Top right:} 2GO planning samples at 90\%, 50\%, and 10\% of the diffusion reverse process. \textbf{Bottom:} Final clean trajectories and execution tracking of position $s$ and velocity $v$ for baselines, and ours 2GO. Colored lines show the trajectories of the left rear (LR), left front (LF),
    right rear (RR), and right front (RF) feet, and the body center;
    red dots mark constraint violations.}
    \label{fig:humanoid}
    \centering
    \begingroup
    \setlength{\fboxsep}{1.5pt}
    \setlength{\fboxrule}{0.6pt}
    \newlength{\stpgap}\setlength{\stpgap}{3pt}
    \newlength{\stpov}\setlength{\stpov}{\dimexpr 2\fboxsep+2\fboxrule\relax}
    \newlength{\stpw}\setlength{\stpw}{\dimexpr (\textwidth - 5\stpov - 4\stpgap)/5 \relax}
    \newcommand{\steppingalgobox}[3]{%
        \fbox{\begin{minipage}[t]{\stpw}\centering
            {\scriptsize\textbf{#1}}\\[1pt]
            #2\\[1pt]
            #3%
        \end{minipage}}%
    }
    \resizebox{\textwidth}{!}{%
        \begin{minipage}[t]{\dimexpr 2\stpw+2\stpov+\stpgap\relax}\centering
            {\scriptsize\textbf{Deployment Via Governor}}\\
            \includegraphics[width=\linewidth]{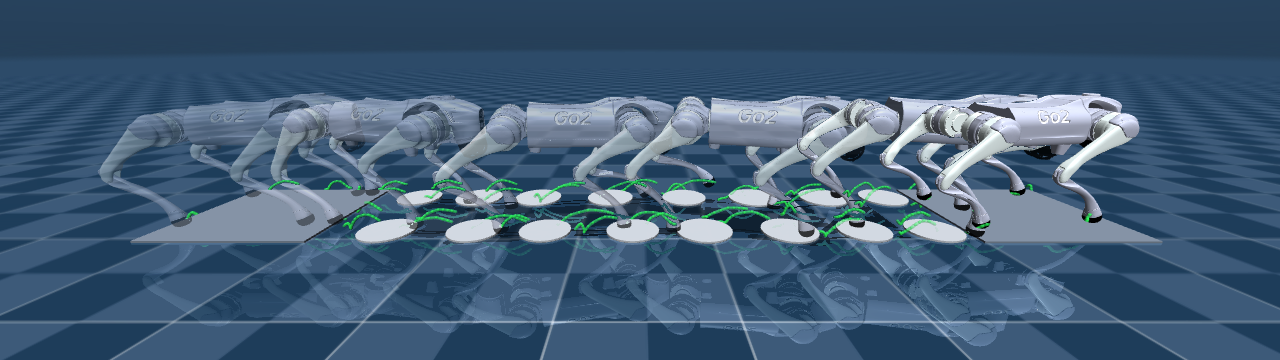}%
        \end{minipage}\hspace{\stpgap}%
        \fbox{\begin{minipage}[t]{\dimexpr 3\stpw+2\stpov+2\stpgap\relax}\centering
            {\scriptsize\textbf{Generative Reverse Processing at 90\%, 50\%, and 10\% for 2GO}}\\
            \includegraphics[width=0.333\linewidth]{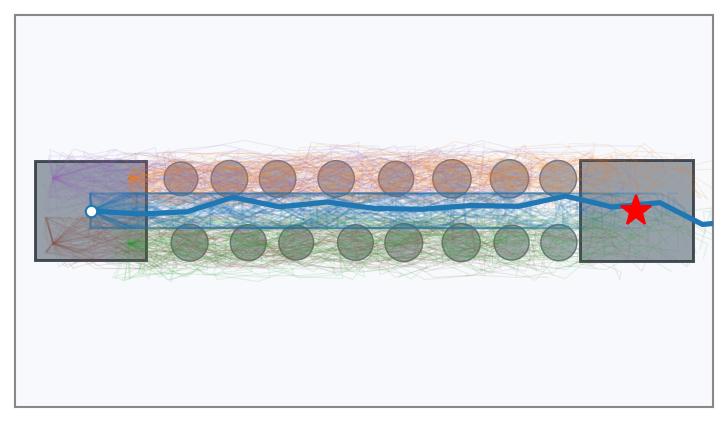}%
            \includegraphics[width=0.333\linewidth]{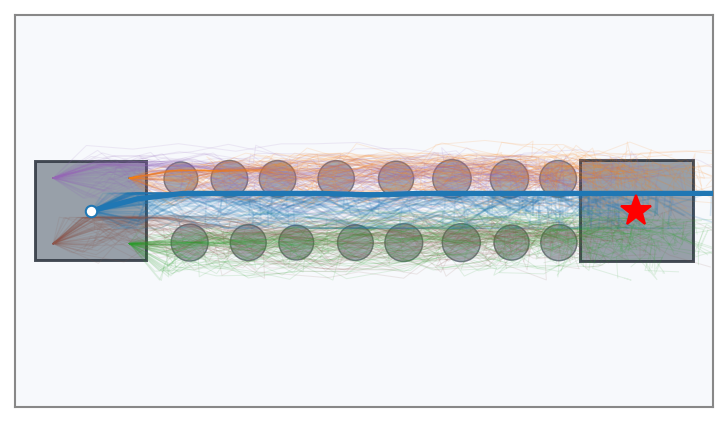}%
            \includegraphics[width=0.333\linewidth]{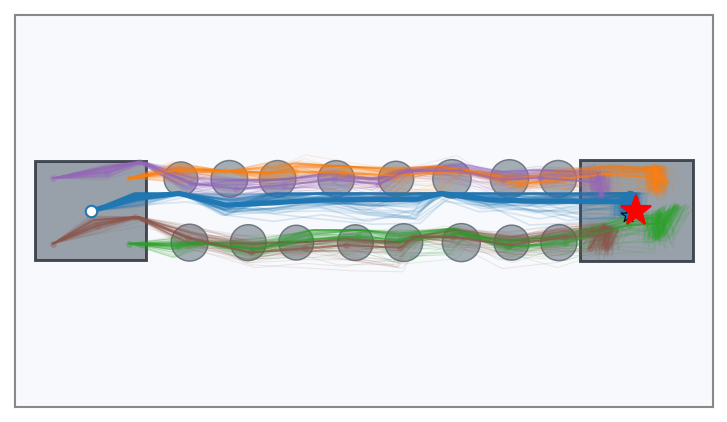}%
        \end{minipage}}%
    }%
    \par\vspace{3pt}
    \resizebox{\textwidth}{!}{%
        \steppingalgobox{MPPI}{\includegraphics[width=\linewidth]{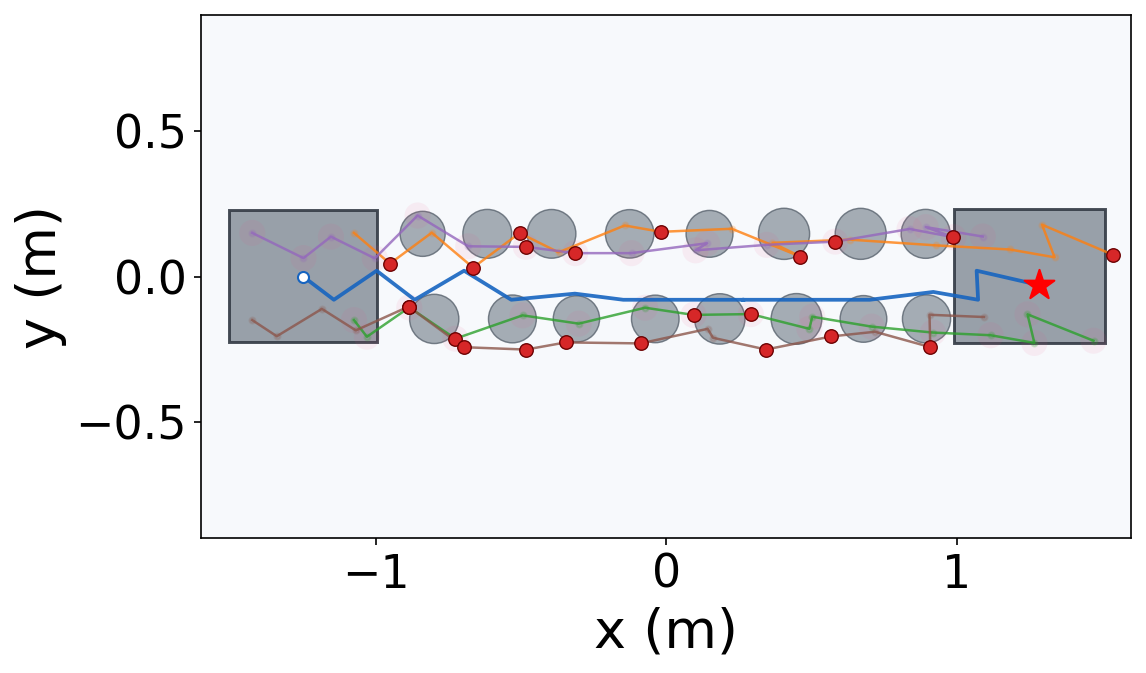}}{\includegraphics[width=\linewidth]{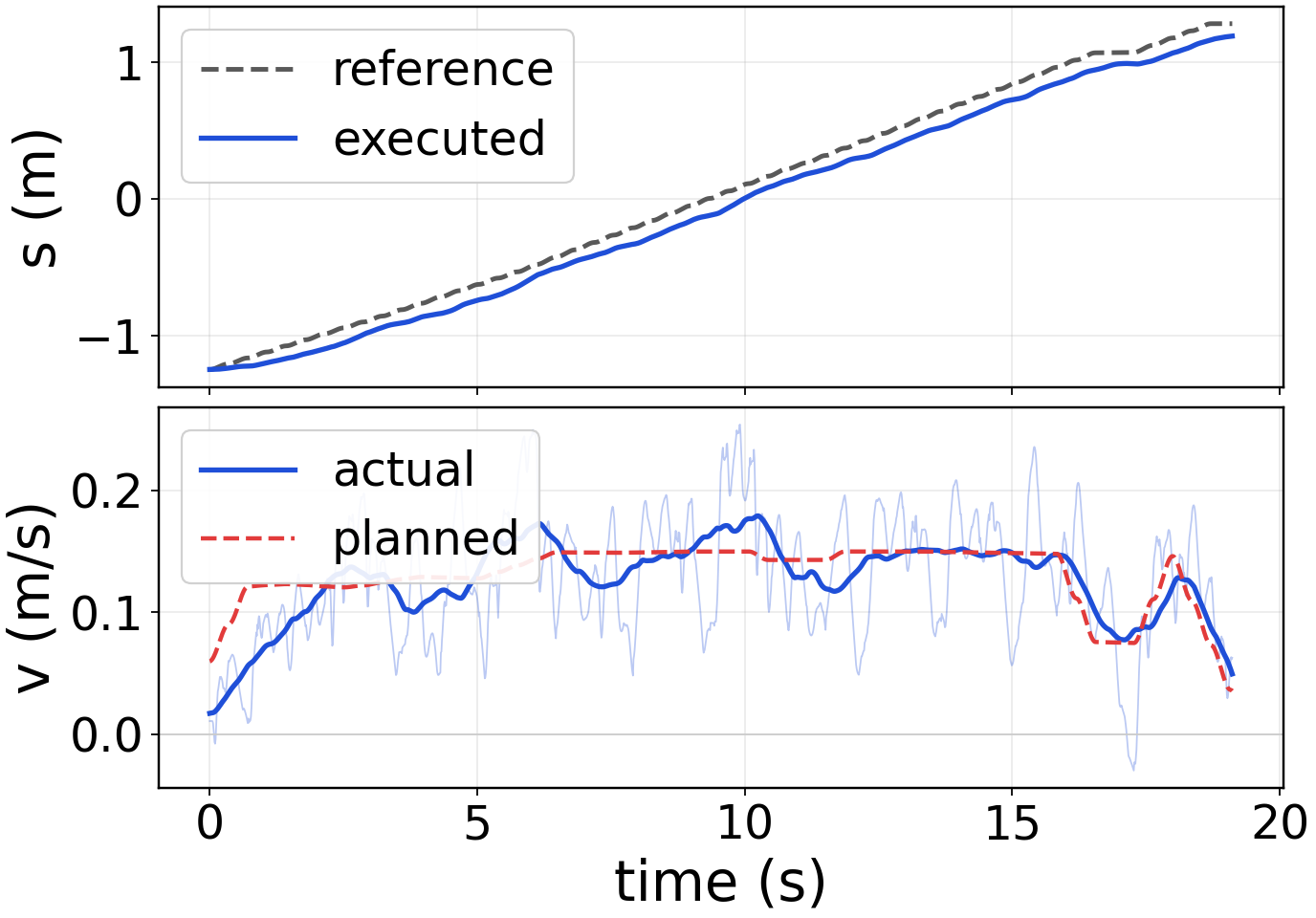}}\hspace{\stpgap}%
        \steppingalgobox{MBD}{\includegraphics[width=\linewidth]{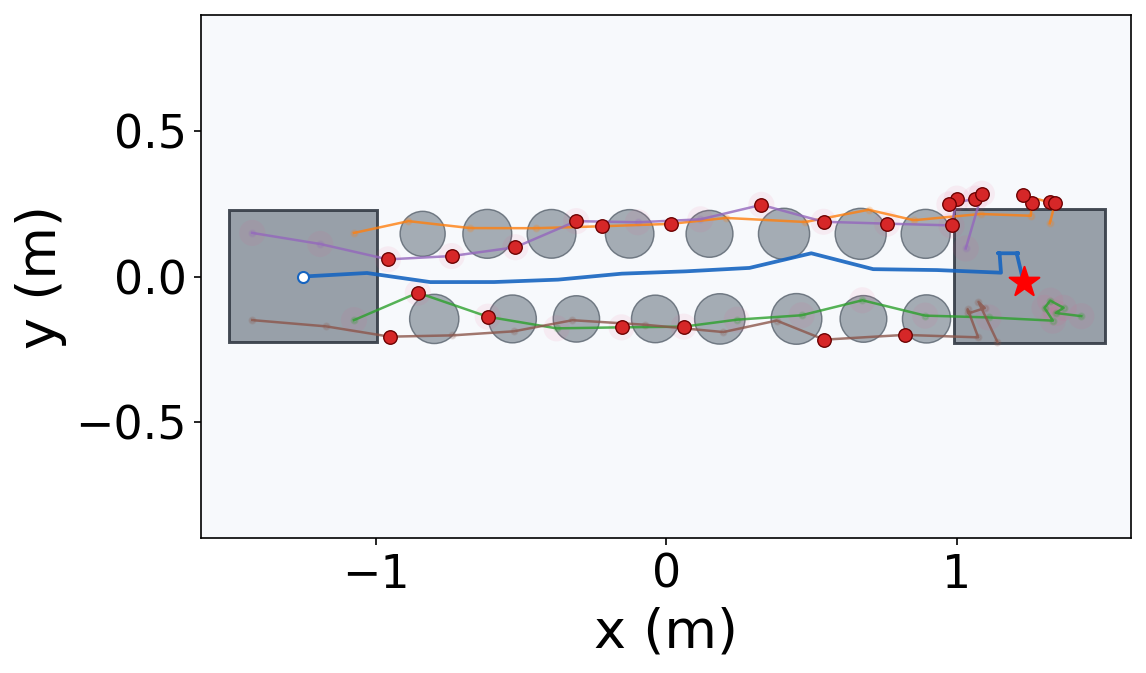}}{\includegraphics[width=\linewidth]{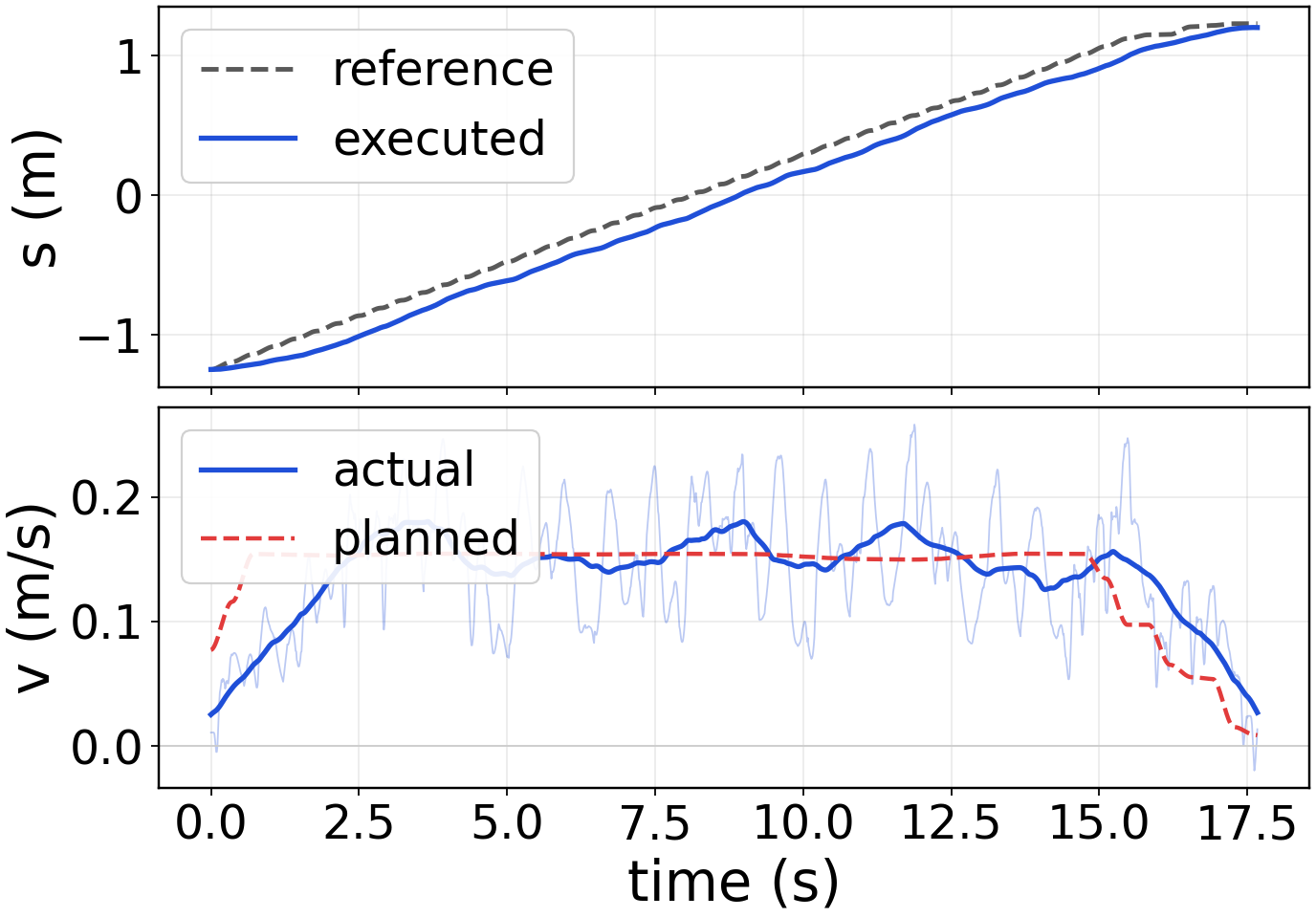}}\hspace{\stpgap}%
        \steppingalgobox{MDOC}{\includegraphics[width=\linewidth]{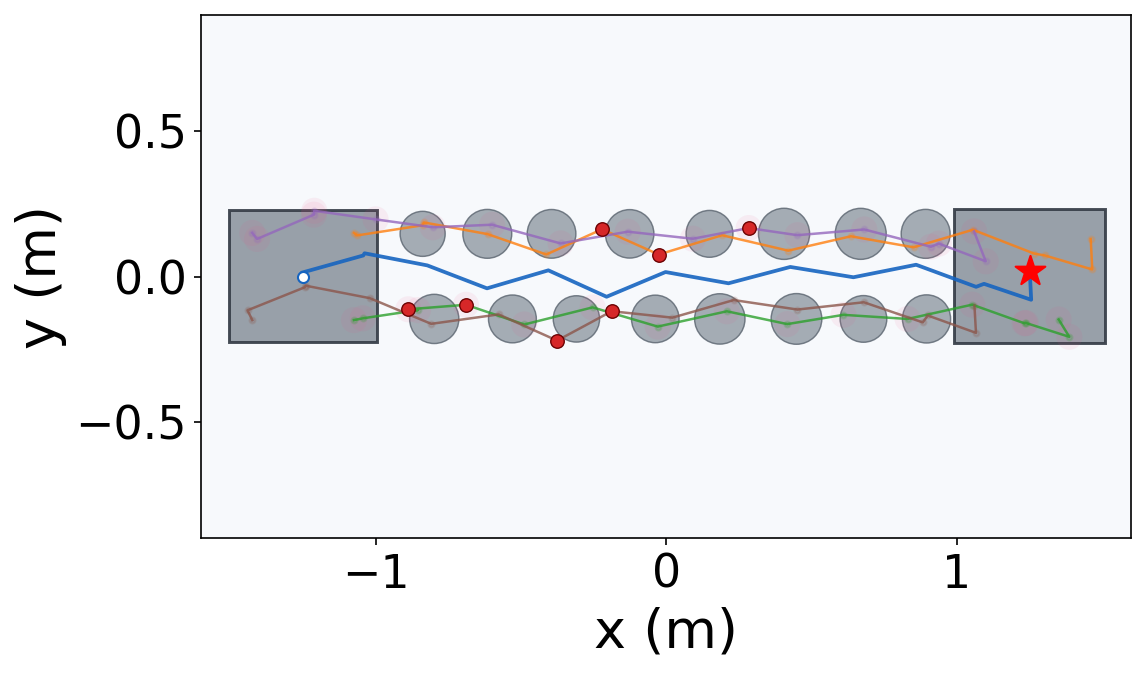}}{\includegraphics[width=\linewidth]{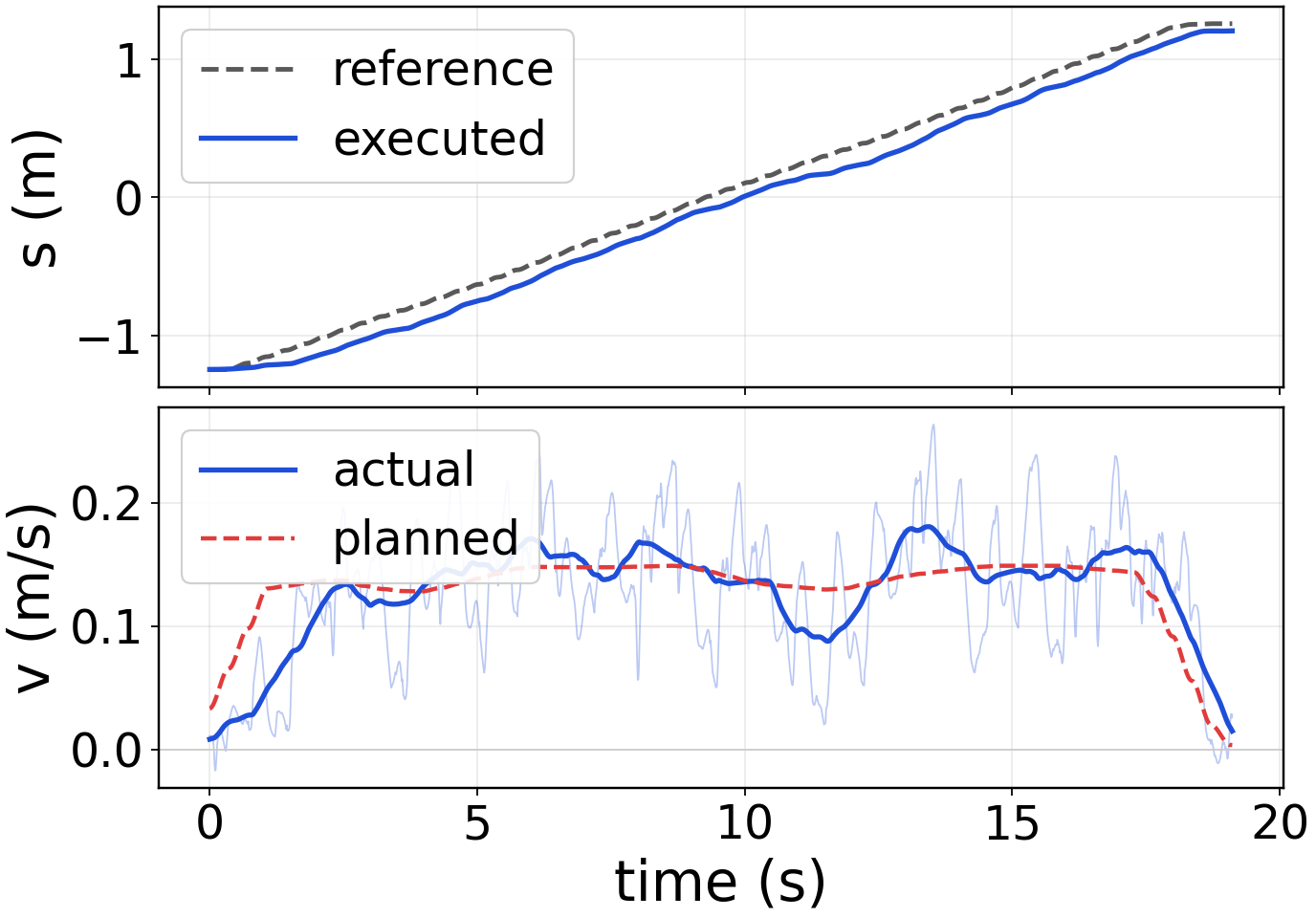}}\hspace{\stpgap}%
        \steppingalgobox{MD-COAS}{\includegraphics[width=\linewidth]{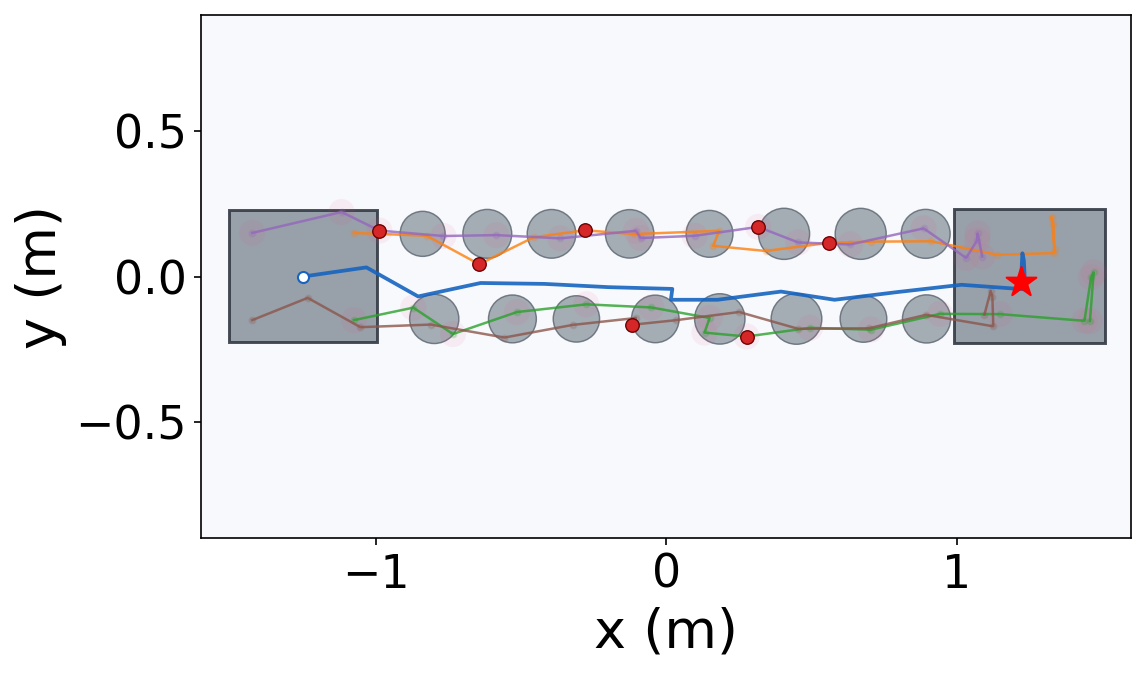}}{\includegraphics[width=\linewidth]{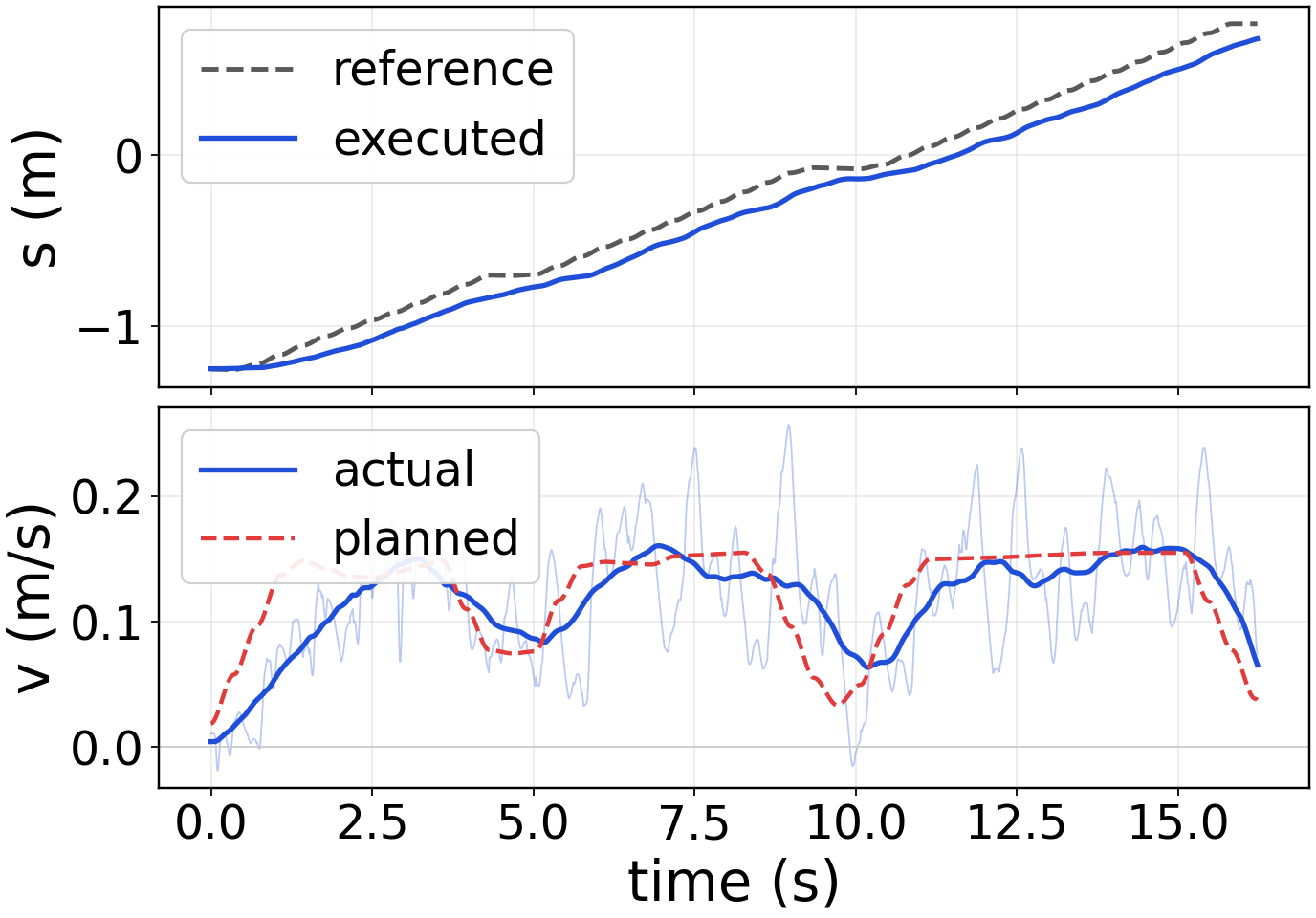}}\hspace{\stpgap}%
        \steppingalgobox{2GO}{\includegraphics[width=\linewidth]{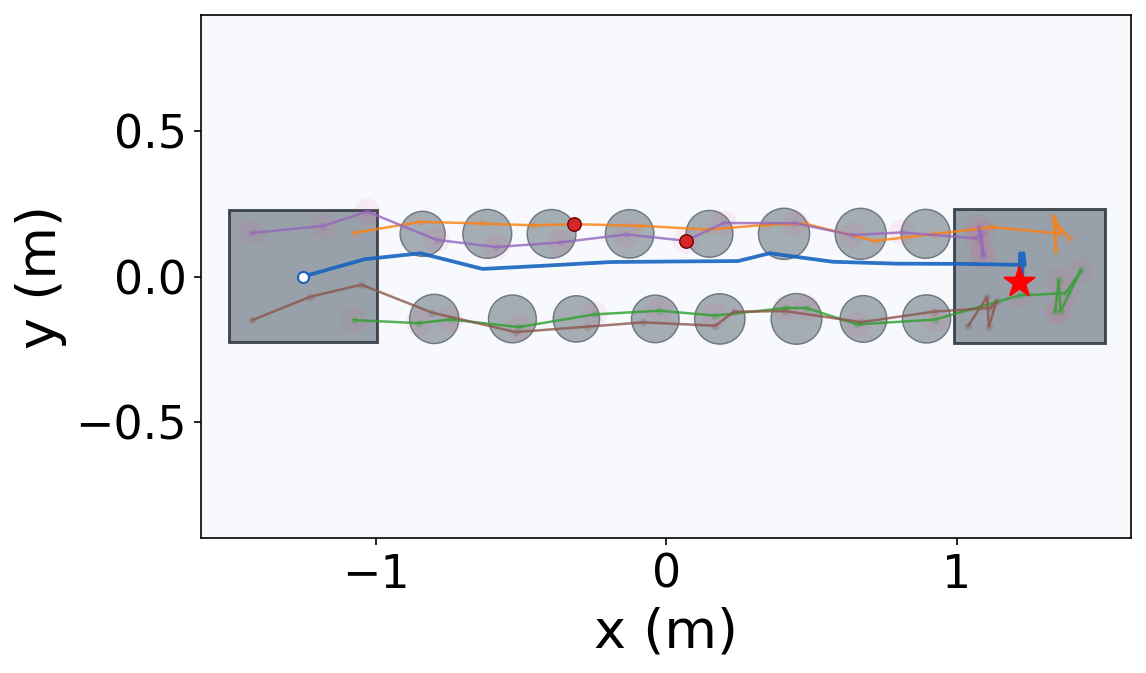}}{\includegraphics[width=\linewidth]{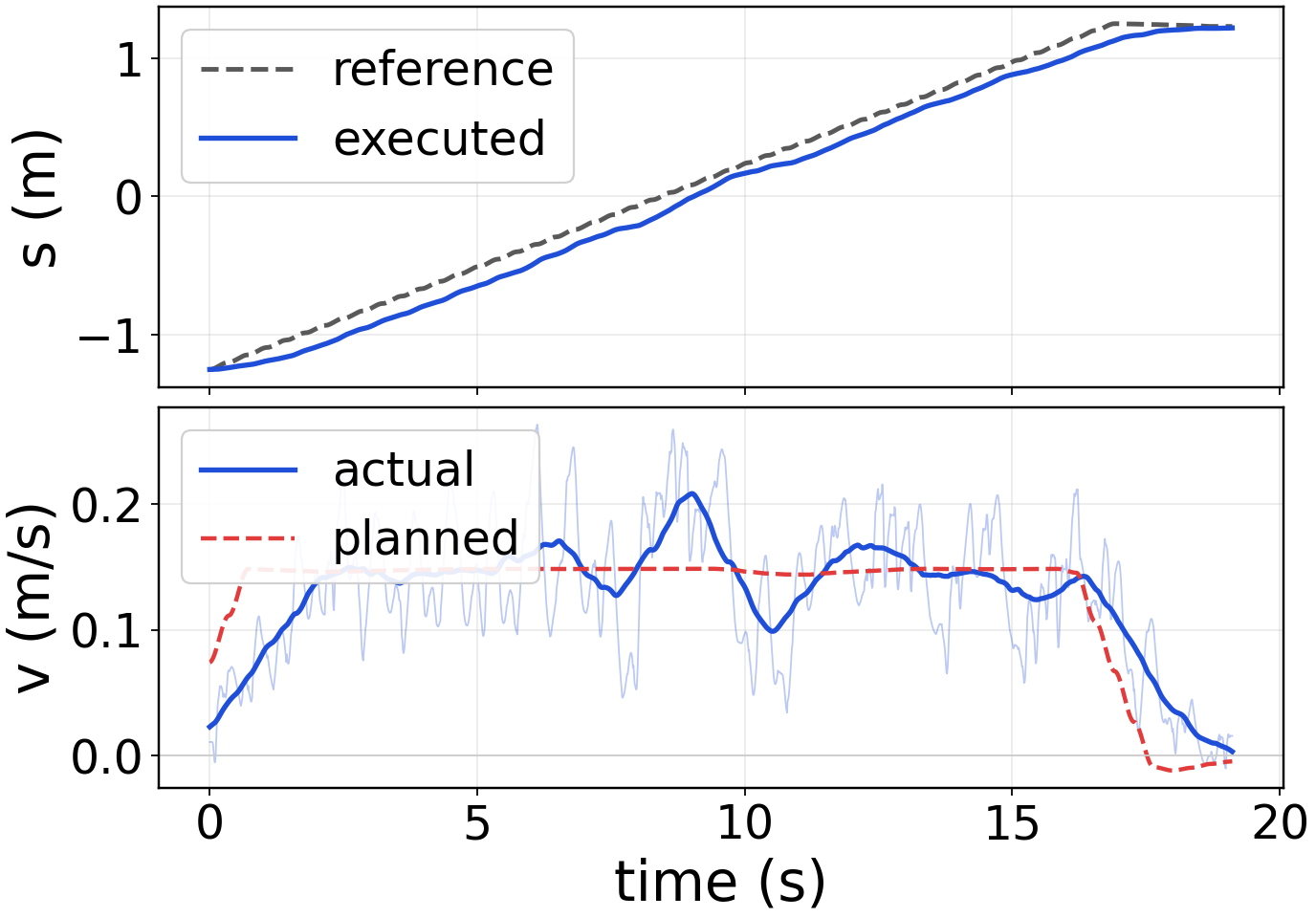}}%
    }%
    \endgroup
\end{figure*}

\begin{table*}[tb!]
\centering
\footnotesize
\setlength{\tabcolsep}{3pt}
\renewcommand{\arraystretch}{1}
\resizebox{\textwidth}{!}{%
\begin{tabular}{c|cccc|cccc|cccc|cccc|cccc}
\toprule
\multirow{2}{*}{Env.} & \multicolumn{4}{c|}{MPPI} & \multicolumn{4}{c|}{MBD} & \multicolumn{4}{c|}{MDOC} & \multicolumn{4}{c|}{MD-COAS} & \multicolumn{4}{c}{2GO} \\
 & pSSR & $pV$ & $pT$ & eSSR & pSSR & $pV$ & $pT$ & eSSR & pSSR & $pV$ & $pT$ & eSSR & pSSR & $pV$ & $pT$ & eSSR & pSSR & $pV$ & $pT$ & eSSR \\
\midrule
S1 & $0.40$ & $2.20^{0.14}$ & $\mathbf{2.87}^{0.34}$ & $0.20$ & $0.18$ & $3.15^{0.36}$ & $\underline{17.85}^{0.75}$ & $\underline{0.60}$ & $\underline{0.88}$ & $1.00^{0.08}$ & $18.37^{0.17}$ & $\mathbf{0.80}$ & $0.32$ & $\underline{0.85}^{0.22}$ & $288.22^{33.76}$ & $0.00$ & $\mathbf{0.98}$ & $\mathbf{0.47}^{0.14}$ & $21.51^{1.01}$ & $\mathbf{0.80}$ \\
S2 & $0.20$ & $2.75^{0.24}$ & $\mathbf{2.53}^{0.23}$ & $0.00$ & $0.04$ & $3.01^{0.30}$ & $\underline{18.35}^{3.63}$ & $\mathbf{0.40}$ & $\underline{0.68}$ & $\underline{1.11}^{0.12}$ & $19.73^{0.33}$ & $\underline{0.20}$ & $0.14$ & $1.45^{0.28}$ & $99.78^{42.47}$ & $0.00$ & $\mathbf{0.88}$ & $\mathbf{0.61}^{0.25}$ & $23.19^{2.74}$ & $\mathbf{0.40}$ \\
\midrule
ZA & $0.00$ & $1.15^{0.40}$ & $\mathbf{1.40}^{0.31}$ & $\underline{0.40}$ & $0.00$ & $1.16^{0.21}$ & $\underline{10.16}^{1.76}$ & $0.00$ & $\mathbf{0.96}$ & $\mathbf{0.00}^{0.00}$ & $18.04^{10.05}$ & $\mathbf{1.00}$ & $\underline{0.02}$ & $\underline{0.07}^{0.02}$ & $67.27^{18.94}$ & $0.20$ & $\mathbf{0.96}$ & $\mathbf{0.00}^{0.00}$ & $14.08^{6.91}$ & $\mathbf{1.00}$ \\
ZB & $0.00$ & $3.37^{0.58}$ & $\mathbf{1.37}^{0.36}$ & $0.00$ & $0.00$ & $2.78^{0.13}$ & $\underline{6.15}^{1.92}$ & $0.00$ & $\mathbf{0.90}$ & $\underline{0.77}^{0.70}$ & $39.16^{9.89}$ & $\underline{0.20}$ & $0.40$ & $\mathbf{0.18}^{0.13}$ & $99.79^{12.04}$ & $\mathbf{0.60}$ & $\underline{0.48}$ & $1.82^{0.31}$ & $37.09^{18.20}$ & $\underline{0.20}$ \\
ZC & $0.00$ & $1.68^{1.33}$ & $\mathbf{1.36}^{0.32}$ & $0.00$ & $0.00$ & $3.12^{0.80}$ & $\underline{4.88}^{1.00}$ & $0.00$ & $\mathbf{1.00}$ & $\mathbf{0.00}^{0.00}$ & $10.94^{5.95}$ & $\mathbf{1.00}$ & $\underline{0.02}$ & $\underline{0.22}^{0.09}$ & $105.30^{3.58}$ & $0.20$ & $\mathbf{1.00}$ & $1.23^{0.16}$ & $14.23^{7.98}$ & $\mathbf{1.00}$ \\
ZD & $0.00$ & $0.45^{0.31}$ & $\mathbf{1.35}^{0.33}$ & $0.00$ & $0.00$ & $0.56^{0.06}$ & $10.77^{12.89}$ & $0.00$ & $\underline{0.86}$ & $0.08^{0.02}$ & $\underline{10.14}^{4.19}$ & $\underline{0.40}$ & $0.24$ & $\underline{0.07}^{0.03}$ & $39.45^{7.07}$ & $\mathbf{0.60}$ & $\mathbf{0.98}$ & $\mathbf{0.00}^{0.00}$ & $11.34^{6.95}$ & $\mathbf{0.60}$ \\
\bottomrule
\end{tabular}}
\caption{Planning and execution performance across stepping stones (S1, S2) and corridors (ZA--ZD). pSSR and eSSR ($\uparrow$) are success fractions defined in the evaluation protocol; $pV$ ($\downarrow$) is foothold distance (cm) for S1/S2 and sampled obstacle occupancy (\%) for ZA--ZD. $pT$ ($\downarrow$) is planning time (s). Superscripts give population standard deviations across 5 seeds, with each seed generating 10 clean trajectories. Bold and underlining indicate the best and second-best displayed values, including ties.}
\label{tab:main_results}
\vspace{-0.4cm}
\end{table*}

\section{Experimental Analysis}
We evaluate 2GO on CLP to address following questions:

\begin{itemize}
    \item \textbf{Q1:} Does 2GO improve planning over representative baselines in sparse stepping-stone locomotion involving \textbf{discrete foothold choices}?
    \item \textbf{Q2:} Can 2GO improve planning and execution in humanoid navigation with \textbf{continuous body and posture references} under geometric obstacle constraints?
    \item \textbf{Q3:} How do geometry-aware updates and adaptive reverse stochasticity contribute to performance?
\end{itemize}

\textbf{Baselines.}
We compare 2GO with
(\textbf{1}) MPPI~\cite{williams2017mppi}, which importance-weights sampled forward rollouts to optimize controls;
(\textbf{2}) MBD~\cite{pan2024model}, a model-based diffusion planner;
(\textbf{3}) MDOC~\cite{he2026model}, which adds CBF-based projection to each MBD rollout step;
(\textbf{4}) MD-COAS~\cite{he2026motion}, which combines an iALM soft feasibility prior, CFS projection, and adaptive scheduling. The main comparison uses a 48GB NVIDIA RTX A6000 GPU with 9 CPU cores.

\textbf{Evaluation.} Planning Safe-and-Success Rate (pSSR) evaluates all returned candidates,
while supplementary execution SSR (eSSR) evaluates one selected, adapted reference per seed across all 5 trials, including failed planning attempts. Both use a $0.20$ m fixed-goal tolerance; quadruped planning uses margin-shrunken supports, execution allows $0.05$ m swing-end foot error relative to physical supports, and humanoid execution requires
no fall and nonnegative body-proxy clearance. Candidate selection and governor/controller adaptation can recover planning-level violations,
allowing eSSR to exceed pSSR.

\begin{figure*}[tb!]
    \centering
    \begingroup
    \setlength{\tabcolsep}{0pt}
    \setlength{\fboxsep}{0pt}
    \renewcommand{\arraystretch}{0}
    \newcommand{\twogocorridorimage}[2]{%
        \begin{minipage}[c]{#1}
            \centering
            #2
        \end{minipage}%
    }
    \newcommand{\twogocorridorblank}[1]{%
        \begin{minipage}[c]{#1}
            \centering
            \rule{0pt}{0.78in}\rule{\linewidth}{0pt}
        \end{minipage}%
    }
    \newcommand{\twogozonelabel}[1]{%
        \begin{minipage}[c]{0.025\textwidth}
            \centering
            \rotatebox[origin=c]{90}{\textbf{#1}}%
        \end{minipage}%
    }

    \begin{tabular}{@{}c@{}c@{}c@{}c@{}c@{}c@{}}
        \vspace{0.05cm}
        \twogozonelabel{Zone A} &
        \twogocorridorimage{0.250\textwidth}{\includegraphics[width=\linewidth]{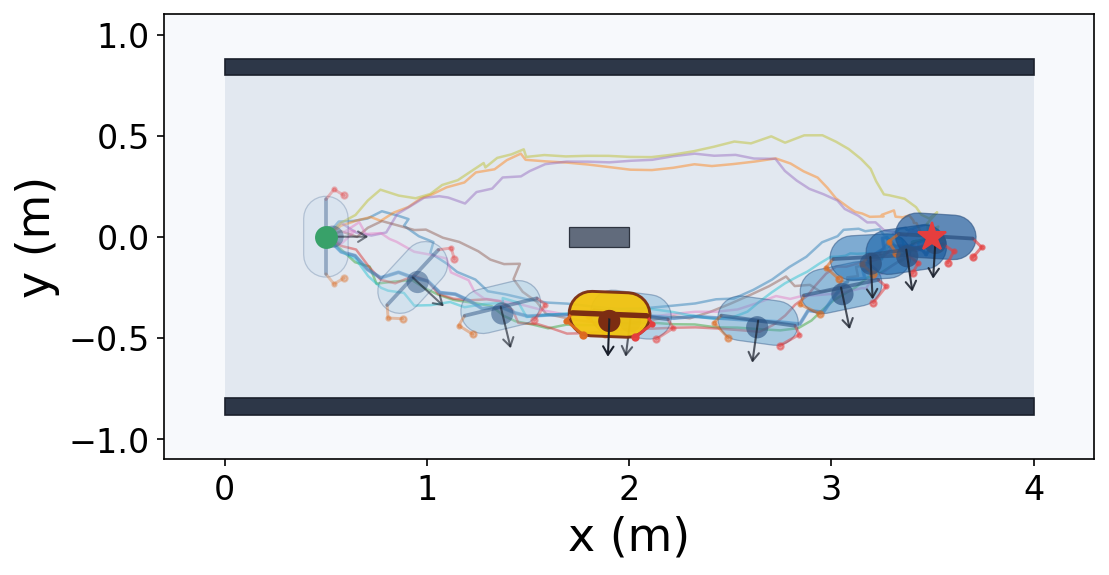}} &
        \twogocorridorimage{0.250\textwidth}{\includegraphics[width=\linewidth]{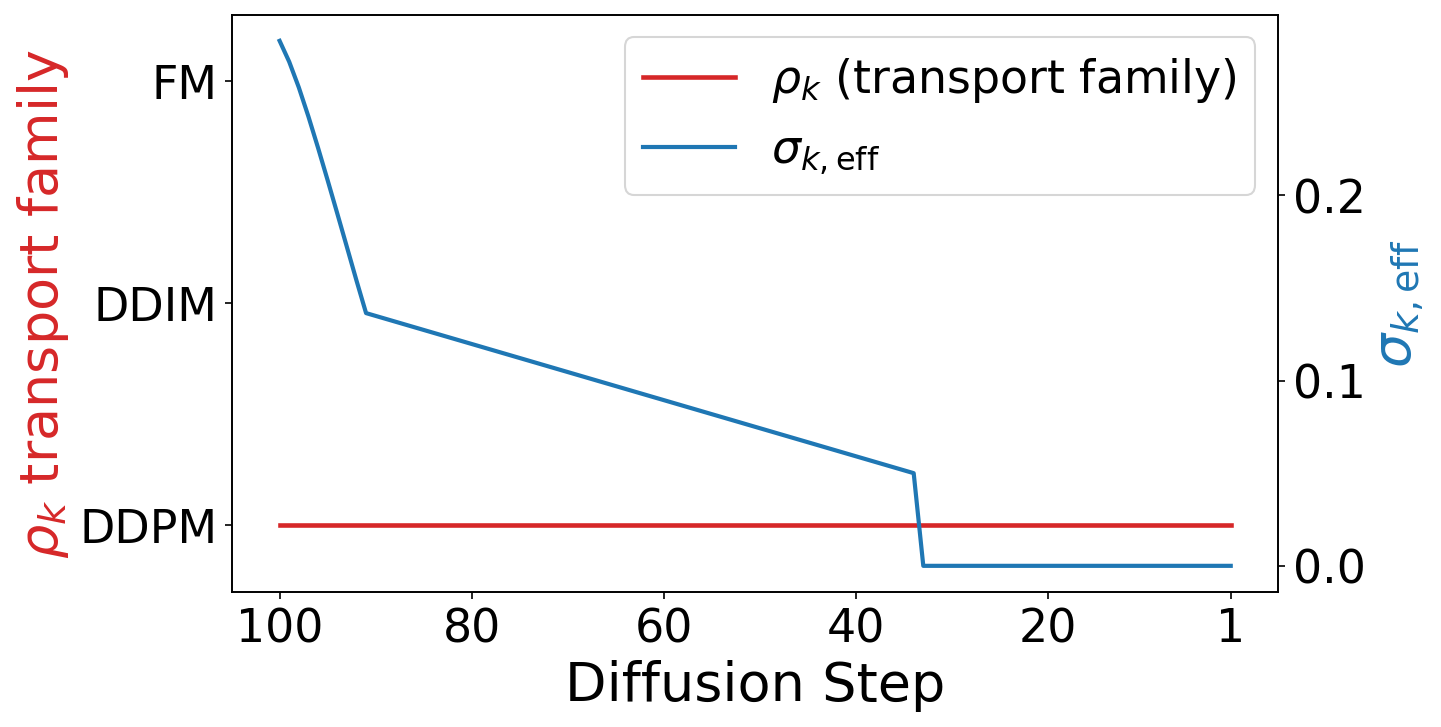}} &
        \multicolumn{3}{@{}c@{}}{\twogocorridorimage{0.450\textwidth}{\includegraphics[width=\linewidth]{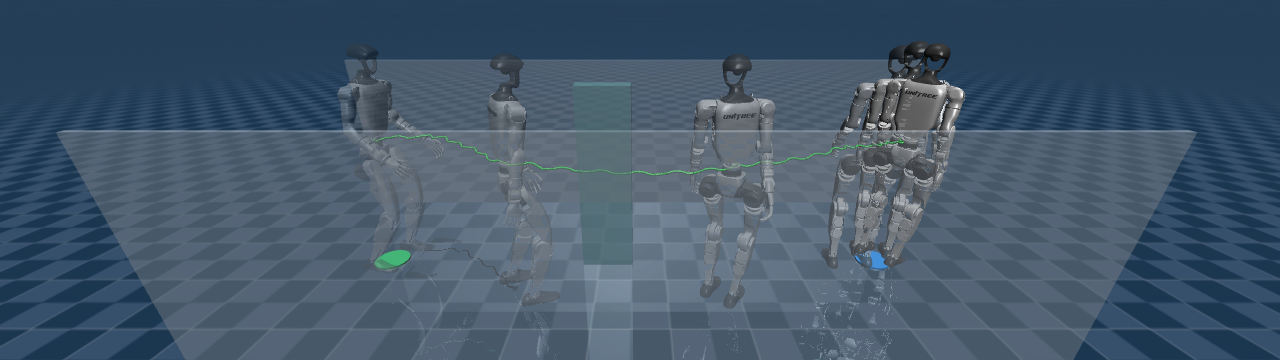}}} \\
        \vspace{0.05cm}
        \twogozonelabel{Zone B} &
        \twogocorridorimage{0.250\textwidth}{\includegraphics[width=\linewidth]{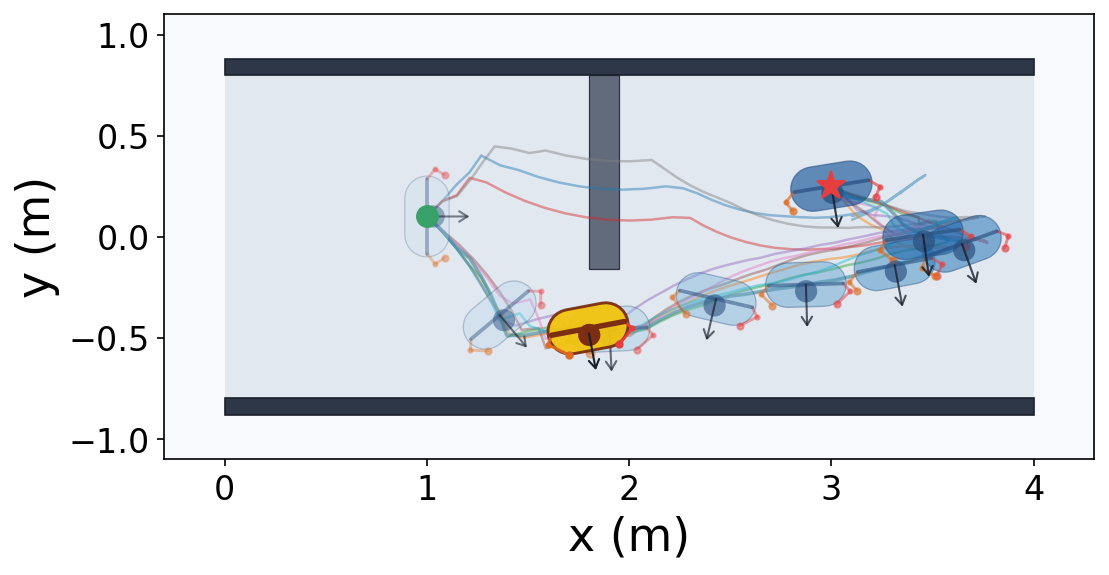}} &
        \twogocorridorimage{0.250\textwidth}{\includegraphics[width=\linewidth]{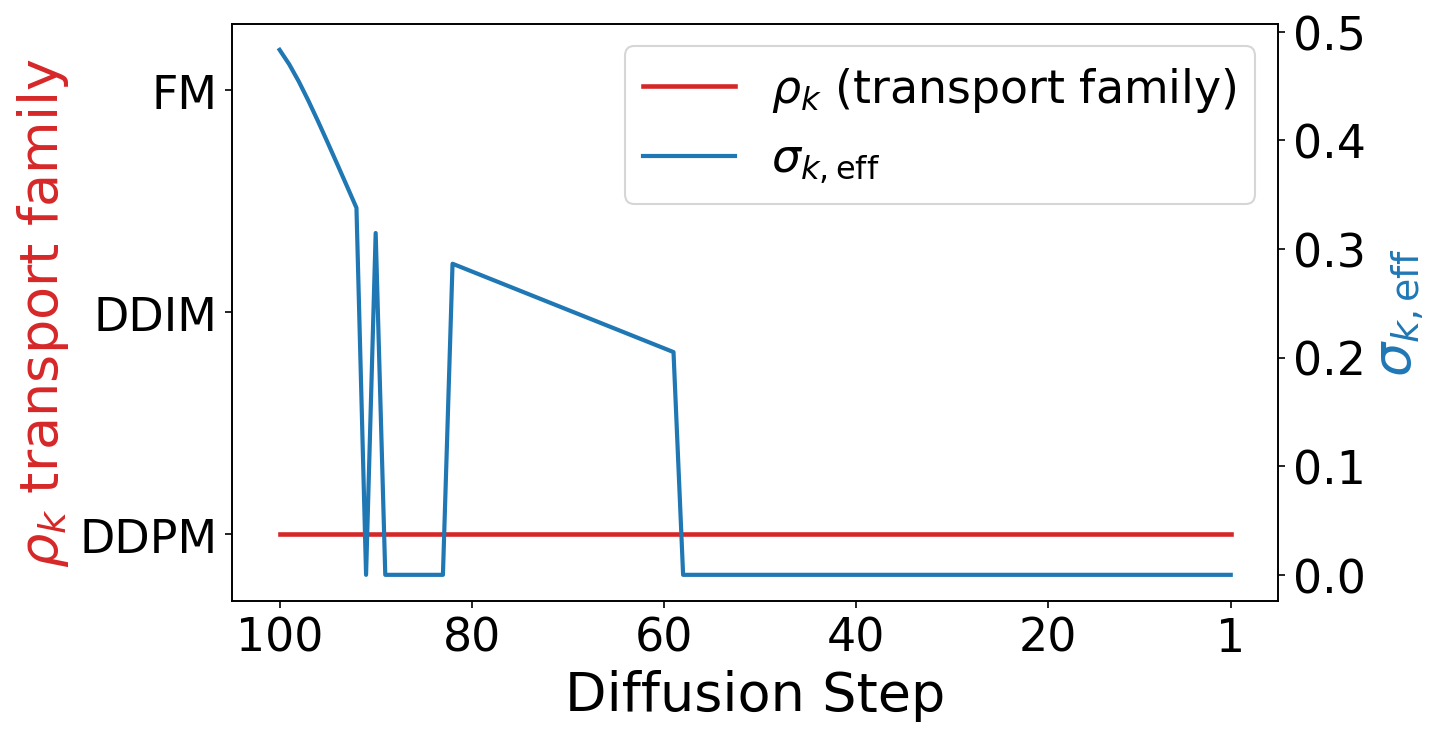}} &
        \multicolumn{3}{@{}c@{}}{\twogocorridorimage{0.450\textwidth}{\includegraphics[width=\linewidth]{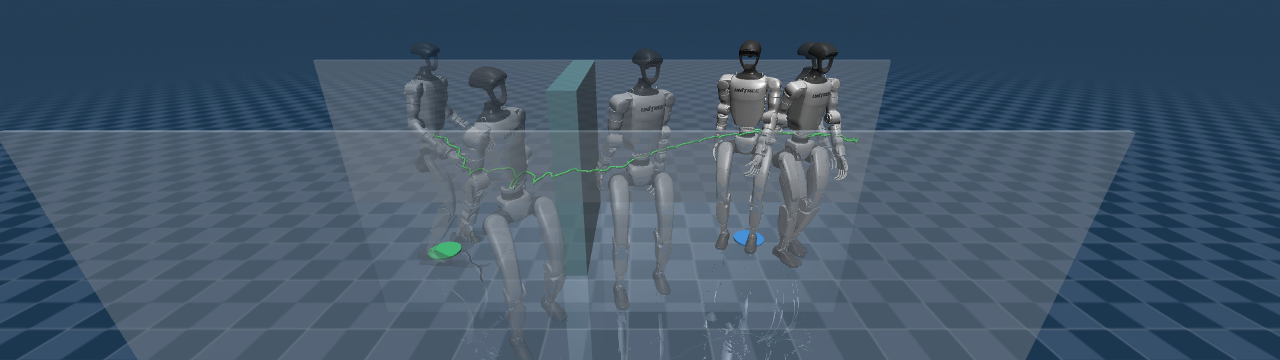}}} \\
        \vspace{0.05cm}
        \twogozonelabel{Zone C} &
        \twogocorridorimage{0.250\textwidth}{\includegraphics[width=\linewidth]{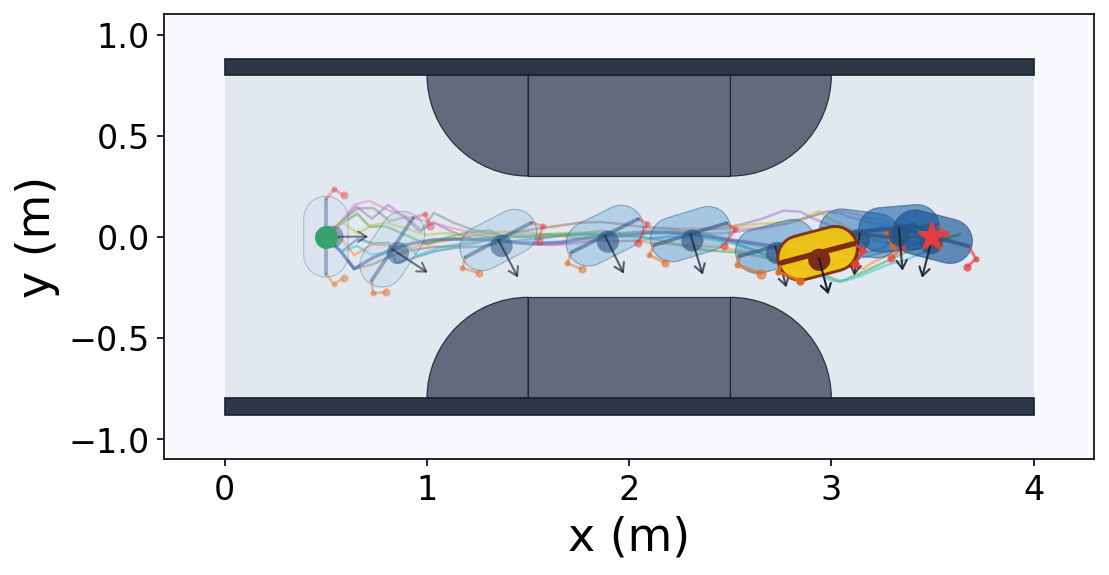}} &
        \twogocorridorimage{0.250\textwidth}{\includegraphics[width=\linewidth]{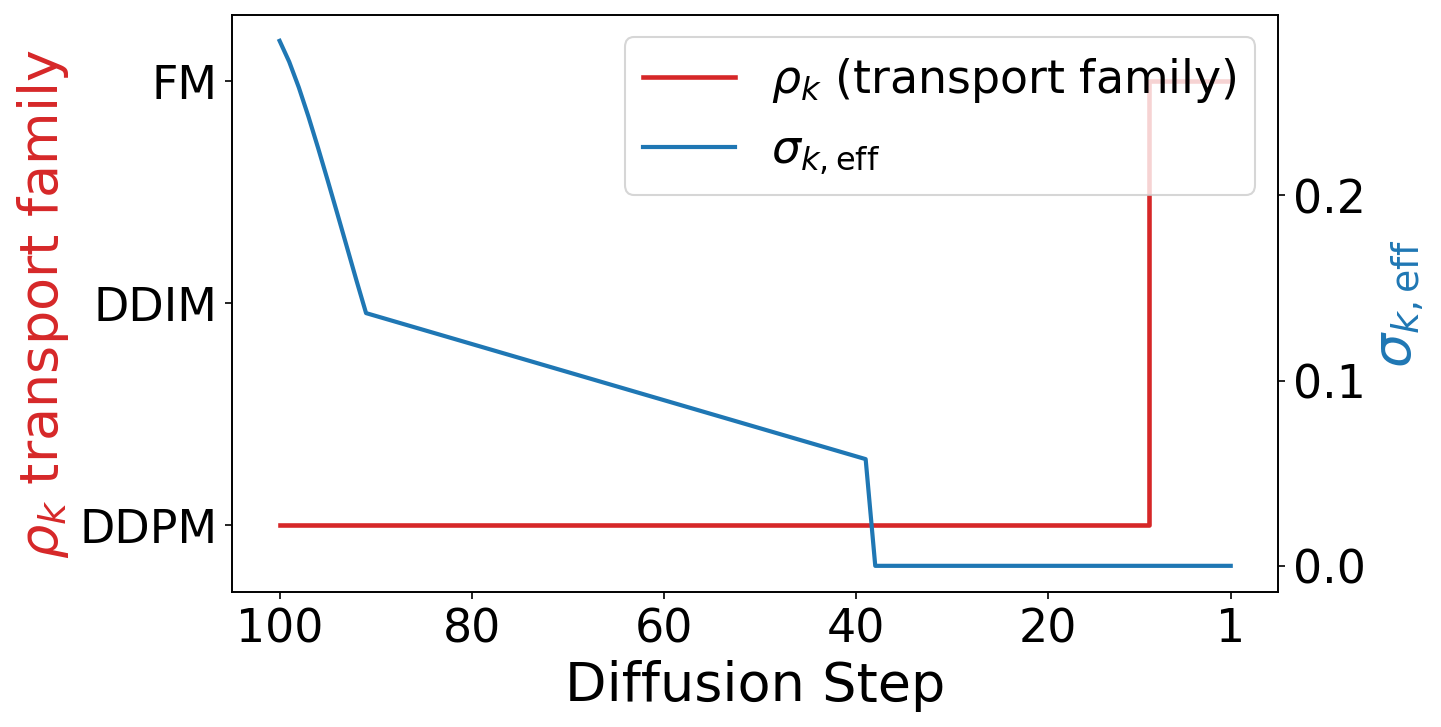}} &
        \multicolumn{3}{@{}c@{}}{\twogocorridorimage{0.450\textwidth}{\includegraphics[width=\linewidth]{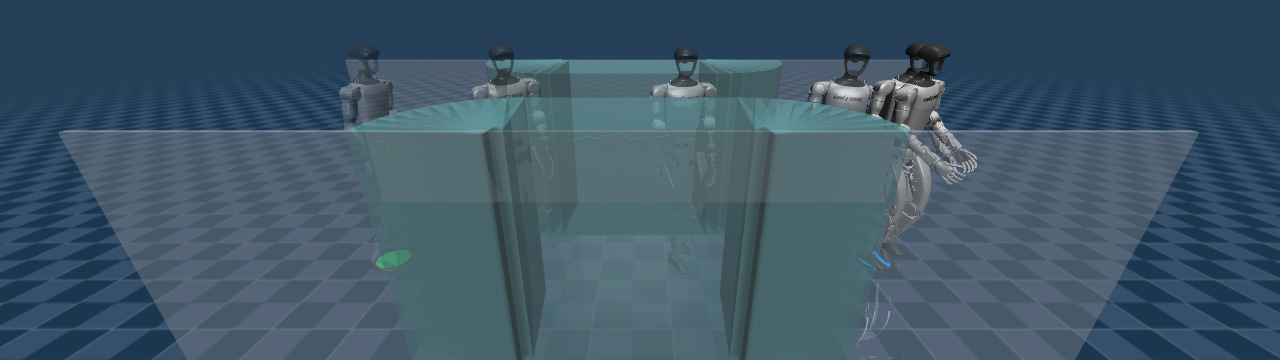}}} \\
        \vspace{0.05cm}
        \twogozonelabel{Zone D} &
        \twogocorridorimage{0.250\textwidth}{\includegraphics[width=\linewidth]{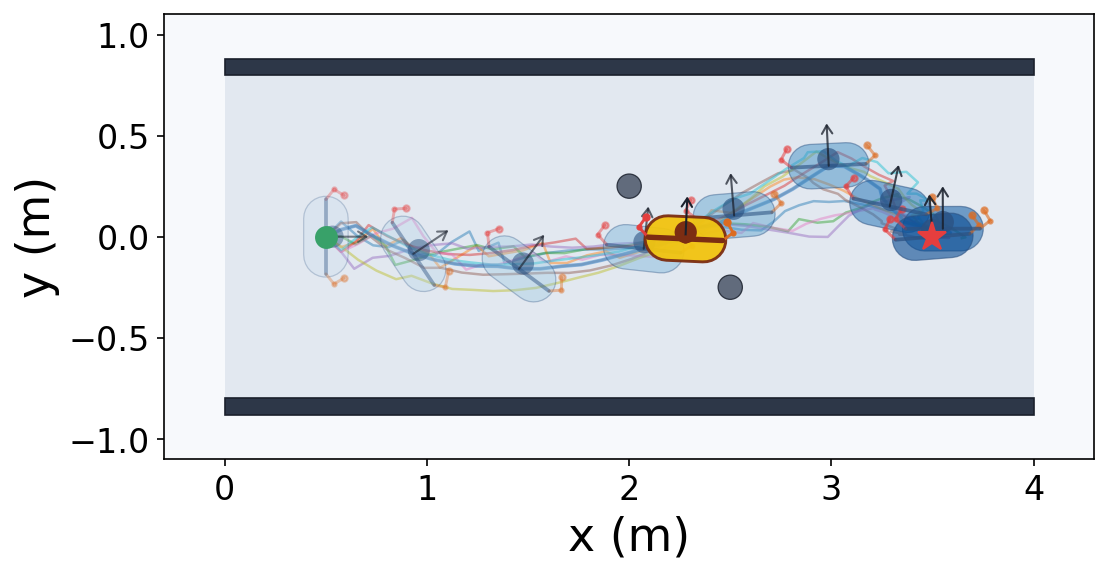}} &
        \twogocorridorimage{0.250\textwidth}{\includegraphics[width=\linewidth]{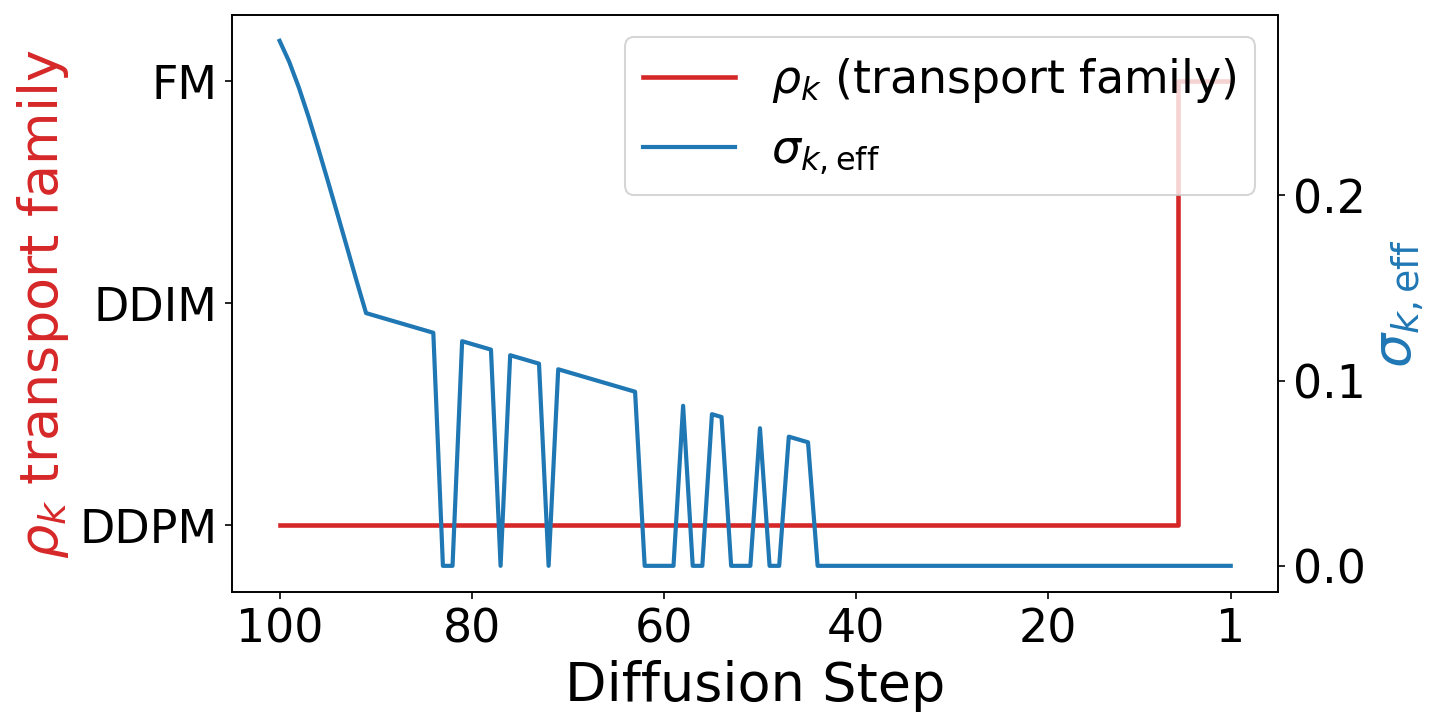}} &
        \multicolumn{3}{@{}c@{}}{\twogocorridorimage{0.450\textwidth}{\includegraphics[width=\linewidth]{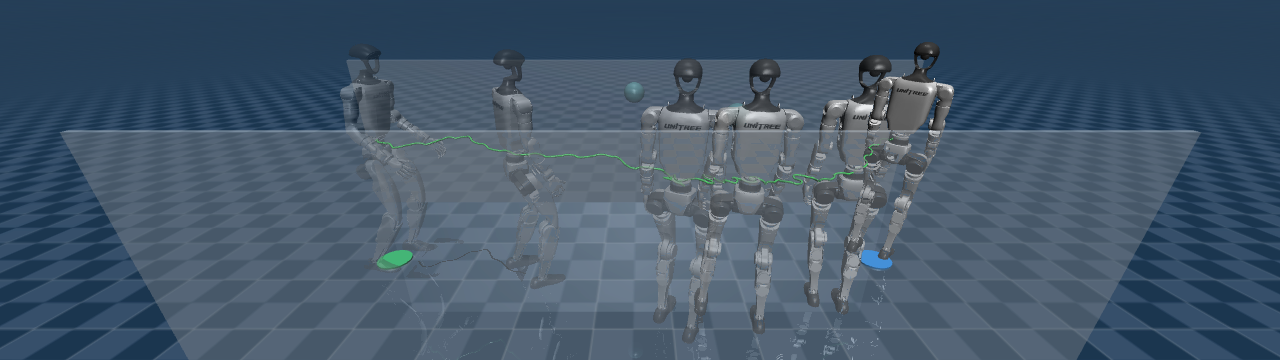}}}
    \end{tabular}
    \caption{Planning and deployment across the humanoid corridor locomotion benchmark. Per zone (rows): the planned multi-modal whole-body latent trajectories (left), the 2GO adaptive transport schedule $\rho_k/\sigma_{k,\mathrm{eff}}^{\rho_k}$ within fixed 100 diffusion steps (middle), and the deployed Unitree G1 sport-mode execution as a ghosted motion strip (right).}
    \label{fig:corridor}
    \endgroup
    \vspace{-0.4cm}
\end{figure*}

\vspace{-0.2cm}
\subsection{Quadruped Stepping-Stone Locomotion}
\vspace{-0.1cm}
We evaluate quadruped locomotion across stepping stones from
$(-1.25,0)$ to $(1.25,0)$. S1 uses stone radii of
$0.076$--$0.088$ m and nominal along-lane gaps of
$0.22$--$0.28$ m; S2 uses smaller radii of
$0.068$--$0.080$ m and larger gaps of $0.26$--$0.32$ m.
The 16-state reference model includes planar body position and
velocity, yaw and yaw rate, four two-dimensional foot offsets,
and support mode and phase. Its 12 controls adjust planar body
motion, yaw, foot offsets, and phase. Bounded Euler updates
keep the body near the support-foot midpoint.
World-frame foot positions satisfy
\begin{math}
    p_\ell=p_b+R(\psi)(p_\ell^{\mathrm{nom}}+r_\ell),
\label{eq:quad_model_main}
\end{math}
where $p_b$ is body position, $R(\psi)$ the yaw rotation, and
$r_\ell$ a foot offset from the nominal body-frame position
$p_\ell^{\mathrm{nom}}$.
Foothold violation is the distance to the union of margin-shrunken
stone disks and support rectangles. Diagonal support pairs
alternate at phase completion only if swing-foot violation is
at most $0.04$ m. The governor constructs a walking gait and
reachable touchdowns for the follower with auxiliary
base stabilization.

The planning results show that 2GO achieves the highest pSSR across both S1 and S2. The reverse snapshots illustrate this refinement: early steps preserve multiple foothold candidates, while late steps concentrate onto contact-consistent routes rather than crossing unsupported regions. MPPI is the fastest baseline and improves pSSR over MBD, but its cost-weighted rollouts still incur high foothold violation and low eSSR, consistent with unresolved discrete support assignments. MBD likewise often selects invalid contacts in goal-directed rollouts even though the follower recovers some execution trials. MDOC and MD-COAS impose stronger corrections, yet their continuous projection geometry remains local around the current sample and can restrict exploration when stones are disconnected. In contrast, 2GO uses task-modulated active foothold geometry, tangent-space exploration, and transport-aware refinement to coordinate refinement and exploration across feasible foothold regions.

\vspace{-0.2cm}
\subsection{Humanoid Corridor Navigation}
\vspace{-0.2cm}

We evaluate humanoid navigation in a $1.6$ m-wide corridor containing
a thin center wall (Zone A), a U-shaped wall (B), a tapered squeeze (C),
and spherical obstacles (D). The 14-state reference model describes
planar pose, body height, torso yaw, arm configuration, and motion rates.
Its nine controls command body-frame velocity $v^b$, yaw rate
$\omega_c$, and posture rates $u_q$. Position $p$, yaw $\psi$, and
posture $q$ evolve as
\begin{equation}
\begin{aligned}
p^+&=p+\Delta t R(\psi)v^b,\quad
\psi^+=\operatorname{wrap}(\psi+\Delta t\omega_c),\\
q^+&=\operatorname{clip}(q+\Delta t u_q,q_{\min},q_{\max}),
\end{aligned}
\label{eq:humanoid_model_main}
\end{equation}
where $R(\psi)$ rotates body-frame velocities into the world frame.
The governor adapts the resulting references to command limits and
clearance margins before low-level tracking.

Table~\ref{tab:main_results} reports corridor results across four isolated zones. Compared with stepping stones, the difficulty shifts from discrete foothold selection to continuous body-wide clearance: reaching the goal may still fail SSR through torso or arm collision. MPPI has the lowest planning time, but its pSSR is zero in all four zones; importance-weighted rollouts retain goal progress without consistently satisfying body-wide clearance. MBD is slower and likewise has zero pSSR and eSSR in all four zones. Although MDOC is competitive in Zone B pSSR, 2GO achieves higher SSR and eSSR in Zone D. MD-COAS recovers safe best trajectories, but has much longer planning time and lower pSSR. In contrast, 2GO reaches full or near-full pSSR in Zones A, C, and D. Its task-modulated collision geometry shapes the reverse direction and tangent exploration for side-passing and posture-adjusted motion.

\textbf{Real-world deployment.}
The Unitree G1 demonstrations in Fig.~\ref{fig:2go_deploy} use Augmented Reality (AR) to register virtual obstacles in a common scene frame. Wall (Light Gray) and wall obstacles (Dark Green) are shorten for better virtualization. The hardware interface converts adapted references into body-frame planar velocity, yaw-rate, and height requests for Unitree’s built-in locomotion controller; torso and arm variables define joint-position references. State feedback, reference retiming, and clearance adaptation accommodate the planning–tracking mismatch. These qualitative demonstrations complement the simulation evaluation.

\vspace{-0.2cm}
\subsection{Discussion}
\vspace{-0.2cm}
\textbf{Why Geometric-Aware?}
Table~\ref{tab:main_results} shows favorable planning safety and execution
compatibility for 2GO. Within each reverse update, 2GO uses shared active geometry to shape the denoising
direction, filter noise toward the tangent space, and apply
optional CFS correction. This design supports exploration among disconnected
foothold regions and clearance-aware refinement in continuous corridors.
The results support the combined method across both environments.

\textbf{Why Adaptive Reverse Stochasticity?}
Useful stochasticity can change with geometry: ambiguous steps may benefit
from exploration, whereas locally resolved steps favor deterministic
refinement. The schedule
$\rho_k/\sigma_{k,\mathrm{eff}}^{\rho_k}$ in Fig.~\ref{fig:corridor}
is consistent with this pattern during side selection in Zone A,
entrance and exit maneuvers in Zone B, and passage through the squeeze
and clutter in Zones C and D. These observations illustrate how 2GO
balances exploration and deterministic refinement.
\vspace{-0.2cm}
\section{Conclusion}
\vspace{-0.2cm}
We presented 2GO, a model-based geometry-aware generative
optimization framework for constrained locomotion. 2GO uses task-modulated
active constraint geometry to shape denoising directions, attenuate normal
stochasticity, and define local CFS corrections. Its transport interface decouples
deterministic updates from stochastic exploration, while a reference governor
adapts the planned motion for execution. 
The results show improved foothold feasibility for quadruped and high planning success in
humanoid corridor zones, while execution depends on the reference
interface. Future work includes richer whole-body dynamics and receding-horizon
deployment.

\bibliographystyle{IEEEtran}
\bibliography{references}

@inproceedings{li2025continual,
  title={Continual Learning and Lifting of {Koopman} Dynamics for Linear Control of Legged Robots},
  author={Li, Feihan and Abuduweili, Abulikemu and Sun, Yifan and Chen, Rui and Zhao, Weiye and Liu, Changliu},
  booktitle={Learning for Dynamics {\&} Control Conference},
  pages={136--148},
  year={2025}
}

@inproceedings{xue2025full,
  title={Full-order sampling-based mpc for torque-level locomotion control via diffusion-style annealing},
  author={Xue, Haoru and Pan, Chaoyi and Yi, Zeji and Qu, Guannan and Shi, Guanya},
  booktitle={International Conference on Robotics and Automation},
  pages={4974--4981},
  year={2025}
}

@inproceedings{dhedin2025simultaneous,
  title={Simultaneous Contact Sequence and Patch Planning for Dynamic Locomotion},
  author={Dh{\'e}din, Victor and Zhao, Haizhou and Khadiv, Majid},
  booktitle={International Conference on Humanoid Robotics},
  pages={245--252},
  year={2025}
}

@inproceedings{pan2024model,
  title={Model-based diffusion for trajectory optimization},
  author={Pan, Chaoyi and Yi, Zeji and Shi, Guanya and Qu, Guannan},
  booktitle={Advances in Neural Information Processing Systems},
  pages={57914--57943},
  year={2024}
}

@inproceedings{ho2020denoising,
  title={Denoising diffusion probabilistic models},
  author={Ho, Jonathan and Jain, Ajay and Abbeel, Pieter},
  booktitle={Advances in Neural Information Processing Systems},
  pages={6840--6851},
  year={2020}
}

@inproceedings{song2021score,
  title={Score-Based Generative Modeling through Stochastic Differential Equations},
  author={Song, Yang and Sohl-Dickstein, Jascha and Kingma, Diederik P. and Kumar, Abhishek and Ermon, Stefano and Poole, Ben},
  booktitle={International Conference on Learning Representations},
  year={2021}
}

@inproceedings{carvalho2023motion,
  title={Motion planning diffusion: Learning and planning of robot motions with diffusion models},
  author={Carvalho, Joao and Le, An T and Baierl, Mark and Koert, Dorothea and Peters, Jan},
  booktitle={International Conference on Intelligent Robots and Systems},
  pages={1916--1923},
  year={2023}
}

@inproceedings{christopher2024constrained,
  title={Constrained synthesis with projected diffusion models},
  author={Christopher, Jacob K and Baek, Stephen and Fioretto, Nando},
  booktitle={Advances in Neural Information Processing Systems},
  pages={89307--89333},
  year={2024}
}

@inproceedings{xiao2023safediffuser,
  title={Safediffuser: Safe planning with diffusion probabilistic models},
  author={Xiao, Wei and Wang, Tsun-Hsuan and Gan, Chuang and Hasani, Ramin and Lechner, Mathias and Rus, Daniela},
  booktitle={International Conference on Learning Representations},
  year={2023}
}

@inproceedings{ratliff2009chomp,
  title={{CHOMP}: Gradient optimization techniques for efficient motion planning},
  author={Ratliff, Nathan and Zucker, Matt and Bagnell, J Andrew and Srinivasa, Siddhartha},
  booktitle={International Conference on Robotics and Automation},
  pages={489--494},
  year={2009}
}

@inproceedings{peters2010relative,
  title={Relative entropy policy search},
  author={Peters, Jan and Mulling, Katharina and Altun, Yasemin},
  booktitle={AAAI Conference on Artificial Intelligence},
  pages={1607--1612},
  year={2010}
}

@inproceedings{huang2025flexible,
  title={Flexible Locomotion Learning with Diffusion Model Predictive Control},
  author={Huang, Runhan and Balim, Haldun and Yang, Heng and Du, Yilun},
  booktitle={International Conference on Robotics and Automation},
  year={2026}
}

@inproceedings{ye2025pegasusflow,
  title={{PegasusFlow}: Parallel Rolling-Denoising Score Sampling for Robot Diffusion Planner Flow Matching},
  author={Ye, Lei and Gao, Haibo and Xu, Peng and Zhang, Zhelin and Shan, Junqi and Zhang, Ao and Zhang, Wei and Zhou, Ruyi and Deng, Zongquan and Ding, Liang},
  booktitle={International Conference on Robotics and Automation},
  year={2026}
}

@inproceedings{mishra2025eb,
  title={{EB-MBD}: Emerging-Barrier Model-Based Diffusion for Safe Trajectory Optimization in Highly Constrained Environments},
  author={Mishra, Raghav and Manchester, Ian R},
  booktitle={International Conference on Robotics and Automation},
  year={2026}
}

@inproceedings{romer2024diffusion,
  title={Diffusion predictive control with constraints},
  author={R{\"o}mer, Ralf and von Rohr, Alexander and Schoellig, Angela P},
  booktitle={Learning for Dynamics {\&} Control Conference},
  pages={791--803},
  year={2025}
}

@inproceedings{cheng2025safe,
  title={Safe and stable control via lyapunov-guided diffusion models},
  author={Cheng, Xiaoyuan and Tang, Xiaohang and Yang, Yiming},
  booktitle={Advances in Neural Information Processing Systems},
  pages={160909--160948},
  year={2025}
}

@inproceedings{huang2024diffuseloco,
  title={Diffuseloco: Real-time legged locomotion control with diffusion from offline datasets},
  author={Huang, Xiaoyu and Chi, Yufeng and Wang, Ruofeng and Li, Zhongyu and Peng, Xue Bin and Shao, Sophia and Nikolic, Borivoje and Sreenath, Koushil},
  booktitle={Conference on Robot Learning},
  pages={1567--1589},
  year={2025}
}

@inproceedings{song2020denoising,
  title={Denoising diffusion implicit models},
  author={Song, Jiaming and Meng, Chenlin and Ermon, Stefano},
  booktitle={International Conference on Learning Representations},
  year={2021}
}

@inproceedings{janner2022planning,
  title={Planning with diffusion for flexible behavior synthesis},
  author={Janner, Michael and Du, Yilun and Tenenbaum, Joshua B and Levine, Sergey},
  booktitle={International Conference on Machine Learning},
  pages={9902--9915},
  year={2022}
}

@inproceedings{zhang2025constrained,
  title={Constrained diffusers for safe planning and control},
  author={Zhang, Jichen and Zhao, Liqun and Papachristodoulou, Antonis and Umenberger, Jack},
  booktitle={Advances in Neural Information Processing Systems},
  pages={39283--39316},
  year={2025}
}

@inproceedings{liang2025simultaneous,
  title={Simultaneous multi-robot motion planning with projected diffusion models},
  author={Liang, Jinhao and Christopher, Jacob K and Koenig, Sven and Fioretto, Ferdinando},
  booktitle={International Conference on Machine Learning},
  pages={37162--37180},
  year={2025}
}

@inproceedings{kyaw2026geometry,
  title={Geometry-Aware Sampling-Based Motion Planning on Riemannian Manifolds},
  author={Kyaw, Phone Thiha and Kelly, Jonathan},
  booktitle={World Symposium on the Algorithmic Foundations of Robotics},
  year={2026}
}

@inproceedings{bakker2025tamedpuma,
  title={{TamedPUMA}: safe and stable imitation learning with geometric fabrics},
  author={Bakker, Saray and P{\'e}rez-Dattari, Rodrigo and Della Santina, Cosimo and B{\"o}hmer, Wendelin and Alonso-Mora, Javier},
  booktitle={Learning for Dynamics {\&} Control Conference},
  pages={405--418},
  year={2025}
}

@inproceedings{he2026model,
  title={Model-Based Diffusion Optimal Control for Multi-Robot Motion Planning},
  author={He, Zhilin and Shaoul, Yorai and Li, Jiaoyang},
  booktitle={Robotics: Science and Systems},
  year={2026}
}

@article{mayne2000constrained,
  title={Constrained model predictive control: Stability and optimality},
  author={Mayne, David Q and Rawlings, James B and Rao, Christopher V and Scokaert, Pierre OM},
  journal={Automatica},
  volume={36},
  number={6},
  pages={789--814},
  year={2000},
  publisher={Elsevier}
}

@article{garone2017reference,
  title={Reference and command governors for systems with constraints: A survey on theory and applications},
  author={Garone, Emanuele and Di Cairano, Stefano and Kolmanovsky, Ilya V},
  journal={Automatica},
  volume={75},
  pages={306--328},
  year={2017},
  publisher={Elsevier}
}

@article{wensing2023optimization,
  title={Optimization-based control for dynamic legged robots},
  author={Wensing, Patrick M and Posa, Michael and Hu, Yue and Escande, Adrien and Mansard, Nicolas and Del Prete, Andrea},
  journal={IEEE Transactions on Robotics},
  volume={40},
  pages={43--63},
  year={2023},
  publisher={IEEE}
}

@article{sleiman2021unified,
  title={A unified mpc framework for whole-body dynamic locomotion and manipulation},
  author={Sleiman, Jean-Pierre and Farshidian, Farbod and Minniti, Maria Vittoria and Hutter, Marco},
  journal={IEEE Robotics and Automation Letters},
  volume={6},
  number={3},
  pages={4688--4695},
  year={2021},
  publisher={IEEE}
}

@article{posa2014direct,
  title={A direct method for trajectory optimization of rigid bodies through contact},
  author={Posa, Michael and Cantu, Cecilia and Tedrake, Russ},
  journal={International Journal of Robotics Research},
  volume={33},
  number={1},
  pages={69--81},
  year={2014},
  publisher={Sage Publications Sage UK: London, England}
}

@article{gu2026humanoid,
  title={Humanoid locomotion and manipulation: Current progress and challenges in control, planning, and learning},
  author={Gu, Zhaoyuan and Li, Junheng and Shen, Wenlan and Yu, Wenhao and Xie, Zhaoming and McCrory, Stephen and Cheng, Xianyi and Shamsah, Abdulaziz and Griffin, Robert and Liu, C Karen and others},
  journal={IEEE/ASME Transactions on Mechatronics},
  volume={31},
  number={2},
  pages={2300--2330},
  year={2026},
  publisher={IEEE}
}

@article{kim2025contact,
  title={Contact-implicit model predictive control: Controlling diverse quadruped motions without pre-planned contact modes or trajectories},
  author={Kim, Gijeong and Kang, Dongyun and Kim, Joon-Ha and Hong, Seungwoo and Park, Hae-Won},
  journal={International Journal of Robotics Research},
  volume={44},
  number={3},
  pages={486--510},
  year={2025},
  publisher={Sage Publications Sage UK: London, England}
}

@article{williams2017mppi,
  title={Model Predictive Path Integral Control: From Theory to Parallel Computation},
  author={Williams, Grady and Aldrich, Andrew and Theodorou, Evangelos A.},
  journal={Journal of Guidance, Control, and Dynamics},
  volume={40},
  number={2},
  pages={344--357},
  year={2017}
}

@article{liu2018convex,
  title={The convex feasible set algorithm for real time optimization in motion planning},
  author={Liu, Changliu and Lin, Chung-Yen and Tomizuka, Masayoshi},
  journal={SIAM Journal on Control and Optimization},
  volume={56},
  number={4},
  pages={2712--2733},
  year={2018},
  publisher={SIAM}
}

@article{chi2025diffusion,
  title={Diffusion policy: Visuomotor policy learning via action diffusion},
  author={Chi, Cheng and Xu, Zhenjia and Feng, Siyuan and Cousineau, Eric and Du, Yilun and Burchfiel, Benjamin and Tedrake, Russ and Song, Shuran},
  journal={International Journal of Robotics Research},
  volume={44},
  number={10-11},
  pages={1684--1704},
  year={2025},
  publisher={Sage Publications Sage UK: London, England}
}

@article{cheng2021rmpflow,
  title={Rmpflow: A geometric framework for generation of multitask motion policies},
  author={Cheng, Ching-An and Mukadam, Mustafa and Issac, Jan and Birchfield, Stan and Fox, Dieter and Boots, Byron and Ratliff, Nathan},
  journal={IEEE Transactions on Automation Science and Engineering},
  volume={18},
  number={3},
  pages={968--987},
  year={2021},
  publisher={IEEE}
}

@article{van2022geometric,
  title={Geometric fabrics: Generalizing classical mechanics to capture the physics of behavior},
  author={Van Wyk, Karl and Xie, Mandy and Li, Anqi and Rana, Muhammad Asif and Babich, Buck and Peele, Bryan and Wan, Qian and Akinola, Iretiayo and Sundaralingam, Balakumar and Fox, Dieter and others},
  journal={IEEE Robotics and Automation Letters},
  volume={7},
  number={2},
  pages={3202--3209},
  year={2022},
  publisher={IEEE}
}

@article{gilbert1995discrete,
  title={Discrete-time reference governors and the nonlinear control of systems with state and control constraints},
  author={Gilbert, Elmer G and Kolmanovsky, Ilya and Tan, Kok Tin},
  journal={International Journal of Robust and Nonlinear Control},
  volume={5},
  number={5},
  pages={487--504},
  year={1995},
  publisher={Wiley Online Library}
}

@article{fishman2023diffusion,
  title={Diffusion models for constrained domains},
  author={Fishman, Nic and Klarner, Leo and De Bortoli, Valentin and Mathieu, Emile and Hutchinson, Michael},
  journal={Transactions on Machine Learning Research},
  year={2023}
}

@article{kim2019highly,
  title={Highly dynamic quadruped locomotion via whole-body impulse control and model predictive control},
  author={Kim, Donghyun and Di Carlo, Jared and Katz, Benjamin and Bledt, Gerardo and Kim, Sangbae},
  journal={arXiv preprint arXiv:1909.06586},
  year={2019}
}

@article{liang2024multi,
  title={Multi-agent path finding in continuous spaces with projected diffusion models},
  author={Liang, Jinhao and Christopher, Jacob K and Koenig, Sven and Fioretto, Ferdinando},
  journal={arXiv preprint arXiv:2412.17993},
  year={2024}
}

@article{levine2018reinforcement,
  title={Reinforcement learning and control as probabilistic inference: Tutorial and review},
  author={Levine, Sergey},
  journal={arXiv preprint arXiv:1805.00909},
  year={2018}
}

@article{he2026motion,
  title={Motion Planning with Model-Based Diffusion via Constraint Optimization and Adaptive Scheduling},
  author={He, Zhilin and Li, Bowei and Dou, Jianlin and Zhang, Yuner and Liu, Changliu},
  journal={arXiv preprint arXiv:2607.14455},
  year={2026}
}

@IEEEtranBSTCTL{twogo_bib_format,
  ctluse_forced_etal={yes},
  ctlmax_names_forced_etal={6},
  ctlnames_show_etal={1}
}
\par\endgroup

\clearpage
\noindent\textbf{Guide to the appendix.}
\begin{itemize}
\item Appendix~\ref{app:model}: rollout derivatives,
generative identities, local geometry, and conditional feasibility results.
\item Appendix~\ref{app:env}: task models,
experimental variants, deployment, evaluation protocol, and per-seed outcomes.
\item Appendix~\ref{app:baselines}: baseline
mechanisms and comparison budgets.
\item Appendix~\ref{app:hparams}: numerical
schedules, objective weights, and parameter settings.
\end{itemize}
\appendices
\section{Method and Theoretical Details}
\label{app:model}

\subsection{Rollout Parameterization and Constraint Derivatives}

Let $s_0$ be the initial planning state and $Y$ the stacked free controls
used in the main text. For the detailed dynamics, we index physical time
from zero: $H=T-1$, $s^0=s_0$, and
$Y=(u^0,\ldots,u^{H-1})\in\mathbb R^{H d_u}$.
The trajectory associated with $Y$ is
\begin{equation}
 s^{t+1}=f(s^t,u^t),\quad
 \mathcal R_f(Y;s_0)=(s^{0:H},Y).
 \label{eq:app_rollout}
\end{equation}
The trajectory notation in the main text refers to this rollout-induced
representation. Gaussian sampling and Euclidean corrections act in the free
control coordinates; the state sequence is regenerated by $f$. Accordingly,
a density over dynamically feasible trajectories is understood with respect
to the measure induced by the control parameterization.

The task cost and a trajectory constraint pull back to
\begin{align}
 J(\mathcal R_f(Y;s_0),c)
 &=\sum_{t=0}^{H-1}\ell^t(s^t,u^t,c)+\ell_T(s^H,c),
 \label{eq:app_energy}\\
 \widetilde g_i(Y)&=g_i(\mathcal R_f(Y;s_0),c),\nonumber\\
 \nabla_Y\widetilde g_i&=D_Y\mathcal R_f(Y;s_0)^\top\nabla_\tau g_i.
 \label{eq:app_rollout_chain_rule}
\end{align}
On a differentiable branch of the dynamics, the state sensitivity satisfies
\begin{equation}
 B_{t+1,j}=f_s(s^t,u^t)B_{t,j}
 +\mathbb I[t=j]f_u(s^t,u^t),\quad B_{0,j}=0,
 \label{eq:app_rollout_sensitivity}
\end{equation}
where $B_{t,j}=\partial s^t/\partial u^j$. Thus the complete constraint
Jacobian includes the influence of a control on all subsequent states.
Piecewise models are differentiated on the current contact, clipping, and
nearest-geometry branch. The task-local approximation below retains selected
one-step effects rather than this complete horizon Jacobian. The low-level
tracking policy is outside the planning transition $f$.

\subsection{Clean Estimates and Reverse Transports}
\label{app:transport_identities}

For the control-space realization, let $U_k=\sqrt{\bar\alpha_k}Y_k$
denote the noised free-control vector, with clean endpoint $U_0=Y_0$.
The physical trajectory is obtained separately through $\mathcal R_f$.
The forward process is
$U_k=\sqrt{\bar\alpha_k}U_0+\sqrt{1-\bar\alpha_k}\epsilon$ for $k\geq1$,
with $\bar\alpha_0=1$. For a fixed noisy coordinate $U_k$, the Gaussian
kernel as a function of $U_0$ is proportional to
\begin{equation}
 \Omega_k=\mathcal N\!\left(U_k/\sqrt{\bar\alpha_k},
 [(1-\bar\alpha_k)/\bar\alpha_k]I\right).
 \label{eq:app_clean_proposal}
\end{equation}
For samples from this proposal, target-density weighting approximates the
conditional clean mean. With a soft feasibility prior, the weights are
\begin{align}
 w_{k,m}=\frac{\exp[-(J_{k,m}^{\mathrm{aug}}-J_{k,\min}^{\mathrm{aug}})/\lambda_k]}
 {\sum_{n=1}^{M}\exp[-(J_{k,n}^{\mathrm{aug}}-J_{k,\min}^{\mathrm{aug}})/\lambda_k]},\nonumber\\
 \bar Y_k=\sum_{m=1}^{M}w_{k,m}\tilde Y_{k,m}.
 \label{eq:app_reward_weight}
\end{align}
Here $\tilde Y_{k,m}$ is a clean control candidate,
$J_{k,m}^{\mathrm{aug}}$ is its rollout task cost $J$ augmented by the
feasibility penalty, and $J_{k,\min}^{\mathrm{aug}}$ is the batch minimum.
The mean $\bar Y_k$ is the uncorrected estimate in Eq.~\eqref{eq:mc_avg_2go},
and $\lambda_k$ combines batch cost scaling with the sampling temperature.
Subtracting the minimum leaves the normalized weights unchanged. The
experimental parameterization samples bounded clean controls and clips them
to the action domain; it therefore realizes a bounded finite-bank
approximation to the Gaussian construction.

Differentiating the Gaussian forward kernel gives
\begin{equation}
 \nabla_{U_k}\log p_k(U_k)
 =\frac{\sqrt{\bar\alpha_k}\,\mathbb E[U_0\mid U_k]-U_k}
 {1-\bar\alpha_k}.
 \label{eq:app_score_identity}
\end{equation}
Replacing the conditional mean by its finite-sample estimate yields MCSA.
The geometry-shaped clean estimate $\hat Y^G_{0\mid k}=Y_k+d_k^G$
determines a
mutually consistent score and noise representation:
\begin{equation}
 \hat{\mathbf S}_k^G=\frac{\sqrt{\bar\alpha_k}\hat Y^G_{0\mid k}-U_k}
 {1-\bar\alpha_k}, \quad
 \hat\epsilon^G_k=
 \frac{U_k-\sqrt{\bar\alpha_k}\hat Y^G_{0\mid k}}
 {\sqrt{1-\bar\alpha_k}}.
 \label{eq:app_shaped_estimates}
\end{equation}
Substituting $U_k=\sqrt{\bar\alpha_k}Y_k$ recovers
Eq.~\eqref{eq:geometry_estimates_tau}; the score is taken with respect to
the noisy controls $U_k$. These are algebraic representations of the shaped
estimate. The extra task
drift and metric shaping do not imply that $\hat{\mathbf S}_k^G$ is the score
of another known density. The reverse maps admit these equivalent
representations. In 2GO, the base transport uses the weighted clean estimate,
and the metric-shaped displacement is added separately in
Eq.~\eqref{eq:unified_reverse_update_k}.

The DDPM-style clean-moment realization is given in
Eq.~\eqref{eq:ddpm_reverse_2go}.
Substituting the score identity above into the cumulative-variance form
gives $\mu_k^{\mathrm{clean}}=\sqrt{\bar\alpha_{k-1}}\bar Y_k$.
Dividing by $\sqrt{\bar\alpha_{k-1}}$ yields the clean-coordinate
base transport $\Gamma_k^{\mathrm{DDPM}}(Y_k,\bar Y_k)=\bar Y_k$.

\subsection{General Geometry and Task-Local Approximations}
\label{app:geometry_proxy}

The full active geometry is formed from normalized rows of
$\nabla_Y\widetilde g_i$, with cardinality $K_{a,k}\leq K_A$ and
$A_k\in\mathbb R^{K_{a,k}\times H d_u}$. In the locomotion instances, a local
matrix $\widehat A_k\in\mathbb R^{K_{a,k}\times d_u}$ instead collects one-step
action-space directions. This approximation selects aggregate time rows by
their norm, rather than individual constraints by their weighted residual.
Up to $K_A=8$ such rows are retained and
regularized by row normalization; the resulting map is applied to each control row along
the horizon. The associated linear systems have dimension equal to the
number $K_{a,k}$ of retained rows.
This construction preserves the metric/filter interface while reducing the
cost of forming a complete horizon Jacobian.

For stepping stones, foot $\ell$ has world position
\begin{equation}
 p_{\ell,t}=\begin{bmatrix}s_t\\y_c+e_{y,t}\end{bmatrix}
 +R(e_{\psi,t})(p_\ell^{\mathrm{nom}}+r_{\ell,t}).
 \label{eq:app_quad_foot}
\end{equation}
With disk centers $c_j$ and radii $r_j$, define the local radial proxy
\begin{align}
 d_{\mathrm{rad}}(p)&=\min_j\bigl(\|p-c_j\|_2-(r_j-m_{\mathrm{stone}})\bigr),\\
 q_{\ell,t}&=[c_{\mathrm{clr}}-d_{\mathrm{rad}}(p_{\ell,t+1})]_+.
 \label{eq:app_quad_sdf}
\end{align}
The minimum applies to the complete center-radius expression. This radial
quantity is negative inside a shrunken disk; it is a directional proxy,
whereas actual foothold feasibility is measured by distance to the admissible
support union defined in Appendix~\ref{app:env}. With $L_\ell$ inserting a
two-dimensional vector into foot $\ell$'s residual-rate coordinates, the row is
\begin{equation}
 a_t^{\mathrm{loc}}=-\Delta t\sum_{\ell=1}^{4}
 \omega_{\ell,t}q_{\ell,t}L_\ell\nabla_p d_{\mathrm{rad}}(p_{\ell,t+1}).
 \label{eq:app_quad_geometry_row}
\end{equation}
We compute $\nabla_p d_{\mathrm{rad}}$ by automatic differentiation on the
active nearest-stone branch, then apply the explicit lift
$\Delta t L_\ell$. This local approximation retains residual-foot
directions and omits body and downstream-time sensitivities.
The phase weights $\omega_{\ell,t}$ instantiate the task modulation
$\omega_i(k;\zeta)$ along the current rollout. Support feet have unit geometry weight; swing
feet have weight $0.2$ before phase $0.7$, then a linear ramp to $1$ at
touchdown. These geometry weights differ from the phase-dependent rates in
the transition model.

For the humanoid, $d_{\mathrm{body}}(s)$ is the minimum proxy clearance of
the torso ellipse and two arm tips from corridor boundaries and obstacles.
The local row is
\begin{align}
 a_t^{\mathrm{loc}}&=-2q_t D_t\nabla_{u^t}
 d_{\mathrm{body}}(f(s^t,u^t)),\\
 q_t&=[c_{\mathrm{clr}}-d_{\mathrm{body}}(f(s^t,u^t))]_+.
 \label{eq:app_human_proxy}
\end{align}
The action derivative is computed by automatic differentiation through
the one-step planning transition and body-clearance function, holding
$s^t$ fixed. It does not differentiate the full horizon or the downstream
tracking policy. The diagonal $D_t$ has unit entries except
for the longitudinal command, whose weight is $0.1$ when $q_t\leq0.3$ and
$0.5$ otherwise. Both the residual and its derivative are evaluated at the
same next state. The raw rows $a_t^{\mathrm{loc}}$ are normalized before
stacking $\widehat A_k$; the hat on the main-text $\hat a_i$ denotes this
normalization.

\subsection{Metric and Tangent Maps}
\label{app:metric_maps}

Let $A\in\mathbb R^{K_{a,k}\times d}$ denote either the full matrix or its local
approximation in the corresponding coordinates. For $z\in\mathbb R^d$,
\begin{align}
 P_{\mathrm{met}}z
 &=z-A^\top\bigl(I+AA^\top+\epsilon_G I\bigr)^{-1}Az,
 \label{eq:app_metric_map}\\
 P_{\mathrm{tan}}z
 &=z-A^\top\bigl(AA^\top+\epsilon_P I\bigr)^{-1}Az.
 \label{eq:app_tangent_map}
\end{align}
For $\epsilon_G=0$, Woodbury's identity gives
$P_{\mathrm{met}}=(I+A^\top A)^{-1}$ exactly. Positive regularization
provides a numerically stable approximation; the experiments use
$\epsilon_G=\epsilon_P=10^{-6}$ and Cholesky solves. Row normalization limits
the influence of differences in derivative scale.
With $A=A_k$, these maps recover $G_k^{-1}$ (when $\epsilon_G=0$)
and $P_k$, respectively. With $A=\widehat A_k$, they are the corresponding
task-local operators.

\begin{lemma}[Row-Space Attenuation]
\label{lem:app_attenuation}
Let $s$ be a singular value of $A$. Along its right singular vector,
$P_{\mathrm{met}}$ has gain
$(1+\epsilon_G)/(1+\epsilon_G+s^2)$ and $P_{\mathrm{tan}}$ has gain
$\epsilon_P/(\epsilon_P+s^2)$. Both maps leave $\ker(A)$ unchanged.
\end{lemma}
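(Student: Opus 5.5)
The plan is to diagonalize both maps with a thin singular value decomposition of $A$. I would write $A=U\Sigma V^\top$, where $U\in\mathbb R^{K_{a,k}\times K_{a,k}}$ is orthogonal, $\Sigma$ is (rectangular) diagonal with entries $s_j\ge0$, and $V\in\mathbb R^{d\times d}$ is orthogonal with columns $v_j$. Since both \eqref{eq:app_metric_map} and \eqref{eq:app_tangent_map} have the form $z-A^\top(AA^\top+cI)^{-1}Az$, with $c=1+\epsilon_G$ and $c=\epsilon_P$ respectively, it suffices to analyze the single map $M_c:=I-A^\top(AA^\top+cI)^{-1}A$ for $c>0$ and then specialize.

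First, $AA^\top+cI=U(\Sigma\Sigma^\top+cI)U^\top$ is positive definite for $c>0$, so the inverse exists; this is where $\epsilon_P>0$ is needed, while $1+\epsilon_G>0$ holds automatically. Its inverse is $U(\Sigma\Sigma^\top+cI)^{-1}U^\top$. Substituting, I get
\[
A^\top(AA^\top+cI)^{-1}A=V\Sigma^\top(\Sigma\Sigma^\top+cI)^{-1}\Sigma V^\top .
\]
The middle factor is diagonal with entries $s_j^2/(s_j^2+c)$ for the indices carrying a singular value, and $0$ for the remaining columns of $V$, which span $\ker(A)$. Hence $M_c v_j=\bigl(1-s_j^2/(s_j^2+c)\bigr)v_j=\tfrac{c}{c+s_j^2}\,v_j$.

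Setting $c=1+\epsilon_G$ gives the gain $(1+\epsilon_G)/(1+\epsilon_G+s^2)$ for $P_{\mathrm{met}}$, and setting $c=\epsilon_P$ gives $\epsilon_P/(\epsilon_P+s^2)$ for $P_{\mathrm{tan}}$. For the kernel claim, I would argue directly: if $Az=0$, then the subtracted term $A^\top(\cdot)^{-1}Az$ vanishes, so $M_cz=z$. This also covers zero singular values, where the gain formula returns $1$, and it holds even when $d>K_{a,k}$ and $\ker(A)$ is not spanned by right singular vectors with nonzero $s$.

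The main subtlety is purely bookkeeping: the rectangular shapes of $\Sigma$ when $K_{a,k}\neq d$, and making sure the gain statement refers to right singular vectors $v_j$ (columns of $V$) rather than left ones. I would handle this by using the full SVD and noting that $\Sigma^\top(\Sigma\Sigma^\top+cI)^{-1}\Sigma$ is a $d\times d$ diagonal matrix padded with zeros. As an optional consistency check, when $\epsilon_G=0$ the gain $1/(1+s^2)$ matches the eigenvalues of $(I+A^\top A)^{-1}$, in agreement with the Woodbury identity \eqref{eq:woodbury_metric}.
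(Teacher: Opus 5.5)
Your proposal is correct and follows the same route as the paper's proof. Both substitute an SVD into the two maps, read off the scalar gain $c/(c+s^2)$ along each right singular vector (with $c=1+\epsilon_G$ or $c=\epsilon_P$), and observe that the correction term vanishes when $Az=0$. Your single map $M_c$ and your handling of the rectangular $\Sigma$ simply spell out the paper's one-line argument in more detail.
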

\begin{proof}
Substitute an SVD $A=Q\Sigma V^\top$ into
Eqs.~\eqref{eq:app_metric_map}--\eqref{eq:app_tangent_map}. Each nonzero
singular direction gives the stated scalar factor; when $Az=0$, the
correction term vanishes.
\end{proof}

The lemma establishes attenuation with respect to the represented normals.
For exact normals, their null space is a local tangent subspace. With local
proxies, the same algebra applies to the represented geometry; nonlinear
feasibility additionally depends on approximation error and the subsequent
correction. In particular, regularized tangent noise can retain a small
normal component.

\subsection{Local CFS Correction}
\label{app:cfs}

For post-transport correction, CFS constructs at $Y_k$ a convex local
approximation to the selected rollout constraints and retracts the
transported proposal $\tilde Y_{k-1}$. Allowing optional nonnegative slack
$\xi$, the metric-proximal subproblem is
\begin{align}
 \min_{Y,\,\xi\geq0}\quad
 &\tfrac12\|Y-\tilde Y_{k-1}\|_R^2
 +\tfrac12\mu_{\mathrm{slack}}\|\xi\|_2^2\nonumber\\
 \text{s.t.}\quad&\widetilde g_i(Y_k)+a_i(Y_k)^\top(Y-Y_k)
 \leq-b_i+\xi_i,\nonumber\\
 &i\in\mathcal A_k,\quad u_{\min}\leq Y\leq u_{\max}.
 \label{eq:app_cfs_target}
\end{align}
Here $a_i(Y_k)=\nabla_Y\widetilde g_i(Y_k)$,
$b_i=b_i(k;\zeta_k)$ is the main-text clearance offset, $R\succ0$, and
$\mu_{\mathrm{slack}}>0$ penalizes slack.
The linearization point $Y_k$ stays fixed while the objective minimizes
displacement from $\tilde Y_{k-1}$. With zero slack, an exact solution gives
the retraction in Eq.~\eqref{eq:cfs_retraction_qp_tau}, with control bounds
included. The corrected trajectory is $\mathcal R_f(Y^+;s_0)$.
A finite sequence of box projections and most-violated-half-space corrections
gives an inexact realization. Candidate-point relinearization in the
experimental variant is described separately in
Appendix~\ref{app:task_instantiations}.

\begin{prop}[Conditional Local Feasibility]
\label{prop:app_local_feasibility}
Suppose the selected CFS half-spaces are inner approximations of their
nonlinear feasible sets in a neighborhood of $Y_k$. If the
returned control remains in that neighborhood, satisfies those half-spaces
and the box bounds, and has zero slack, it is feasible for the corresponding
selected nonlinear constraints.
\end{prop}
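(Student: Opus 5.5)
The plan is a direct unwinding of definitions, since the proposition is conditional and every nontrivial ingredient is assumed. First I will fix what ``inner approximation'' means. Let $\mathcal N(Y_k)$ be the neighborhood in the hypothesis. For each selected $i\in\mathcal A_k$, the half-space built at $Y_k$ (the constraint row of Eq.~\eqref{eq:app_cfs_target} with $\xi_i=0$, equivalently $\mathcal C_i(Y_k;\zeta_k)$ in Eq.~\eqref{eq:cfs_halfspace_tau}) is an inner approximation on $\mathcal N(Y_k)$. This means
\begin{equation*}
\mathcal C_i(Y_k;\zeta_k)\cap\mathcal N(Y_k)\subseteq\{Y:\widetilde g_i(Y)\le 0\}\cap\mathcal N(Y_k).
\end{equation*}
Since $b_i\ge0$, the clearance offset only shrinks the half-space. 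The inclusion is therefore at least as strong with the offset as without it, and I will note this so the hypothesis can be stated for either version.

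Next I will use the zero-slack assumption to remove $\xi$. Let $Y^+$ denote the returned control. With $\xi_i=0$, the constraint rows of Eq.~\eqref{eq:app_cfs_target} reduce exactly to membership $Y^+\in\mathcal C_i(Y_k;\zeta_k)$ for every $i\in\mathcal A_k$. The box rows give $u_{\min}\le Y^+\le u_{\max}$, so $Y^+\in\mathcal U^{T-1}$. Together these place $Y^+\in\mathcal C_k(Y_k;\zeta_k)$ from Eq.~\eqref{eq:local_cfs_set_tau}.

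Now combine this with the hypothesis $Y^+\in\mathcal N(Y_k)$. The inclusion from the first step then gives $\widetilde g_i(Y^+)\le0$ for each $i\in\mathcal A_k$. By the pullback identity $\widetilde g_i(Y)=g_i(\mathcal R_f(Y;s_0),c)$ in Eq.~\eqref{eq:app_rollout_chain_rule}, the rolled-out trajectory $\mathcal R_f(Y^+;s_0)$ satisfies the selected nonlinear constraints. It also satisfies the dynamics by construction of the rollout. The box bounds supply admissibility of the controls.

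The main obstacle is not technical but one of scoping the claim correctly. The conclusion covers only the selected constraints in $\mathcal A_k$, not all $N_c$ constraints. The argument also does not use optimality of the QP at all; it uses only feasibility of the returned point. If the solver is inexact (the box-projection and half-space-correction realization), the same argument applies verbatim provided its output satisfies the stated half-spaces. I will add a remark that, with local proxies $\widehat A_k$, the ``nonlinear feasible set'' must refer to the represented constraints, consistent with the discussion after Lemma~\ref{lem:app_attenuation}.
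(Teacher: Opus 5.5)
Your proposal is correct and follows the paper's own proof: inside the neighborhood, half-space membership (to which you correctly reduce the constraint rows using zero slack) implies membership in each selected nonlinear feasible set, and intersecting these sets with the box bounds gives the claim. Your additional remarks on the offset $b_i\ge0$, the pullback through $\mathcal R_f$, and the restriction to $\mathcal A_k$ agree with how the paper scopes the result, but the argument does not need them.
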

\begin{proof}
Within the stated neighborhood, membership in each half-space implies
membership in its nonlinear feasible set. Intersecting these sets and the
box bounds gives the result.
\end{proof}

This local implication separates the geometry of the ideal subproblem from
its numerical solution. Approximate solution, nonzero slack, a change in the
nearest obstacle, or a change in contact mode requires accounting for the
corresponding residual or model error. The proposition does not assert
feasibility of omitted constraints or global convergence.

For an inexact correction, write the affine constraint model as
$\ell_i(Y)=\widetilde g_i(Y_k)+a_i(Y_k)^\top(Y-Y_k)$.
If the returned control satisfies
$\ell_i(Y^+)\leq-b_i+\epsilon_i^{\mathrm{solve}}$ and its local model error is
$|\widetilde g_i(Y^+)-\ell_i(Y^+)|\leq\epsilon_i^{\mathrm{lin}}$, then
\begin{equation}
 \widetilde g_i(Y^+)
 \leq-b_i+\epsilon_i^{\mathrm{solve}}+\epsilon_i^{\mathrm{lin}}.
 \label{eq:app_inexact_cfs_bound}
\end{equation}
This follows by adding the model error to the affine residual bound.
The solve residual includes any allowed slack. A margin
$b_i\geq\epsilon_i^{\mathrm{solve}}+\epsilon_i^{\mathrm{lin}}$
therefore suffices for that selected constraint, provided the returned
control remains in the neighborhood where the model-error bound holds.
These are conditional error bounds, not assumed numerical guarantees of
every finite correction.

\subsection{Conditional Transfer to Execution}
\label{app:execution_conditions}
\label{app:tracking_bound}

Let $z^t_{\mathrm{ref}}$ collect the geometric features of the final
reference delivered to the tracker, after adaptation, filtering, and
saturation, and $z^t_{\mathrm{exec}}$ the corresponding executed features.
Assume the tracking interface satisfies
$\|z^t_{\mathrm{exec}}-z^t_{\mathrm{ref}}\|\leq\delta$ over the considered
segment, and let each selected body-proxy constraint $g_j$ be
$L_j$-Lipschitz on that neighborhood. Then
\begin{equation}
 g_j(z^t_{\mathrm{exec}})
 \leq g_j(z^t_{\mathrm{ref}})+L_j\delta\leq0
 \label{eq:app_execution_transfer}
\end{equation}
whenever $g_j(z^t_{\mathrm{ref}})\leq-L_j\delta$ holds for this final
reference. The feature vector must
include every base or posture coordinate entering the chosen clearance
model. This is a conditional clearance implication for the selected
geometric constraints; recursive feasibility
and arrival at the original goal require additional tracking and progress
conditions. The empirical margin calibration in Appendix~\ref{app:deployment}
estimates observed clearance degradation; it neither proves the assumed
uniform tracking bound nor establishes that its margin covers $L_j\delta$.

\section{Environments and Evaluation}
\label{app:env}

In the task models below, $x_t\equiv s^t$ denotes the planning state,
$u_t\equiv u^t$ its control, and $H=T-1$ the number of control intervals.
Thus $d_x$ in Table~\ref{tab:app_env_summary} is the dimension of the
main-text state $s^t$. Scalar $x_t,y_t$ in the humanoid planar equations
denote position components.
The task models are followed by the experimental variants and deployment
interfaces; the final three subsections define the metrics and report
supplementary results.

\begin{table*}[tb!]
\centering
\scriptsize
\setlength{\tabcolsep}{4pt}
\begin{tabular}{lccccp{8.4cm}}
\toprule
Task & $d_x$ & $d_u$ & $H$ & $\Delta t$ & State and action summary \\
\midrule
Stepping stones & 16 & 12 & 16 & 1.0 &
$x=[s,e_y,e_\psi,v_s,v_y,\omega,r_{FL}^{\top},r_{FR}^{\top},r_{RL}^{\top},r_{RR}^{\top},m,\vartheta]$;
$u=[\Delta s_b,\Delta y_b,\dot\psi_b,\dot r_{FL}^{\top},\dot r_{FR}^{\top},\dot r_{RL}^{\top},\dot r_{RR}^{\top},\dot\vartheta]$. \\
Humanoid corridor & 14 & 9 & 50 & 0.25 &
$x=[x,y,\psi,h,\psi_T,a_L,a_R,p_L,p_R,v_x,v_y,\omega,\dot h,\dot\psi_T]$;
$u=[v_x^b,v_y^b,\omega_c,\dot h_c,\dot\psi_{T,c},\dot a_L,\dot a_R,\dot p_L,\dot p_R]$. \\
\bottomrule
\end{tabular}
\caption{Environment dimensions and 2GO planning discretization.
Humanoid baselines use $H=50$. Humanoid velocity actions are body-frame commands.}
\label{tab:app_env_summary}
\end{table*}

\subsection{Quadruped Stepping Stones}

\textbf{Hybrid reference transition.}
The state is $x=(s,e_y,e_\psi,v_s,v_y,\omega,r_1,\ldots,r_4,m,\vartheta)$,
where each $r_\ell\in\mathbb R^2$ is a body-frame foot residual. Footholds
are induced by Eq.~\eqref{eq:app_quad_foot}; stone indices are not independent
discrete optimization variables. The diagonal support mode $m$ follows a
fixed alternation, while continuous residual and phase controls determine
when admissible landing configurations are reached.

Let $a_s,a_y,a_\psi$ be bounded body and yaw commands, and let
$b_\ell(m,\vartheta)$ scale the bounded residual-rate command. It is $0.06$
on support legs, $1$ on swing legs before $\vartheta=0.7$, and
$0.2+0.8(1-\vartheta)/0.3$ on swing legs thereafter. The provisional update is
\begin{align}
 \widetilde s&=\max(s-0.005,s+\Delta t\,a_s),\\
 \widetilde e_y&=\operatorname{clip}(e_y+\Delta t\,a_y,-0.08,0.08),\\
 e_\psi^+&=\operatorname{clip}(\operatorname{wrap}(e_\psi+\Delta t\,a_\psi),-0.22,0.22),\\
 \widetilde r_\ell&=\operatorname{clip}
 (r_\ell+\Delta t\,b_\ell\dot r_\ell,-0.05,0.05).
 \label{eq:app_quad_transition}
\end{align}
The body command is first restricted to a norm ball of radius $0.22$ and
the component limits below. Each raw residual-rate vector has norm at most
$0.06$. Let $\widetilde p_b=(\widetilde s,y_c+\widetilde e_y)$ and let
$c_{\mathrm{sup}}$ be the midpoint of the two support feet decoded from this
provisional state. The final body position is obtained from
\begin{equation}
 p_b^+=c_{\mathrm{sup}}+\operatorname{clip}_{b_{\mathrm{sup}}}
 (\widetilde p_b-c_{\mathrm{sup}}),
 \label{eq:app_quad_support_update}
\end{equation}
where $b_{\mathrm{sup}}=(0.14,0.08)$ gives symmetric component bounds,
followed by the same longitudinal and lateral bounds. Stored velocity
coordinates are the bounded finite differences of the provisional body and
yaw updates. Pairwise residual adjustments separate feet closer than $0.10$ m
while retaining the residual bounds.

The admissible support set is the union $\mathcal S_{\mathrm{foot}}$ of
margin-shrunken stone disks and support rectangles. Its violation is
\begin{align}
 d_{\mathrm{disk}}(p)&=\min_j[\|p-c_j\|-(r_j-m_{\mathrm{stone}})]_+,\nonumber\\
 d_{\mathrm{plat}}(p)&=\min_q\operatorname{dist}(p,\mathcal P_q^-),\nonumber\\
 v_{\mathrm{foot}}(p)&=\min\{d_{\mathrm{disk}}(p),d_{\mathrm{plat}}(p)\}.
 \label{eq:app_quad_support_distance}
\end{align}
With $v_{\mathrm{sw}}$ the maximum violation of the current swing pair and
$\widetilde\vartheta=\operatorname{clip}(\vartheta+\Delta t\dot\vartheta,0,1)$,
the hybrid reset is
\begin{equation}
 (m^+,\vartheta^+)=
 \begin{cases}
 (1-m,0),&\widetilde\vartheta=1,\ v_{\mathrm{sw}}\leq0.04,\\
 (m,0.99),&\widetilde\vartheta=1,\ v_{\mathrm{sw}}>0.04,\\
 (m,\widetilde\vartheta),&\widetilde\vartheta<1.
 \end{cases}
 \label{eq:app_quad_contact_reset}
\end{equation}
At a successful switch, residuals of the previous support pair reset to zero.
This model describes body--foothold references; the deployment gait is
constructed by the execution interface below.

The course starts at $(-1.25,0)$ and ends at $(1.25,0)$. Nominal foot offsets
are $x=0.18$ for front feet, $x=-0.18$ for rear feet, and lateral offsets
$\pm0.15$. Mode $m=0$ supports $FL$ and $RR$; mode $m=1$ supports $FR$ and
$RL$.
The S1 generator draws stone radii from
$[0.076,0.088]$ m and nominal lane gaps from $[0.22,0.28]$ m; S2
uses $[0.068,0.080]$ m and $[0.26,0.32]$ m. The run
configuration multiplies lane gaps by $0.97$ and evaluates foot clearance with
a $0.032$ m stone-interior margin.

In addition to the bounds in the transition above, stored forward velocity
is capped at $0.28$, lateral velocity at $\pm0.10$, and yaw rate at
$\pm0.22$. The phase-rate limit is $0.45$ and the kinematic reach bound is
$0.35$.

The planning cost is a weighted sum of terminal goal and progress,
time-guide, lateral and velocity regularization, heading, support-center,
terminal-foot, smoothness, foothold, support, geometry, and step-bound terms.
Table~\ref{tab:app_energy_weights} reports the explicit multipliers.

\subsection{Humanoid Corridor}

Longitudinal and lateral commands are expressed in the body frame. The planar
transition is
\begin{align}
 x_{t+1}&=x_t+\Delta t(v_{x,t}^b\cos\psi_t-v_{y,t}^b\sin\psi_t),\\
 y_{t+1}&=y_t+\Delta t(v_{x,t}^b\sin\psi_t+v_{y,t}^b\cos\psi_t),\\
 \psi_{t+1}&=\operatorname{wrap}(\psi_t+\Delta t\,\omega_{c,t}).
 \label{eq:app_human_dynamics}
\end{align}
Writing $q=(h,\psi_T,a_L,a_R,p_L,p_R)$, the remaining posture transition and
stored command coordinates are
\begin{align}
 q_{t+1}&=\operatorname{clip}(q_t+\Delta t\,u_{q,t},q_{\min},q_{\max}),\\
 (v_x,v_y,\omega,\dot h,\dot\psi_T)_{t+1}
 &=(v_x^b,v_y^b,\omega_c,\dot h_c,\dot\psi_{T,c})_t.
 \label{eq:app_human_posture_dynamics}
\end{align}
The stored planar velocity components are body-frame commands. Actions are
clipped before this update. The reference model integrates base and posture
commands rather than the joint-level closed-loop dynamics. Bounds are $v_x^b,v_y^b\in[-0.3,0.3]$, height rate
$[-0.3,0.3]$, torso-yaw rate $[-0.4,0.4]$, and tuck/posture rates
$[-1,1]$. Yaw-rate bounds are $[-0.3,0.3]$ in Zone B and $[-0.5,0.5]$
elsewhere. Height is restricted to $[0.55,0.85]$, torso yaw to
$[-1.57,1.57]$, tuck to $[0,1]$, and arm posture to $[-1,1]$.

The body proxy has torso ellipse semi-axes $(0.20,0.11)$, an additional
$0.03$ crouch allowance, nominal height $0.75$, body half-height $0.25$, and
arm-tip radius $0.05$. Arm reach decreases from $0.35$ when open to $0.10$
when tucked. The geometry-aware reverse step uses the minimum clearance of the
torso ellipse and the two arm tips. CFS constructs a denser five-point set
containing the torso, both tips, and arm midpoints. Obstacle avoidance is
handled by feasibility terms, geometry, and the candidate-correction variant
specified below; the humanoid
obstacle contribution to the scalar planning cost is zero.

\begin{table}[H]
\centering
\scriptsize
\setlength{\tabcolsep}{3pt}
\begin{tabular}{@{}clp{5.2cm}@{}}
\toprule
Zone & Start $\rightarrow$ goal & Constraint geometry \\
\midrule
A & $(0.5,0)\rightarrow(3.5,0)$ & Centered thin wall,
$x\in[1.7,2.0]$, $y\in[-0.05,0.05]$. \\
B & $(1.0,0.1)\rightarrow(3.0,0.25)$ & Connected U-wall with
top segment $x\in[1.8,1.95]$, $y\in[-0.16,0.8]$. \\
C & $(0.5,0)\rightarrow(3.5,0)$ & Tapered squeeze over
$x\in[1.5,2.5]$ with a $0.60$ main gap and quarter-circle entry and exit. \\
D & $(0.5,0)\rightarrow(3.5,0)$ & Two full-height planning obstacles
centered at $(2.0,0.25)$ and $(2.5,-0.25)$, each with radius $0.06$. \\
\bottomrule
\end{tabular}
\caption{Humanoid corridor geometry. All corridors are 4.0 m
long.}
\label{tab:app_zones}
\end{table}

\subsection{Task Instantiations of the Generative Framework}
\label{app:task_instantiations}
\label{app:transport_instance}

Both tasks use local geometry, bounded-control sampling, and adaptive
noise. The reported quadruped experiments use the geometry-aware update
without CFS; the humanoid experiments use a candidate-correction variant.
This variant applies CFS before the weighted clean estimate is formed,
with post-transport retraction disabled. Only candidate correction is
enabled in Algorithm~\ref{alg:twogo}.

For candidate correction, 32 trajectories are selected: half from the
highest-violation quarter and half uniformly from the candidate bank. One
local correction pass produces $\tilde Y_{k,m}^{+}$ for selected candidates; the
others remain unchanged. The original evaluation weights are retained, giving
\begin{equation}
 \bar Y_k^{\mathrm{corr}}=\sum_{m=1}^{M}w_{k,m}\tilde Y_{k,m}^{+}.
 \label{eq:app_corrected_bank}
\end{equation}
This is a finite-bank correction of the weighted clean estimate. Because the
points have moved while their weights are retained, it is not an unchanged
importance-sampling identity for the original proposal. Correction residuals
also contribute to the local geometry with blend coefficient $0.3$. The
humanoid correction retains at most eight half-spaces and uses $R=I$
with explicit slack disabled. Task settings are listed in
Table~\ref{tab:app_twogo_task_params}.

For this variant, selected derivatives are recomputed at each candidate
being corrected. The half-spaces share the physical constraint model with
the reverse geometry but need not equal the compressed $\widehat A_k$.
The humanoid safety-point Jacobian $D_u p_\ell(f(x_t,u_t))$ is obtained by
forward-mode automatic differentiation with $x_t$ fixed. Its transpose
maps obstacle normals to control-space half-space normals. Corridor-wall
normals are analytic; other obstacle normals use automatic differentiation
on their active branches. These local Jacobians omit cross-time effects.

The transport coefficients are those in
Eqs.~\eqref{eq:clean_base_main}--\eqref{eq:clean_interpolation_main}, using
$\bar Y_k^{\mathrm{corr}}$ for the main-text mean in the humanoid and
uncorrected $\bar Y_k$ in the quadruped. The same estimate supplies the displacement
$d_k^0$ and the base transport. With no post-transport retraction, the task
instances return $Y_{k-1}=\operatorname{clip}(\tilde Y_{k-1},u_{\min},u_{\max})$
after the unified reverse update. Numerical schedules are specified in
Appendix~\ref{app:hparams}.

\subsection{Deployment Interfaces}
\label{app:deployment}

\textbf{Interface scope and provenance.}
We define two customized reference-adaptation schemes following the
reference-governor principle~\cite{garone2017reference}.
These works motivate reference supervision of an already controlled system;
the schemes below are task-specific adaptations of that principle, not
direct implementations of a particular cited governor. Recursive-feasibility
and convergence guarantees from those formulations are not assumed here.
The quadruped scheme constructs a gait and reachable footholds along the
planned body route; the humanoid scheme adjusts commands under local
clearance and command constraints.

\textbf{Quadruped execution.}
The quadruped interface resamples the planned body path, synthesizes a walking
gait with leg order $FL,RR,FR,RL$, holds stance anchors, and projects nominal
touchdown targets into the interiors of reachable stones or platforms. It
therefore performs gait and foothold construction in addition to temporal
smoothing. Support-polygon membership is evaluated as a diagnostic; it is not imposed as
an additional optimization constraint in this execution interface.

Execution uses the MuJoCo Go2 model with a contact-dependent gait state
machine, task-space foot tracking, and joint-space PD regularization.
Foot and joint errors generate actuator torques. A supplementary PD
wrench acts directly on the floating base to stabilize its pose and
velocity. This assistance is enabled for all compared methods; the
quadruped eSSR therefore characterizes this assisted simulation model.

\textbf{Humanoid reference adaptation.}
The execution interface lowers the reference pace and holds the terminal
reference. Position feedback augments the reference velocity with a bounded
planar correction. Numerical settings are listed in
Table~\ref{tab:app_governor_params}.

The reference governor selects a reference position/yaw increment $d$
by approximately solving the local constrained adaptation
\begin{equation}
 \min_{d\in\mathcal D_t}\quad
 q\|d-d_{\mathrm{ref}}\|_2^2+\eta\|d\|_2^2,
 \label{eq:app_governor_obj}
\end{equation}
where $\mathcal D_t$ intersects reference-speed, posture, and corridor-safety boxes
with an active linearized body-clearance half-space. The unconstrained preference is
$q d_{\mathrm{ref}}/(q+\eta)$. A bounded alternating correction uses
box/half-space sweeps, with spatial derivatives approximated by finite
differences. The active half-space corrects world-frame lateral position,
with planned posture held fixed. If the box intersection is empty, the
adapter prioritizes finite safety-box bounds and can relax the nominal
speed and posture bounds.
The execution interface uses the adapter's translational output, filtered
and then converted to the measured body frame;
yaw feedforward is computed separately from the retimed reference, rather
than taken from the adapter's yaw output.
The downstream policy then clips each planar velocity component and yaw rate.
Thus the adapter's nominal box and the policy input limits are distinct.
Finite correction can leave clearance residuals. Clearance is not rechecked
after filtering and saturation; executed body-proxy clearance is evaluated
a posteriori.

The humanoid follower uses the SPARK-derived sport-mode policy interface.
Learned leg targets and upper-body commands from a separate PD path are
merged, followed by joint-limit clipping.

\textbf{Planning--execution compatibility.}
Table~\ref{tab:app_execution_compatibility} distinguishes constraints in the
planning model, reference adapter, and tracker. Planning bounds and
control-magnitude penalties limit nominal tracking demands; retiming and
filtering further moderate the executed commands. The reference-speed
box bounds position increments divided by the control period, not changes
between successive velocity commands. No explicit locomotion-command slew
bound is imposed. The governor and low-level tracker are downstream of the
14-state planning model and are not part of the transition in
Eq.~\eqref{eq:app_human_dynamics}. These measures do not establish that every
planned reference is admissible for execution; the clearance-transfer
conditions are stated in Eq.~\eqref{eq:app_execution_transfer}.

\textbf{Calibration and hardware interface.}
For 2GO, the empirical clearance margin is calibrated as
\begin{equation}
 \epsilon=\max(0,d_{\min}^{\mathrm{plan}}-d_{\min}^{\mathrm{exec}}),
 \quad m_{\mathrm{track}}=1.3\epsilon.
 \label{eq:app_tracking_margin}
\end{equation}

Here, $d_{\min}^{\mathrm{plan}}$ and $d_{\min}^{\mathrm{exec}}$ are the
minimum body-proxy clearances of the selected model trajectory and its
ungoverned calibration rollout. Their difference is a trajectory-level
statistic, not a pointwise tracking-error bound.
The executed body proxy uses measured base pose and pelvis height
together with commanded arm and torso posture; it evaluates this geometric
representation rather than every measured robot link. Calibrated margins
are attached to individual references; the baseline executions use the
common fallback margin in Table~\ref{tab:app_governor_params}.
The stored $m_{\mathrm{track}}$ tightens the execution-side corridor wall
box by adding to the modeled torso half-width and sets the clearance
target of the local obstacle half-space. It is not added to the planner's
CFS margin. The half-space activates below clearance
$m_{\mathrm{track}}+b_{\mathrm{act}}$, where $b_{\mathrm{act}}$ is the
activation band; its correction target remains $m_{\mathrm{track}}$.
The lookahead distance selects where clearance is queried, while the
angular posture margin $m_\phi$ tightens a separate lateral-speed proxy.
Neither is an additional distance margin.

The G1 hardware interface uses Unitree's built-in locomotion controller
for planar velocity, yaw-rate, and standing-height requests. Torso and arm
coordinates are mapped to waist and arm joint-position references.
An external base-pose estimate is transformed into the planning scene
frame; this feedback anchors local clearance adaptation and the conversion
of planar velocity to the robot frame. Reference retiming and low-pass
filtering act upstream of the locomotion commands. The tracking margin is
supplied with the reference or set to a fallback value, separately from the
simulation-based calibration in Eq.~\eqref{eq:app_tracking_margin}.
These measures address the reduced planning model's mismatch with the
hardware tracker. The qualitative demonstrations do not establish a
uniform hardware tracking-error bound.

\begin{table}[tb!]
\centering
\scriptsize
\setlength{\tabcolsep}{3pt}
\begin{tabular}{@{}p{1.8cm}p{\dimexpr\columnwidth-1.8cm-2\tabcolsep\relax}@{}}
\toprule
Component & Value or behavior \\
\midrule
Quadruped & Interior clearance $0.07$, stride $0.22$, maximum swing $0.35$,
touchdown tolerance $0.035$, simulation step $0.01$; leg order
$FL,RR,FR,RL$; phase length 36 simulation steps. \\
Base assistance & Planar PD gains $(140,24)$, height gains $(260,34)$,
roll/pitch gains $(95,12)$, yaw gains $(35,6)$; direct base wrench
scaled by the support-contact count. \\
Evaluation timing & Interface 50 Hz; simulation step $0.002$ s; warmup $0.8$ s;
reference speed factor $0.5$; terminal hold 5 s. \\
Reference feedback & Planar gain $1.5$, correction cap $0.25$ m/s;
yaw feedback gain $0$; planar low-pass weight $0.3$. \\
Adapter speed box & World-$x$ $[-0.5,1.0]$ and world-$y$ $[-0.55,0.55]$ m/s;
local step $0.02$ s. \\
Command slew & No explicit bound on successive locomotion-command
differences. \\
Posture proxy & Pelvis-roll proxy gain $k_\phi=0.08$ rad/(m/s),
$\phi_{\max}=0.30$ rad, $m_\phi=0.05$ rad; lateral-speed bound
$|v_y|\leq(\phi_{\max}-m_\phi)/k_\phi$. \\
Policy clipping & Body-frame planar velocity components $[-0.3,0.3]$ m/s and
yaw rate $[-0.3,0.3]$ rad/s, applied at the policy input. \\
Local correction & $q=1$, $\eta=0$; 12 box/half-space correction sweeps. \\
Clearance queries & Activation band $b_{\mathrm{act}}=0.08$,
finite-difference step $10^{-3}$, lookahead $0.15$. \\
2GO margins & Seed-dependent $m_{\mathrm{track}}$:
A $[0.0237,0.1002]$, B $[0.0795,0.1698]$,
C $[0,0.0752]$, D $[0.0013,0.0896]$. \\
Baseline margin & Common fallback $m_{\mathrm{track}}=0.02$. \\
\bottomrule
\end{tabular}
\caption{Simulation execution settings. Distances are in meters;
adapter bounds and final policy limits are listed separately.}
\label{tab:app_governor_params}
\end{table}

\begin{table*}[t]
\centering
\footnotesize
\setlength{\tabcolsep}{4pt}
\renewcommand{\arraystretch}{1.12}
\begin{tabular}{@{}p{1.7cm}p{4.85cm}p{4.85cm}p{4.85cm}@{}}
\toprule
Quantity & Planning model & Reference adapter & Tracker / policy \\
\midrule
Command magnitude &
Humanoid body-frame velocity and posture-rate bounds; control-magnitude
penalties. &
World-frame position-increment bounds specify reference speeds;
safety-box fallback can relax nominal bounds. &
Body-frame planar velocity and separately generated yaw-rate commands are
clipped at the policy input. \\
Command slew &
Control-magnitude penalties do not constrain successive-command
differences. &
Retiming and low-pass filtering moderate command variation; the
reference-speed box is not a slew bound. &
Input saturation bounds magnitude; no explicit command-slew limit is
imposed. \\
Posture &
Bounds on height, torso/arm posture, and their command rates. &
Planned posture enters body clearance; a pelvis-roll proxy adds a
lateral-speed bound, not a measured-roll constraint. &
Upper-body targets follow a separate mapping/PD path; the learned policy
produces leg targets, followed by joint-limit clipping. \\
Clearance &
Task geometry and planning/CFS margins are evaluated along the model
rollout. &
The corridor wall box and local lateral body-clearance half-space use
$m_{\mathrm{track}}$; finite correction can retain residuals. &
Filtering and saturation follow adaptation without a final-reference
clearance recheck; executed proxy clearance is evaluated afterward. \\
Quadruped footholds and support &
Foot-placement geometry, support-offset bounds, and support-proximity
cost. &
Gait construction, stance anchors, and reachable touchdown projection;
support-polygon membership is diagnostic. &
Foot tracking and joint PD execute the gait with the common supplementary
floating-base wrench. \\
\bottomrule
\end{tabular}
\caption{Planning and execution constraints. Numerical execution settings
are listed in Table~\ref{tab:app_governor_params}.}
\label{tab:app_execution_compatibility}
\end{table*}

\subsection{Metric Definitions and Aggregation}
\label{app:metrics}

Each task and method is evaluated with five seeds. For seed $j$, let $N_j$
be the number of returned planning candidates and let $\mathsf P_{j,i}$ be
the planning success event. We report
\begin{equation}
 \mathrm{pSSR}=\frac15\sum_{j=0}^{4}
 \frac1{N_j}\sum_{i=1}^{N_j}\mathbb I[\mathsf P_{j,i}],
 \quad
 \mathrm{eSSR}=\frac15\sum_{j=0}^{4}\mathbb I[\mathsf E_j],
 \label{eq:app_pssr}
\end{equation}
where $\mathsf E_j$ evaluates the selected, adapted execution for that seed.
All five executions enter the denominator, including trials whose planner
returned no successful candidate.

For stepping stones, $\mathsf P$ requires terminal task-goal error at most
$0.20$ m, maximum mean-over-feet foothold violation at most $0.05$ m,
step-bound violation at most $0.03$ m, and terminal foot error at most
$0.20$ m. Its $pV$ is the selected planning candidate's mean foothold
distance, converted to centimeters. Execution requires the actual final base
position to be within $0.20$ m of the fixed task goal $(1.25,0)$ and the
maximum support-region distance of all recorded swing-end foot positions to
be at most $0.05$ m. The foot statistic includes attempts without detected
touchdown; it evaluates geometric placement rather than asserting contact
establishment. Execution distances use the physical stones and support
platforms without the planning interior margin.

For humanoid planning, $\mathsf P$ requires nonnegative body-proxy clearance
throughout the model rollout and terminal task-goal error at most $0.20$ m.
The occupancy metric $pV$ samples the base polyline at up to 150 longitudinal
locations with eight perimeter points at radius $0.20$ m. It is the fraction
of these band samples inside obstacle geometry, averaged across candidates
and expressed as a percentage. It is distinct from the body-proxy success
predicate, so zero violation under one measure need not imply zero under the
other.

Humanoid execution success requires no fall, actual final base position
within $0.20$ m of the fixed scene goal, and nonnegative minimum body-proxy
SDF over the evaluated rollout. The proxy uses executed base pose and pelvis
height with commanded upper-body posture. It assesses the modeled body
geometry rather than full measured-link collision. Warmup precedes the
evaluated rollout. Warmup compensation translates the reference but not the
scene geometry or task goal; goal error is therefore measured in the original
world frame, separately from error to the selected reference endpoint.

The quadruped deployment interface chooses among adapted candidates.
Humanoid 2GO additionally screens candidates using ungoverned simulated
execution clearance, calibrates the selected reference's margin from that
rollout, and then applies the governed execution; the compared humanoid
baselines retain their planning selection and use the common fallback
margin. Screening uses clearance and non-fall preferences; original-goal
completion is assessed in the final execution by $\mathsf E_j$ above.
Calibration precedes the retimed, governed evaluation, so it does not
measure tracking error under the complete final command pipeline.
Thus eSSR evaluates the
reported selection-and-adaptation pipelines, including their differences,
rather than an equal-cost downstream test of individual planner outputs.
Clearance calibration is part of this pipeline and is not treated as an
independent generalization test.

Means and population standard deviations for $pV$ and $pT$ are computed over
the five seed-level values. Planning time covers the complete multi-candidate
solve, but excludes later execution screening, margin calibration, and
deployment. All results use the task-level predicates above. In particular,
arrival at a selected reference endpoint is not substituted for reaching the
task goal.

\subsection{Paired Candidate-CFS Ablation}
\label{app:cfs_ablation}

We isolate the candidate-correction variant described in
Appendix~\ref{app:task_instantiations} on the
challenging Zone B task. Both variants use 1024 samples per reverse update,
100 updates, ten returned candidates, and seeds 0--4. The only algorithmic
change is enabling or disabling candidate CFS; post-transport retraction remains
disabled. This supplementary study runs on 8 CPU
cores. Its timings are reported separately from the GPU main
comparison.

Table~\ref{tab:app_cfs_ablation} shows that candidate correction increases
mean pSSR from $0.30$ to $0.56$, with a positive paired difference of
$0.20$--$0.30$ for every seed. Mean CPU planning time increases from
$42.65$ to $74.13$ s. The band-occupancy diagnostic also increases, from
$1.275\%$ to $1.665\%$, illustrating its distinction from the body-proxy
success predicate. This comparison supports a success--computation trade-off
for candidate correction in this setting; it does not test final-proposal
retraction, establish a uniform reduction of every violation metric, or
demonstrate an execution-success gain.
\begin{table*}[t]
\centering
\footnotesize
\setlength{\tabcolsep}{9pt}
\begin{tabular}{lrrrrrr}
\toprule
 & \multicolumn{2}{c}{pSSR} & \multicolumn{2}{c}{$pV$ (\%)} & \multicolumn{2}{c}{$pT$ (s)} \\
Seed & CFS on & CFS off & CFS on & CFS off & CFS on & CFS off \\
\midrule
0 & 0.60 & 0.30 & 2.01 & 1.25 & 111.15 & 64.33 \\
1 & 0.70 & 0.40 & 0.50 & 0.67 & 77.40 & 34.31 \\
2 & 0.40 & 0.20 & 1.93 & 1.76 & 58.54 & 27.84 \\
3 & 0.60 & 0.30 & 1.44 & 1.17 & 60.49 & 49.19 \\
4 & 0.50 & 0.30 & 2.45 & 1.53 & 63.07 & 37.58 \\
\midrule
Mean & $0.56\pm0.10$ & $0.30\pm0.06$ & $1.67\pm0.66$ & $1.27\pm0.37$ & $74.13\pm19.66$ & $42.65\pm12.86$ \\
\bottomrule
\end{tabular}
\caption{Paired ablation of the candidate-correction variant in Zone B. Both variants use 1024 samples, 100 reverse updates, ten returned candidates, and the same five seeds. Post-transport retraction is disabled in both. $pV$ is band occupancy (\%); $pT$ is CPU planning time (s). The final row gives mean $\pm$ population standard deviation.}
\label{tab:app_cfs_ablation}
\end{table*}

\subsection{Per-Seed Outcomes}
The following tables report per-seed planning outcomes for all five methods
and six tasks, with execution details where reported. Metrics and aggregation
follow Appendix~\ref{app:metrics}; aggregate eSSR values are reported in
Table~\ref{tab:main_results}. Final goal errors and geometric residuals
separate goal completion from clearance preservation.
\begin{table*}[p]
\centering
\footnotesize
\setlength{\tabcolsep}{6pt}
\renewcommand{\arraystretch}{1.08}
\begin{tabular}{llrrrrrrc}
\toprule
Method & Seed & pSSR & $pV$ & $pT$ & Goal error (m) & Foot distance (m) & Fall & Success \\
\midrule
MPPI & 0 & 1.00 & 2.232 & 3.538 & 0.0622 & 0.0521 & --- & 0 \\
 & 1 & 0.00 & 2.236 & 2.750 & 0.0792 & 0.0526 & --- & 0 \\
 & 2 & 1.00 & 2.300 & 2.626 & 0.1364 & 0.0477 & --- & 1 \\
 & 3 & 0.00 & 1.932 & 2.652 & 0.0974 & 0.0615 & --- & 0 \\
 & 4 & 0.00 & 2.314 & 2.761 & 0.0837 & 0.0520 & --- & 0 \\
\midrule
MBD & 0 & 0.30 & 2.736 & 17.594 & 0.0512 & 0.0454 & --- & 1 \\
 & 1 & 0.30 & 2.972 & 17.730 & 0.0811 & 0.0442 & --- & 1 \\
 & 2 & 0.20 & 3.782 & 17.730 & 0.1033 & 0.0415 & --- & 1 \\
 & 3 & 0.10 & 2.964 & 19.236 & 0.1013 & 0.0631 & --- & 0 \\
 & 4 & 0.00 & 3.276 & 16.957 & 0.0613 & 0.0679 & --- & 0 \\
\midrule
MDOC & 0 & 1.00 & 0.998 & 18.472 & 0.0480 & 0.0483 & --- & 1 \\
 & 1 & 0.90 & 0.879 & 18.086 & 0.0623 & 0.0494 & --- & 1 \\
 & 2 & 0.90 & 1.044 & 18.520 & 0.0342 & 0.0422 & --- & 1 \\
 & 3 & 0.80 & 0.950 & 18.283 & 0.0368 & 0.0467 & --- & 1 \\
 & 4 & 0.80 & 1.107 & 18.507 & 0.0941 & 0.0545 & --- & 0 \\
\midrule
MD-COAS & 0 & 0.30 & 0.897 & 308.795 & 0.5688 & 0.0535 & --- & 0 \\
 & 1 & 0.30 & 0.503 & 250.309 & 0.5226 & 0.0539 & --- & 0 \\
 & 2 & 0.40 & 0.706 & 257.533 & 0.1212 & 0.0539 & --- & 0 \\
 & 3 & 0.20 & 1.035 & 282.719 & 0.1009 & 0.0599 & --- & 0 \\
 & 4 & 0.40 & 1.116 & 341.745 & 0.7527 & 0.0600 & --- & 0 \\
\midrule
2GO & 0 & 1.00 & 0.340 & 23.016 & 0.0353 & 0.0484 & --- & 1 \\
 & 1 & 1.00 & 0.347 & 20.859 & 0.1109 & 0.0412 & --- & 1 \\
 & 2 & 1.00 & 0.504 & 21.605 & 0.0895 & 0.0428 & --- & 1 \\
 & 3 & 0.90 & 0.450 & 22.027 & 0.0416 & 0.0517 & --- & 0 \\
 & 4 & 1.00 & 0.731 & 20.060 & 0.0786 & 0.0499 & --- & 1 \\
\bottomrule
\end{tabular}
\caption{Per-seed outcomes for S1. $pV$ is in cm and $pT$ in seconds. Goal error is measured from the executed final base position to the fixed task goal. Foot distance is the maximum support-region distance of recorded swing-end foot positions, including unsuccessful touchdown attempts. Success is the binary event contributing to eSSR; all five trials are included.}
\label{tab:seed_s1}
\end{table*}

\begin{table*}[p]
\centering
\footnotesize
\setlength{\tabcolsep}{6pt}
\renewcommand{\arraystretch}{1.08}
\begin{tabular}{llrrrrrrc}
\toprule
Method & Seed & pSSR & $pV$ & $pT$ & Goal error (m) & Foot distance (m) & Fall & Success \\
\midrule
MPPI & 0 & 0.00 & 2.719 & 2.647 & 0.0713 & 0.0533 & --- & 0 \\
 & 1 & 0.00 & 2.559 & 2.693 & 0.0823 & 0.0510 & --- & 0 \\
 & 2 & 1.00 & 2.445 & 2.231 & 0.0814 & 0.0550 & --- & 0 \\
 & 3 & 0.00 & 3.133 & 2.794 & 0.1479 & 0.0594 & --- & 0 \\
 & 4 & 0.00 & 2.873 & 2.263 & 0.1402 & 0.0583 & --- & 0 \\
\midrule
MBD & 0 & 0.00 & 2.781 & 16.041 & 0.0639 & 0.0489 & --- & 1 \\
 & 1 & 0.10 & 3.256 & 20.040 & 0.0854 & 0.0425 & --- & 1 \\
 & 2 & 0.00 & 2.784 & 16.968 & 0.1305 & 0.0541 & --- & 0 \\
 & 3 & 0.10 & 2.757 & 14.175 & 0.1028 & 0.0642 & --- & 0 \\
 & 4 & 0.00 & 3.472 & 24.538 & 0.1174 & 0.0633 & --- & 0 \\
\midrule
MDOC & 0 & 0.80 & 1.150 & 20.157 & 0.0572 & 0.0569 & --- & 0 \\
 & 1 & 0.80 & 0.961 & 19.923 & 0.0583 & 0.0588 & --- & 0 \\
 & 2 & 0.70 & 1.107 & 19.675 & 0.0563 & 0.0488 & --- & 1 \\
 & 3 & 0.50 & 1.017 & 19.152 & 0.0657 & 0.0587 & --- & 0 \\
 & 4 & 0.60 & 1.317 & 19.758 & 0.0759 & 0.0565 & --- & 0 \\
\midrule
MD-COAS & 0 & 0.40 & 1.155 & 63.722 & 0.0791 & 0.0598 & --- & 0 \\
 & 1 & 0.10 & 1.887 & 62.096 & 0.1800 & 0.0560 & --- & 0 \\
 & 2 & 0.20 & 1.187 & 151.566 & 0.5852 & 0.0549 & --- & 0 \\
 & 3 & 0.00 & 1.350 & 151.839 & 0.4747 & 0.0558 & --- & 0 \\
 & 4 & 0.00 & 1.666 & 69.675 & 0.6206 & 0.0493 & --- & 0 \\
\midrule
2GO & 0 & 1.00 & 0.432 & 22.597 & 0.0344 & 0.0562 & --- & 0 \\
 & 1 & 0.90 & 0.371 & 22.649 & 0.0632 & 0.0555 & --- & 0 \\
 & 2 & 0.90 & 0.524 & 26.855 & 0.0791 & 0.0499 & --- & 1 \\
 & 3 & 0.80 & 0.654 & 25.094 & 0.0817 & 0.0454 & --- & 1 \\
 & 4 & 0.80 & 1.061 & 18.731 & 0.0418 & 0.0607 & --- & 0 \\
\bottomrule
\end{tabular}
\caption{Per-seed outcomes for S2; metric definitions and units follow Table~\ref{tab:seed_s1}.}
\label{tab:seed_s2}
\end{table*}

\begin{table*}[p]
\centering
\footnotesize
\setlength{\tabcolsep}{6pt}
\renewcommand{\arraystretch}{1.08}
\begin{tabular}{llrrrrrrc}
\toprule
Method & Seed & pSSR & $pV$ & $pT$ & Goal error (m) & Min. proxy SDF (m) & Fall & Success \\
\midrule
MPPI & 0 & 0.00 & 0.667 & 1.966 & 0.1618 & 0.0090 & 0 & 1 \\
 & 1 & 0.00 & 0.750 & 1.163 & 0.1610 & 0.0074 & 0 & 1 \\
 & 2 & 0.00 & 1.333 & 1.136 & 0.1909 & -0.0105 & 0 & 0 \\
 & 3 & 0.00 & 1.250 & 1.476 & 0.4769 & -0.0395 & 0 & 0 \\
 & 4 & 0.00 & 1.750 & 1.244 & 0.2070 & -0.0615 & 0 & 0 \\
\midrule
MBD & 0 & 0.00 & 1.067 & 12.037 & 0.1056 & -0.0525 & 0 & 0 \\
 & 1 & 0.00 & 1.392 & 8.194 & 0.3219 & -0.0239 & 0 & 0 \\
 & 2 & 0.00 & 0.950 & 12.514 & 0.0489 & -0.0468 & 0 & 0 \\
 & 3 & 0.00 & 0.975 & 8.997 & 0.1487 & -0.0189 & 0 & 0 \\
 & 4 & 0.00 & 1.433 & 9.037 & 0.1347 & -0.0261 & 0 & 0 \\
\midrule
MDOC & 0 & 1.00 & 0.000 & 20.518 & 0.0662 & 0.0787 & 0 & 1 \\
 & 1 & 1.00 & 0.000 & 8.574 & 0.0970 & 0.1108 & 0 & 1 \\
 & 2 & 0.90 & 0.000 & 8.040 & 0.1016 & 0.0605 & 0 & 1 \\
 & 3 & 1.00 & 0.000 & 35.614 & 0.0838 & 0.0918 & 0 & 1 \\
 & 4 & 0.90 & 0.000 & 17.433 & 0.0855 & 0.1009 & 0 & 1 \\
\midrule
MD-COAS & 0 & 0.10 & 0.050 & 80.036 & 0.2087 & -0.0205 & 0 & 0 \\
 & 1 & 0.00 & 0.083 & 73.863 & 1.6697 & -0.1153 & 0 & 0 \\
 & 2 & 0.00 & 0.042 & 91.687 & 1.6822 & -0.0088 & 0 & 0 \\
 & 3 & 0.00 & 0.083 & 41.419 & 0.1430 & 0.0579 & 0 & 1 \\
 & 4 & 0.00 & 0.108 & 49.330 & 1.7215 & -0.0035 & 0 & 0 \\
\midrule
2GO & 0 & 1.00 & 0.000 & 27.406 & 0.0544 & 0.1097 & 0 & 1 \\
 & 1 & 0.80 & 0.000 & 8.907 & 0.0498 & 0.1165 & 0 & 1 \\
 & 2 & 1.00 & 0.000 & 9.664 & 0.1032 & 0.1315 & 0 & 1 \\
 & 3 & 1.00 & 0.000 & 10.239 & 0.0871 & 0.0815 & 0 & 1 \\
 & 4 & 1.00 & 0.000 & 14.187 & 0.1120 & 0.0482 & 0 & 1 \\
\bottomrule
\end{tabular}
\caption{Per-seed outcomes for Zone A. $pV$ is in \% and $pT$ in seconds. Goal error is measured from the executed final base position to the fixed task goal. Clearance uses the reduced body proxy at the executed base pose and commanded upper-body posture. Success is the binary event contributing to eSSR; all five trials are included.}
\label{tab:seed_za}
\end{table*}

\begin{table*}[p]
\centering
\footnotesize
\setlength{\tabcolsep}{6pt}
\renewcommand{\arraystretch}{1.08}
\begin{tabular}{llrrrrrrc}
\toprule
Method & Seed & pSSR & $pV$ & $pT$ & Goal error (m) & Min. proxy SDF (m) & Fall & Success \\
\midrule
MPPI & 0 & 0.00 & 3.333 & 2.087 & 0.2071 & -0.0433 & 0 & 0 \\
 & 1 & 0.00 & 3.917 & 1.140 & 0.4910 & -0.1241 & 0 & 0 \\
 & 2 & 0.00 & 3.917 & 1.177 & 0.4753 & -0.2029 & 0 & 0 \\
 & 3 & 0.00 & 2.333 & 1.218 & 0.7573 & -0.0170 & 0 & 0 \\
 & 4 & 0.00 & 3.333 & 1.208 & 0.6378 & -0.2059 & 0 & 0 \\
\midrule
MBD & 0 & 0.00 & 2.858 & 8.711 & 0.1561 & -0.1236 & 0 & 0 \\
 & 1 & 0.00 & 2.892 & 4.003 & 0.1242 & -0.1236 & 0 & 0 \\
 & 2 & 0.00 & 2.725 & 4.717 & 0.0820 & -0.0283 & 0 & 0 \\
 & 3 & 0.00 & 2.883 & 5.147 & 0.1363 & -0.0539 & 0 & 0 \\
 & 4 & 0.00 & 2.558 & 8.178 & 0.0982 & -0.2043 & 0 & 0 \\
\midrule
MDOC & 0 & 1.00 & 0.058 & 25.296 & 0.0940 & -0.0531 & 0 & 0 \\
 & 1 & 0.90 & 0.508 & 29.253 & 0.0888 & -0.0327 & 0 & 0 \\
 & 2 & 1.00 & 0.058 & 48.188 & 0.0345 & 0.0174 & 0 & 1 \\
 & 3 & 0.80 & 1.708 & 48.667 & 0.0365 & -0.0630 & 0 & 0 \\
 & 4 & 0.80 & 1.492 & 44.380 & 0.0999 & -0.0209 & 0 & 0 \\
\midrule
MD-COAS & 0 & 0.60 & 0.033 & 77.704 & 0.1315 & 0.0607 & 0 & 1 \\
 & 1 & 0.40 & 0.092 & 97.515 & 0.1068 & -0.0411 & 0 & 0 \\
 & 2 & 0.40 & 0.342 & 106.994 & 0.0608 & 0.0353 & 0 & 1 \\
 & 3 & 0.30 & 0.325 & 112.412 & 0.1633 & 0.0139 & 0 & 1 \\
 & 4 & 0.30 & 0.117 & 104.324 & 0.1977 & -0.0009 & 0 & 0 \\
\midrule
2GO & 0 & 0.50 & 1.700 & 27.204 & 0.0942 & -0.0230 & 0 & 0 \\
 & 1 & 0.70 & 1.258 & 38.993 & 0.0862 & -0.0047 & 0 & 0 \\
 & 2 & 0.40 & 2.075 & 58.904 & 0.0876 & 0.0282 & 0 & 1 \\
 & 3 & 0.40 & 2.050 & 8.013 & 0.0779 & -0.0457 & 0 & 0 \\
 & 4 & 0.40 & 2.000 & 52.358 & 0.0922 & -0.0226 & 0 & 0 \\
\bottomrule
\end{tabular}
\caption{Per-seed outcomes for Zone B; metric definitions and units follow Table~\ref{tab:seed_za}.}
\label{tab:seed_zb}
\end{table*}

\begin{table*}[p]
\centering
\footnotesize
\setlength{\tabcolsep}{6pt}
\renewcommand{\arraystretch}{1.08}
\begin{tabular}{llrrrrrrc}
\toprule
Method & Seed & pSSR & $pV$ & $pT$ & Goal error (m) & Min. proxy SDF (m) & Fall & Success \\
\midrule
MPPI & 0 & 0.00 & 0.000 & 1.999 & 0.1393 & -0.0753 & 0 & 0 \\
 & 1 & 0.00 & 1.250 & 1.150 & 0.2134 & -0.0792 & 0 & 0 \\
 & 2 & 0.00 & 4.083 & 1.170 & 0.0737 & -0.1283 & 0 & 0 \\
 & 3 & 0.00 & 1.417 & 1.235 & 0.2089 & -0.0359 & 0 & 0 \\
 & 4 & 0.00 & 1.667 & 1.240 & 0.4041 & -0.0181 & 0 & 0 \\
\midrule
MBD & 0 & 0.00 & 1.708 & 4.112 & 0.1093 & -0.0400 & 0 & 0 \\
 & 1 & 0.00 & 2.783 & 3.711 & 0.3239 & -0.0435 & 0 & 0 \\
 & 2 & 0.00 & 3.817 & 5.251 & 0.0461 & -0.0780 & 0 & 0 \\
 & 3 & 0.00 & 3.808 & 4.746 & 0.1495 & -0.0439 & 0 & 0 \\
 & 4 & 0.00 & 3.458 & 6.579 & 0.1354 & -0.1260 & 0 & 0 \\
\midrule
MDOC & 0 & 1.00 & 0.000 & 22.810 & 0.0908 & 0.0661 & 0 & 1 \\
 & 1 & 1.00 & 0.000 & 8.224 & 0.0790 & 0.0622 & 0 & 1 \\
 & 2 & 1.00 & 0.000 & 8.527 & 0.0376 & 0.1145 & 0 & 1 \\
 & 3 & 1.00 & 0.000 & 7.508 & 0.0876 & 0.0297 & 0 & 1 \\
 & 4 & 1.00 & 0.000 & 7.626 & 0.0809 & 0.0515 & 0 & 1 \\
\midrule
MD-COAS & 0 & 0.00 & 0.150 & 112.179 & 0.3606 & -0.0099 & 0 & 0 \\
 & 1 & 0.00 & 0.167 & 101.997 & 0.1524 & 0.0028 & 0 & 1 \\
 & 2 & 0.00 & 0.392 & 104.823 & 0.1498 & -0.0382 & 0 & 0 \\
 & 3 & 0.10 & 0.167 & 103.101 & 0.2418 & 0.0088 & 0 & 0 \\
 & 4 & 0.00 & 0.217 & 104.388 & 0.4024 & 0.0442 & 0 & 0 \\
\midrule
2GO & 0 & 1.00 & 1.025 & 30.184 & 0.0174 & 0.0338 & 0 & 1 \\
 & 1 & 1.00 & 1.433 & 9.710 & 0.0532 & 0.0659 & 0 & 1 \\
 & 2 & 1.00 & 1.350 & 10.363 & 0.0701 & 0.0184 & 0 & 1 \\
 & 3 & 1.00 & 1.275 & 10.401 & 0.0903 & 0.0093 & 0 & 1 \\
 & 4 & 1.00 & 1.075 & 10.470 & 0.0256 & 0.0752 & 0 & 1 \\
\bottomrule
\end{tabular}
\caption{Per-seed outcomes for Zone C; metric definitions and units follow Table~\ref{tab:seed_za}.}
\label{tab:seed_zc}
\end{table*}

\begin{table*}[p]
\centering
\footnotesize
\setlength{\tabcolsep}{6pt}
\renewcommand{\arraystretch}{1.08}
\begin{tabular}{llrrrrrrc}
\toprule
Method & Seed & pSSR & $pV$ & $pT$ & Goal error (m) & Min. proxy SDF (m) & Fall & Success \\
\midrule
MPPI & 0 & 0.00 & 0.250 & 1.997 & 0.1389 & -0.0261 & 0 & 0 \\
 & 1 & 0.00 & 0.500 & 1.230 & 0.2114 & -0.0297 & 0 & 0 \\
 & 2 & 0.00 & 0.917 & 1.117 & 0.0761 & -0.0681 & 0 & 0 \\
 & 3 & 0.00 & 0.000 & 1.124 & 0.2080 & -0.0632 & 0 & 0 \\
 & 4 & 0.00 & 0.583 & 1.280 & 0.4044 & -0.0273 & 0 & 0 \\
\midrule
MBD & 0 & 0.00 & 0.558 & 4.821 & 0.1049 & -0.0354 & 0 & 0 \\
 & 1 & 0.00 & 0.658 & 4.346 & 0.3199 & -0.0621 & 0 & 0 \\
 & 2 & 0.00 & 0.533 & 5.039 & 0.0474 & -0.0590 & 0 & 0 \\
 & 3 & 0.00 & 0.542 & 36.510 & 0.1486 & -0.0498 & 0 & 0 \\
 & 4 & 0.00 & 0.483 & 3.145 & 0.1347 & -0.0537 & 0 & 0 \\
\midrule
MDOC & 0 & 0.80 & 0.108 & 18.507 & 0.0803 & 0.0134 & 0 & 1 \\
 & 1 & 0.90 & 0.083 & 7.776 & 0.0630 & 0.0084 & 0 & 1 \\
 & 2 & 0.80 & 0.108 & 7.789 & 0.0535 & -0.0984 & 0 & 0 \\
 & 3 & 0.80 & 0.058 & 8.226 & 0.0307 & -0.0009 & 0 & 0 \\
 & 4 & 1.00 & 0.067 & 8.412 & 0.0501 & -0.0319 & 0 & 0 \\
\midrule
MD-COAS & 0 & 0.20 & 0.050 & 50.780 & 0.1339 & -0.0390 & 0 & 0 \\
 & 1 & 0.30 & 0.092 & 30.882 & 0.1772 & 0.0010 & 0 & 1 \\
 & 2 & 0.20 & 0.100 & 33.287 & 0.1886 & 0.0508 & 0 & 1 \\
 & 3 & 0.30 & 0.025 & 42.675 & 0.1226 & -0.0236 & 0 & 0 \\
 & 4 & 0.20 & 0.092 & 39.627 & 0.0942 & 0.0046 & 0 & 1 \\
\midrule
2GO & 0 & 0.90 & 0.000 & 25.237 & 0.0907 & -0.0256 & 0 & 0 \\
 & 1 & 1.00 & 0.000 & 7.716 & 0.1141 & -0.0235 & 0 & 0 \\
 & 2 & 1.00 & 0.000 & 8.049 & 0.0435 & 0.0451 & 0 & 1 \\
 & 3 & 1.00 & 0.000 & 8.276 & 0.1829 & 0.0432 & 0 & 1 \\
 & 4 & 1.00 & 0.000 & 7.446 & 0.0693 & 0.0450 & 0 & 1 \\
\bottomrule
\end{tabular}
\caption{Per-seed outcomes for Zone D; metric definitions and units follow Table~\ref{tab:seed_za}.}
\label{tab:seed_zd}
\end{table*}

\clearpage

\section{Baseline Details}
\label{app:baselines}

All planners use the same task dynamics and objective formulation for a given
environment. The stochastic methods are compared at matched nominal sample
and update budgets: 512 samples for the quadruped, 1024 for the humanoid, and
100 optimizer or reverse steps. Diffusion-family methods return ten candidates
per seed. Matching these counts does not imply identical
wall-clock cost because geometry construction, automatic differentiation, and
CFS calls add different overheads.

\begin{table}[H]
\centering
\scriptsize
\setlength{\tabcolsep}{3pt}
\begin{tabular}{@{}lp{6.3cm}@{}}
\toprule
Method & Constraint and sampling mechanism \\
\midrule
MPPI & Gaussian action perturbations with exponential cost weighting; no
geometry-aware reverse transport; 100 update iterations. \\
MBD & Model-based denoising with task cost and no explicit local constraint
geometry. \\
MDOC & MBD with CBF-style derivative correction using the configured step,
gain, and margin; no 2GO metric/tangent split or route/risk gate. \\
MD-COAS & Model-based diffusion with its own active-set correction, scheduled
margins, and configured cap on local constraints. \\
2GO & Reward-weighted reverse transport with task-local metric and tangent
maps and the task-specific correction variants in
Appendix~\ref{app:task_instantiations}. \\
\bottomrule
\end{tabular}
\caption{Baseline mechanisms used in the reported comparison.}
\label{tab:app_baseline_detail}
\end{table}

\newpage
\section{Hyperparameters and Scheduling}
\label{app:hparams}

\subsection{Risk, Dispersion, and Task Modulation}

For each candidate, write $V_{k,m}=V(\tilde\tau_{k,m},c)$ for the
nonnegative aggregate residual used in Eq.~\eqref{eq:risk_proxy_k}.
The empirical upper-tail risk at level $\alpha=0.9$ is
\begin{equation}
 \nu_k=\frac{1}{|\mathcal T_{\alpha,k}|}
 \sum_{m\in\mathcal T_{\alpha,k}}V_{k,m},
 \label{eq:app_cvar}
\end{equation}
where $\mathcal T_{\alpha,k}$ indexes the largest
$\lceil(1-\alpha)M\rceil$ candidate residuals. This aggregate controls
refinement; the task-specific planning violation $\mathrm{pV}$ is defined
separately in Appendix~\ref{app:metrics}.

Using the feasibility level $h_k$ and denoising gain $\kappa_k$ defined in
Section~\ref{subsec:mcsa_geometry_tau}, the numerical schedules are
\begin{align}
 \delta_k&=\max\!\left(\delta_{\min},
 \frac{c_\delta m_k}{1+c_\mu\mu_k}\right),\\
 \theta_k&=\theta_{\max}-(\theta_{\max}-\theta_{\min})h_k,\\
 \eta_k&=\eta_{\mathrm{base}}\left[\eta_{\min}
 +(\eta_{\max}-\eta_{\min})(1-h_k)^{p_\eta}\right].
 \label{eq:app_overlay}
\end{align}
Here $m_k$ is the scheduled clearance margin, distinct from the CFS
offset $b_i$. The main-text dispersion statistic $\chi_k$ uses the final
quarter of the horizon, restricted to the active window, with exponential smoothing of
coefficient $0.7$. The transport family follows
Eq.~\eqref{eq:transport_selection_main}.

The reported experiments share noise thresholds across transport families:
$\bar\chi_k^{\rho_k}=\theta_k$ and
$\bar\nu_k^{\rho_k}=0.05(1+4h_k)$ in Eq.~\eqref{eq:rho_aware_gate}.
Including the additional annealed control-noise scale $\sigma_u$, the
nominal clean-coordinate amplitude is
\begin{equation}
 \sigma_k^{\rho_k}=\sigma_{\max}(1-h_k)^{p_\sigma}
 +\sigma_u\frac{k-1}{K-1},\quad k=K,\ldots,1.
\end{equation}
This scale is also shared across families; the effective amplitude is
$\sigma_{k,\mathrm{eff}}^{\rho_k}=\gamma_k^{\rho_k}\sigma_k^{\rho_k}$.
A cyclic window of $W$ control rows,
advanced by stride $S$, localizes the geometric update. Its time indices
start at $((K-k)S)\bmod H$ and wrap at the horizon boundary, so the first
reverse step starts at control row zero.

\subsection{Numerical Settings and Selection Principles}

Geometric margins use the same distance units as the corresponding task
constraints; phase weights express the changing role of support and swing
feet. The dimensionless gains balance task progress, geometric correction,
and exploration. Values below are fixed manually and shared across seeds
unless stated otherwise.

\begin{table*}[t]
\centering
\scriptsize
\setlength{\tabcolsep}{6pt}
\begin{tabular}{lll|lll}
\toprule
Parameter & Symbol & Value & Parameter & Symbol & Value \\
\midrule
Reverse steps & $K$ & 100 & Sampling temperature & $T_{\mathrm{samp}}$ & 0.3 \\
Beta endpoints & $(\beta_0,\beta_1)$ & $(10^{-5},10^{-2})$ & Returned modes & $N$ & 10 \\
Seeds & --- & $0,1,2,3,4$ & Active-row cap & $K_A$ & 8 \\
CVaR tail level & $\alpha$ & 0.9 & Geometry regularizers & $\varrho,\epsilon_G,\epsilon_P$ & $10^{-6}$ \\
Base refinement gain & $\eta_{\mathrm{base}}$ & 0.08 & Route EMA coefficient & --- & 0.7 \\
Penalty reference & $\mu_{\mathrm{ref}}$ & 500 & $(\kappa_0,\kappa_\mu)$ & --- & $(1,0.01)$ \\
Margin-to-risk scale & $c_\delta$ & 0.4 & Penalty attenuation & $c_\mu$ & 0.02 \\
Minimum risk gate & $\delta_{\min}$ & $5\times10^{-4}$ & Route thresholds & $(\theta_{\max},\theta_{\min})$ & $(0.6,0.2)$ \\
Gain range & $(\eta_{\min},\eta_{\max})$ & $(0.5,1.0)$ & Powers & $(p_\sigma,p_\eta)$ & $(1,1)$ \\
\bottomrule
\end{tabular}
\caption{Common 2GO generative and geometry parameters.}
\label{tab:app_twogo_common}
\end{table*}

\begin{table*}[t]
\centering
\scriptsize
\setlength{\tabcolsep}{4.4pt}
\begin{tabular}{lcccccccc}
\toprule
Task & $M$ & $W/S$ & $\sigma_{\max}$ & $\sigma_u$ & $\lambda_g$ & $m_k$ & $b_i$ & Corrected candidates \\
\midrule
Stones S1/S2 & 512 & $8/4$ & 0.10 & 0.20 & 0.0 & 0.02 & --- & 0 \\
Zone A & 1024 & $6/3$ & 0.15 & 0.15 & 0.3 & 0.02 & 0.01 & 32 \\
Zone B & 1024 & $6/3$ & 0.15 & 0.35 & 0.1 & 0.05 & 0.04 & 32 \\
Zone C & 1024 & $6/3$ & 0.15 & 0.15 & 0.3 & 0.02 & 0.01 & 32 \\
Zone D & 1024 & $6/3$ & 0.15 & 0.15 & 0.3 & 0.07 & 0.01 & 32 \\
\bottomrule
\end{tabular}
\caption{Task-specific parameters for the experimental variants in
Appendix~\ref{app:task_instantiations}. Here $\lambda_g$ is the task-direction
gain, $m_k$ the scheduled clearance margin, and $b_i$ the candidate-CFS
offset. Post-transport retraction is disabled in these experiments.}
\label{tab:app_twogo_task_params}
\end{table*}

\begin{table*}[t]
\centering
\scriptsize
\setlength{\tabcolsep}{4.8pt}
\begin{tabular}{lp{5.7cm}p{8.4cm}}
\toprule
Task & Term & Coefficient \\
\midrule
Stepping stones & Smoothness & $0.92[0.165$ body step $+0.15$ yaw
$+0.165$ foot residual rate $+0.042$ phase rate$]$. \\
Stepping stones & Feasibility and shape & $1.12(40)$ foothold violation;
$1.08(14)$ support violation; $1.0[1.2$ geometric shape $+0.8$ residual
shape$]$; step-bound factors $8$ and $4$. \\
Stepping stones & Task terms & Terminal goal weight $120$, terminal-foot
multiplier $0.5$, plus progress, time guide, lateral/velocity/yaw/backward,
$0.08$ heading, $1.05$ support-center, and phase-ramped terminal-foot terms. \\
Humanoid & Task terms & $0.2w_g$ terminal goal, $1.0$ corridor centering,
$0.3$ each arm tuck, $0.3$ torso yaw, $6.0$ height, $0.05$ velocity,
$0.06$ control, $-0.3$ progress, and $5.0$ squeeze term. \\
Humanoid & Zone-specific goal & $w_g=5$ in Zone C and $w_g=1$ in Zones A,
B, and D. Goal deceleration is enabled only in Zone C. \\
Humanoid & Obstacle cost & $0$; obstacle feasibility enters through the
violation evaluator, local geometry, augmented-Lagrangian schedule, and
candidate correction. \\
\bottomrule
\end{tabular}
\caption{Explicit weights used by the task objectives.}
\label{tab:app_energy_weights}
\end{table*}

\begin{table*}[t]
\centering
\scriptsize
\setlength{\tabcolsep}{5pt}
\begin{tabular}{llccp{7.4cm}}
\toprule
Domain & Method & Samples & Steps & Additional parameters \\
\midrule
Quadruped & MPPI & 512 & 100 & Action noise $0.20$, temperature $1.0$. \\
Quadruped & MBD & 512 & 100 & $T_{\mathrm{samp}}=0.3$, extra action noise $0.02$, 10 returned modes. \\
Quadruped & MDOC & 512 & 100 & $T_{\mathrm{samp}}=0.3$, correction step $0.01$, gain $0.1$,
margin $0.01$, 10 modes. \\
Quadruped & MD-COAS & 512 & 100 & $T_{\mathrm{samp}}=0.3$, at most 12 constraints,
constraint margin $0.01$, scheduler margin $0.02$, 10 modes. \\
Humanoid & MPPI & 1024 & 100 & Action noise $0.15$ ($0.35$ in Zone B), temperature $1.0$. \\
Humanoid & MBD & 1024 & 100 & $T_{\mathrm{samp}}=0.3$, extra action noise $0.15$ ($0.35$ in Zone B), 10 modes. \\
Humanoid & MDOC & 1024 & 100 & $T_{\mathrm{samp}}=0.3$, correction step $0.02$, gain $1.2$,
margin $0.03$, 10 modes. \\
Humanoid & MD-COAS & 1024 & 100 & $T_{\mathrm{samp}}=0.3$, at most 8 constraints,
constraint margin $0.005$, 10 modes; scheduler margins are task-specific. \\
\bottomrule
\end{tabular}
\caption{Optimizer parameters used in the baseline comparison.}
\label{tab:app_baseline_params}
\end{table*}

\end{document}